\def\BibTeX{{\rm B\kern-.05em{\sc i\kern-.025em b}\kern-.08em
    T\kern-.1667em\lower.7ex\hbox{E}\kern-.125emX}}
\newlength\myindent
\newcommand{\Indstate}[1][1]{\State\hspace{-1.35em}\hspace{#1\myindent}}
\newtheorem{theorem}{Theorem}[section]
\newtheorem{definition}[theorem]{Definition}
\newtheorem{proposition}[theorem]{Proposition}
\newtheorem{lemma}[theorem]{Lemma}
\newcommand{\R}{\mathbb{R}}
\newcommand{\N}{\mathbb{N}}
\newcommand{\E}{\mathbb{E}}
\newcommand{\one}{\mathds{1}}
\renewcommand{\phi}{\varphi}
\newcommand{\x}{\widetilde{x}}
\newcommand{\eqd}{\overset{\text{d}}{=}}
\begin{document}
\title{Redistributor: Transforming Empirical~Data~Distributions}

\author{Pavol~Harar\,\orcidlink{0000-0001-5206-1794}, 
    Dennis~Elbr{\"a}chter,
    Monika D{\"o}rfler\,\orcidlink{0000-0001-6139-630X},
    Kory D. Johnson\,\orcidlink{0000-0002-7322-2451}

\thanks{
Pavol Harar is with University of Vienna, Austria and with Brno University of Technology, Czech Republic (email:~pavol.harar@univie.ac.at, web:~\href{https://pavol.harar.eu}{pavol.harar.eu}). Dennis~Elbr{\"a}chter is with ETH Z{\"u}rich, Switzerland. Monika D{\"o}rfler is with University of Vienna, Austria. Kory D. Johnson is currently independent (web:~\href{https://korydjohnson.com/}{korydjohnson.com}).
}}


\maketitle

\begin{abstract}
We present an~algorithm and package, Redistributor, which forces a~collection of scalar samples to follow a~desired distribution. When given independent and identically distributed samples of some random variable $S$ and the~continuous cumulative distribution function of some desired target $T$, it provably produces a~consistent estimator of the~transformation~$R$ which satisfies $R(S)=T$ in distribution. As the~distribution of $S$ or $T$ may be unknown, we also include algorithms for efficiently estimating these distributions from samples. This allows for various interesting use cases in image processing, where Redistributor serves as a~remarkably simple and easy-to-use tool that is capable of producing visually appealing results. For color correction it outperforms other model-based methods and excels in achieving photorealistic style transfer, surpassing deep learning methods in content preservation. The~package is implemented in Python and is optimized to efficiently handle large datasets, making it also suitable as a~preprocessing step in machine learning. The~source code is available at \href{https://github.com/paloha/redistributor}{https://github.com/paloha/redistributor}.
\end{abstract}

\noindent
\textbf{Keywords} Data preprocessing $\cdot$ Color correction $\cdot$ Density estimation $\cdot$ Histogram matching $\cdot$ Quantile transform

\section{Introduction}
\noindent
Redistributor provably transforms an~empirical distribution to a~known distribution or to match a~second empirical distribution, see~\Cref{fig:transformation}. Historically, similar methods have been used to transform data to be Gaussian. 
The idea of transforming data to a~known distribution has a~long history and is intuitively appealing. The~most famous method for achieving a~transformation to normality, the~Box-Cox transformation\,\cite{BoxCox64}, has been cited thousands of times in a~broad range of fields, cf.\,\cite{PeWiSh98,Os10}. The transformation is motivated by the fact that  normality is a~core assumption of many statistical tests.
The~central idea is a~simple translation of raw data to ranks or order statistics and then transforming these into corresponding normal scores, e.g., quantiles from the~standard normal distribution. If data is initially provided as ranks, this idea dates back to 1947\,\cite{Bartlett47}. Redistributor generalizes the idea to transformation into any distribution for which a~percent-point function is available or can be empirically estimated. 

\begin{figure}[!t]
    \captionsetup[subfigure]{labelformat=empty}
    \vspace{-3mm}  
    \centering
    \subfloat{{\label{fig:teaser-a}
        \includegraphics[height=0.08\paperheight]{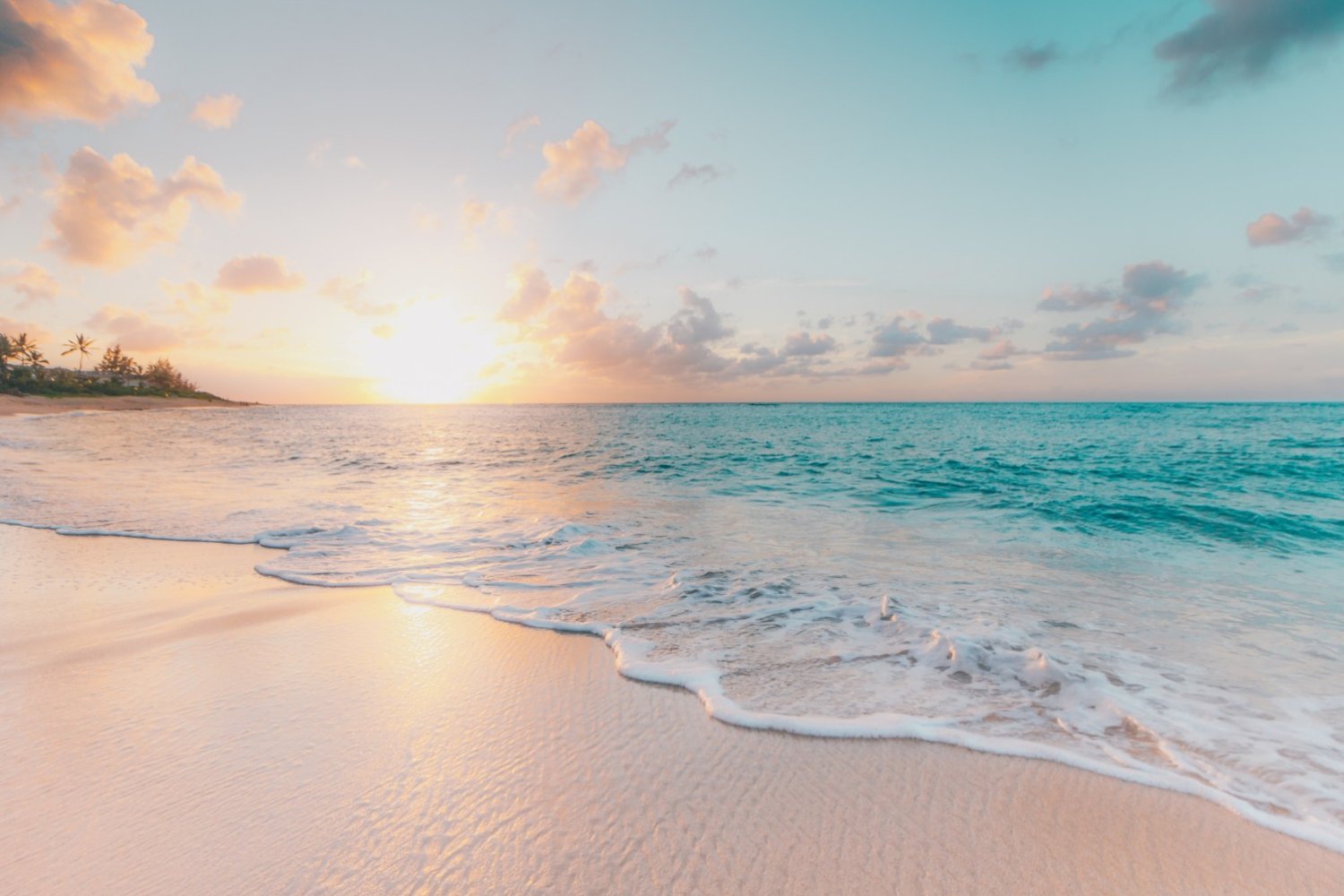} }}%
    \subfloat{{\label{fig:teaser-b}
        \includegraphics[height=0.08\paperheight]{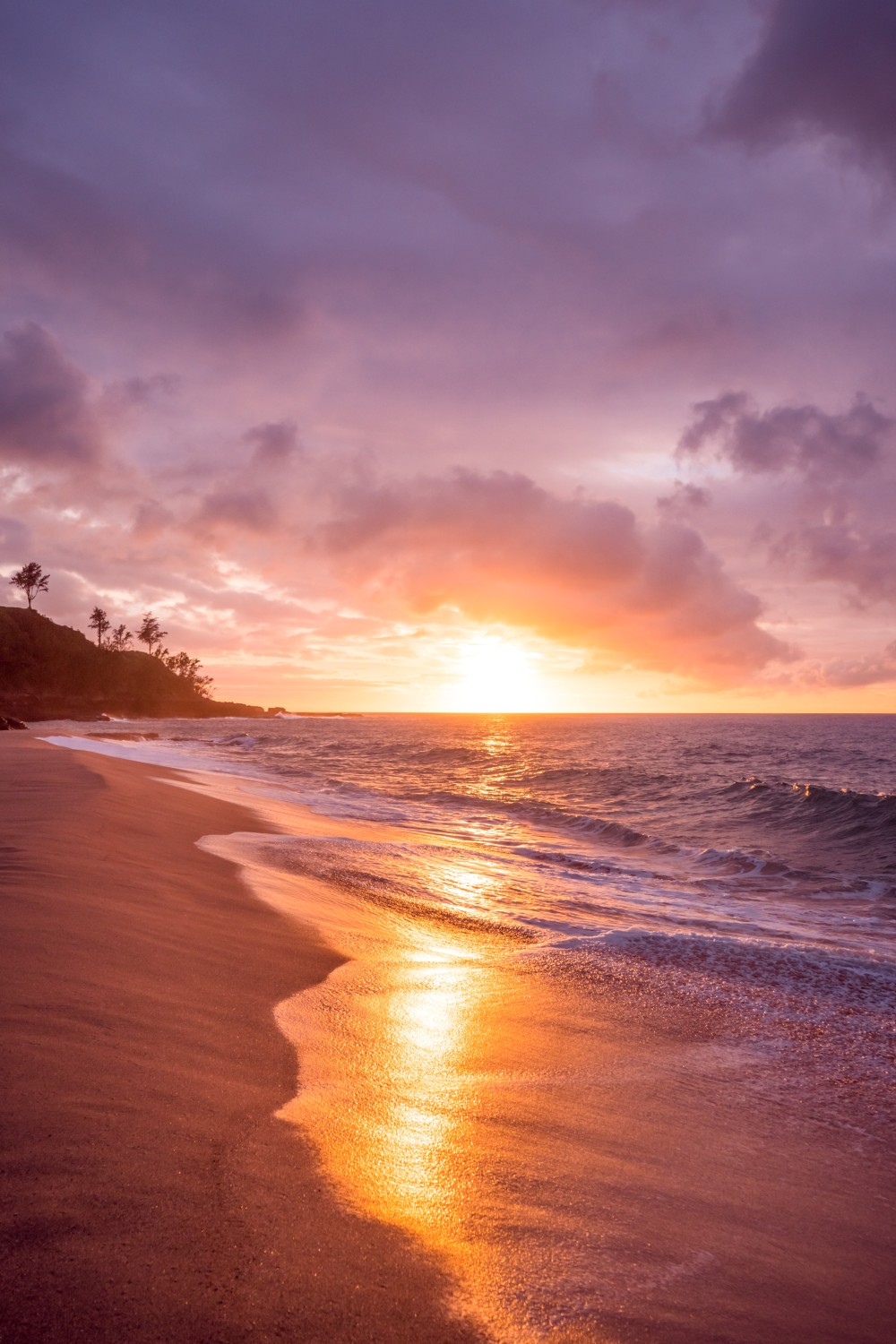} }}%
    \subfloat{{\label{fig:teaser-c}
        \includegraphics[height=0.08\paperheight]{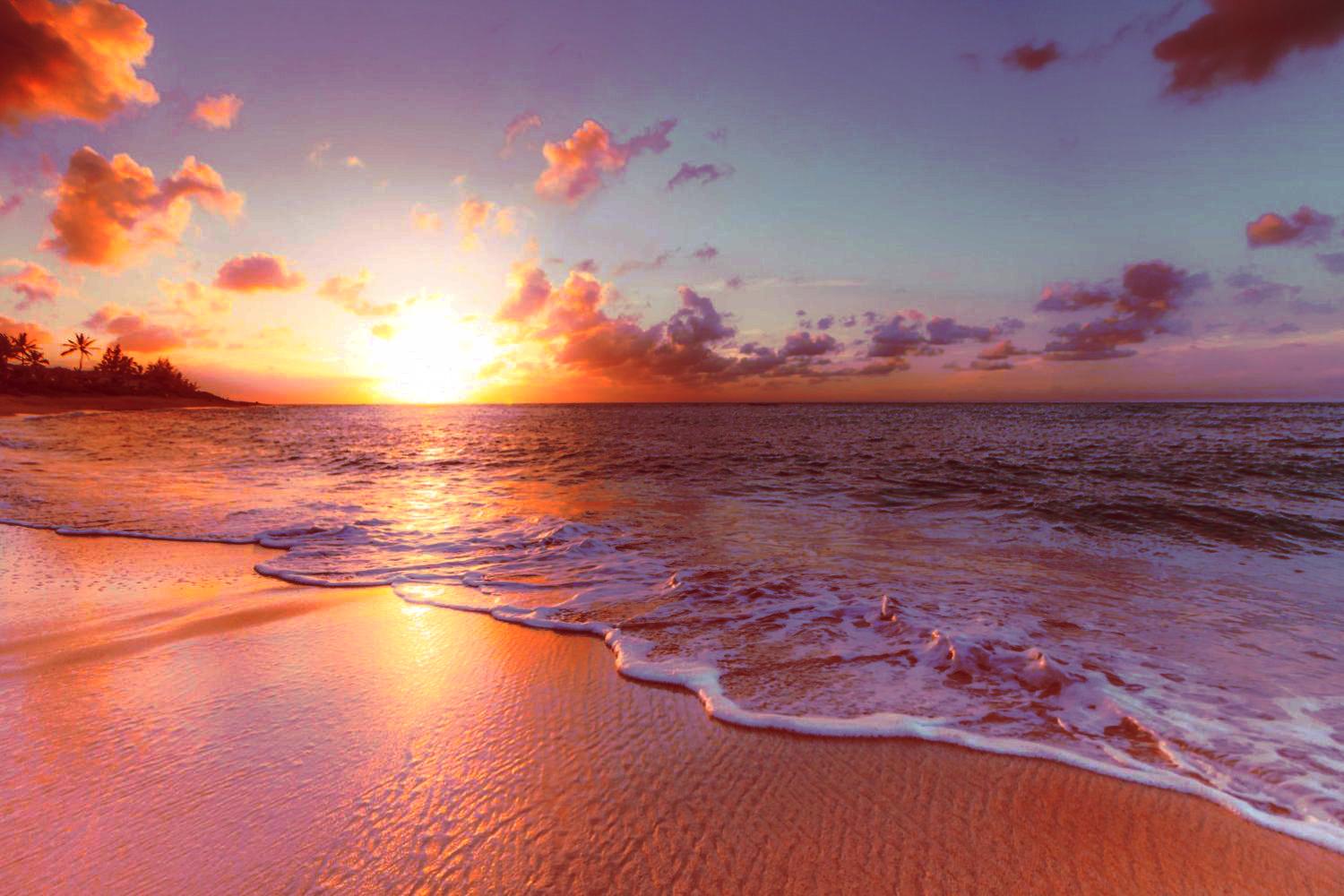} }}\vspace{-1mm}%
        
    \subfloat[\centering Source]{{\label{fig:teaser-d}
        \includegraphics[height=0.08\paperheight]{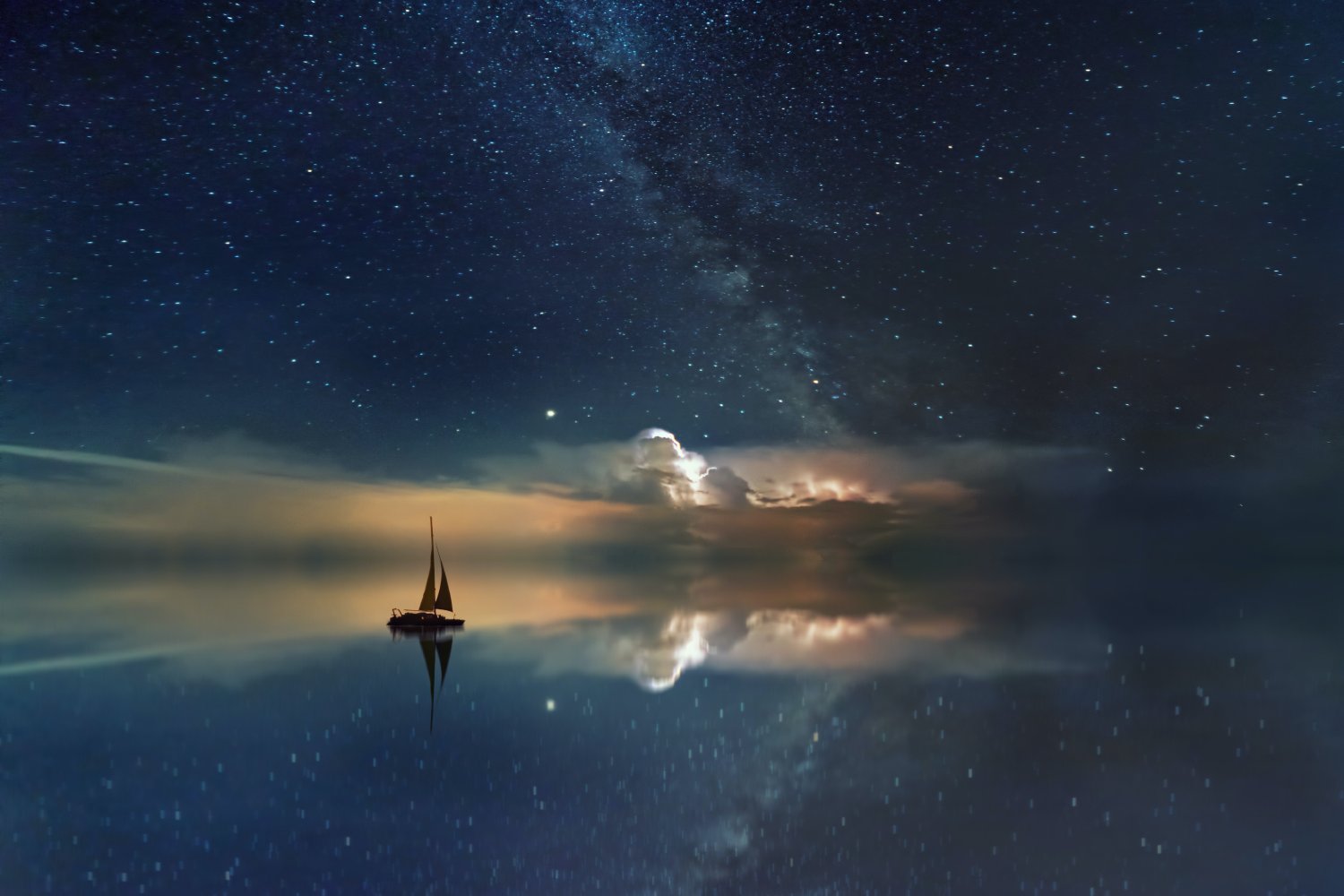} }}%
    \subfloat[\centering Target]{{\label{fig:teaser-e}
        \includegraphics[height=0.08\paperheight]{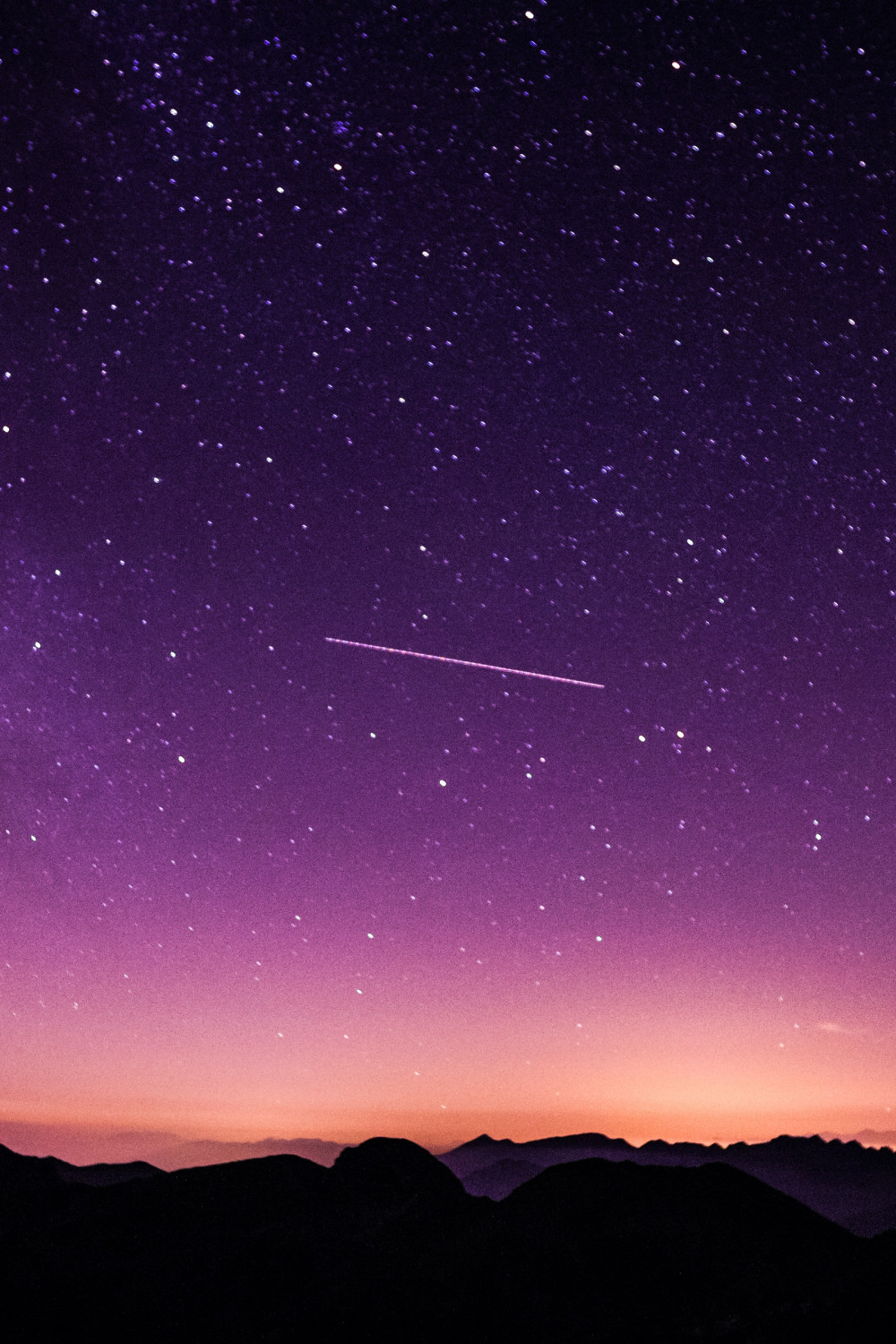} }}%
    \subfloat[\centering Output (our method)]{{\label{fig:teaser-f}
        \includegraphics[height=0.08\paperheight]{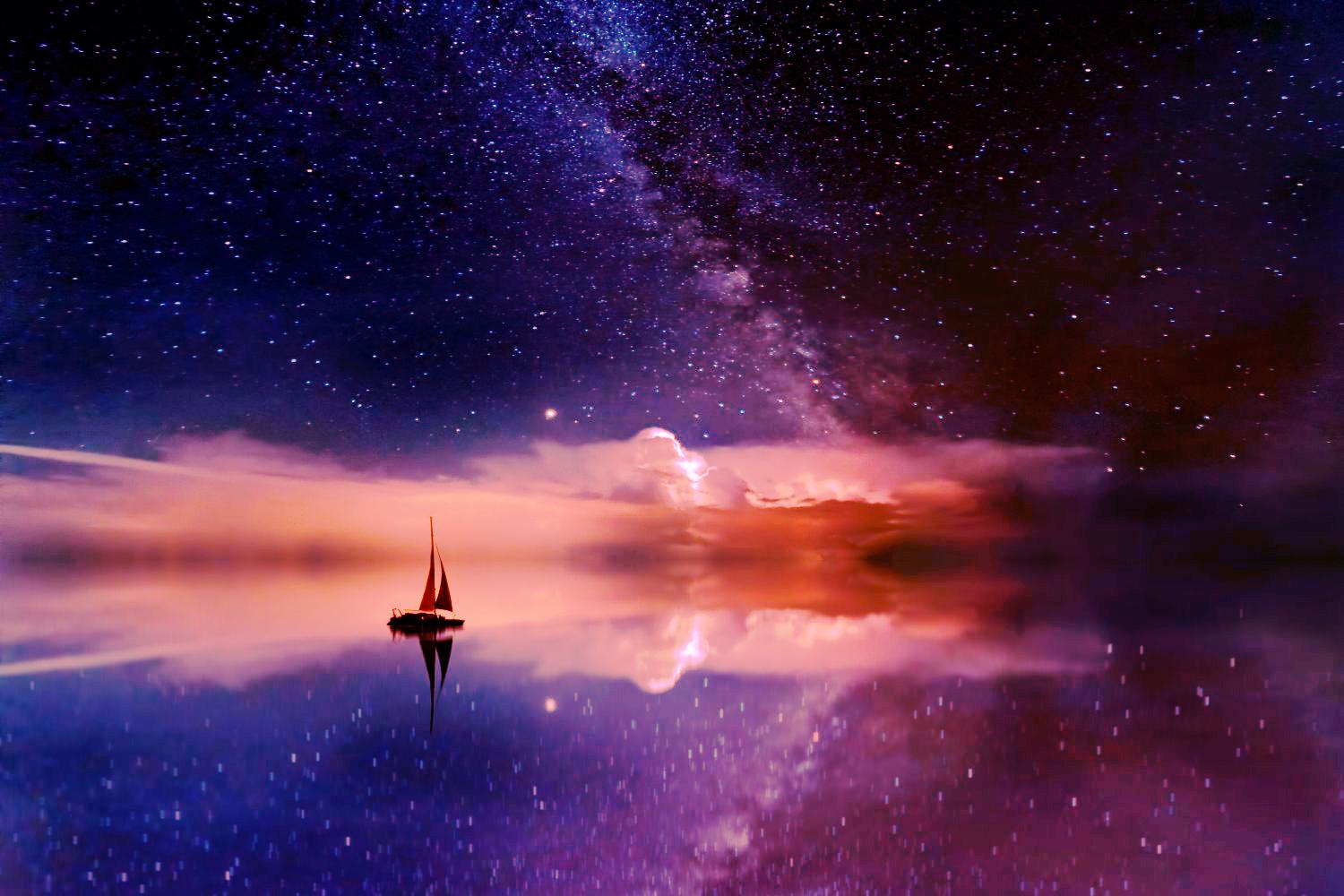} }}%
    \caption{Matching colors of a~reference image -- one of the~use cases of Redistributor from \Cref{sec:use_cases}.}%
    \label{fig:teaser}%
\end{figure}

In addition to the~package and its algorithmic details, we present theoretical results supporting the~use of the~method as well as a~broad array of examples in which transforming distributions is beneficial.
In the~context of image processing, we can correct color issues in photography, producing far better results than other automated procedures. We also consider further uses of matching the~color distribution of an~image to some reference image, e.g.\@ translating it into a~preferred color scheme for aesthetic purposes, creating photographic mosaics, and performing image data augmentation in color space. Lastly, we discuss the~use of Redistributor in signal processing and preprocessing within a~machine learning pipeline. 

\section{Related Work}
\noindent
Redistributor is implemented as a~Scikit-learn transformer that is convenient for processing large datasets and integration in machine learning pipelines. 
While the~Scikit-learn\,\cite{scikit-learn} resp.\@ Scikit-image\,\cite{scikit-image} functions \verb!quantile_transform! and \verb!match_histograms! have the~same mathematical basis, they are limited to specific use cases and do not produce a~reusable, estimated data distribution as our \verb!LearnedDistribution! and \verb!KernelDensity! classes do. They also do not feature a~robust treatment of boundary values, which may cause problems in real-world applications.

A~related R\,\cite{Rprog} package, bestNormalize\,\cite{Peterson21}, provides a~set of methods for normalizing a~distribution and functionality to pick the~best transformation from this set. We deviate from testing the~parameterization of such transformations or picking the~best one and focus instead on the~broader applicability and scalability of the~procedure. The~generality and~additional flexibility of Redistributor to estimate and map between two data distributions opens the~door for a~host of new use cases. 

One use case of Redistributor in the context of image processing is color mapping, i.e.\@ transforming the color scheme of a~given image to that of some reference image, usually for aesthetic purposes. For details on this particular application, we refer to the survey article\,\cite{ReinhardSurvey} and  references within. 
While color mapping constitutes an important use case which allows for visually appealing demonstrations, it is important to note that this is not the sole focus of this package. 
For instance, Redistributor can just as easily be applied in Fourier space to adjust the sharpness of an image to that of a~given reference image, as shown in \Cref{sec:alt_spaces}.

\begin{figure*}[!t]%
    \captionsetup[subfloat]{singlelinecheck=false}
    \centering
    \vspace{-3mm}
    \subfloat[\centering Source data sampled from $F_S$]{{
        \includegraphics[height=0.168\paperheight]{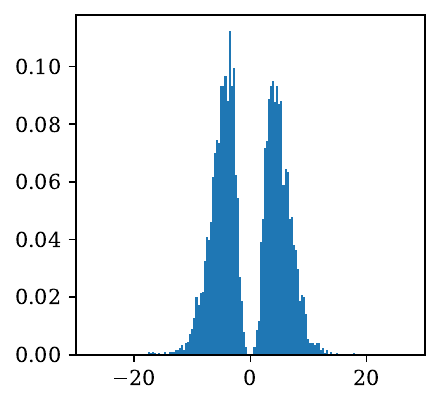} }}%
    \subfloat[\centering Transformation~$R$]{{
        \includegraphics[height=0.168\paperheight]{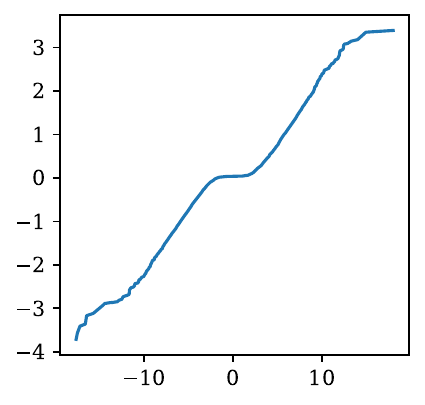} }}%
    \subfloat[\centering Source data transformed to $\hat{F}_T$]{{
        \includegraphics[height=0.168\paperheight]{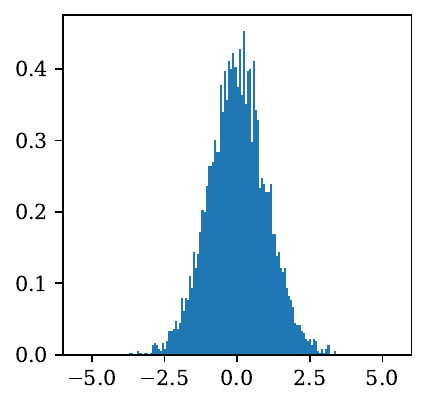} }}%
    \caption{Applying the~transformation~$R$ from $\hat{F}_S$ to $\hat{F}_T$, where $\hat{F}_S$ is an~estimate of a~Double Gamma distribution $F_S$ obtained by \Cref{alg:L} from 1000 iid samples, and $\hat{F}_T$ is an~estimate of a~Gaussian distribution $F_T$ obtained by \Cref{alg:L} from 1000 iid samples. Subfigures (a) and (c) display density histograms.
    } %
    \label{fig:transformation}%
\end{figure*}

As color mapping is also used to demonstrate the virtues of neural style transfer (NST) methods (see, e.g.,\,\cite{DPST}), it seems relevant to comment on the differences between NST and Redistributor when used for this particular task. By design, our method guarantees fidelity of content when changing style, whereas NST methods make a~trade-off between fidelity and style. In contrast to neural-network-based methods, our method is entirely explainable and therefore its use can easily be adapted based on task-specific knowledge, e.g.\@ by choosing the color space in which the image is represented. Moreover, Redistributor only needs the reference image, whereas NST requires a~corpus of images to be trained on, unless a~pretrained model is already available. Furthermore, when it comes to computational resources, Redistributor operates efficiently on a~standard CPU and can swiftly adjust images in a~matter of seconds. In contrast, NST typically requires the computational power of a~GPU, and the inference process can be significantly more resource-intensive than applying Redistributor. It is worth noting the remarkably lightweight nature of the Redistributor package, which weighs in at just a~few kilobytes. This stark contrast in package size becomes evident when compared to the bulkier requirements of running NST, for instance, PyTorch or TensorFlow, which often exceed 1 gigabyte in size. Of course, it should be noted that NST can also be used for style transfer when the underlying mathematical model is not known, e.g., when changing an image to match the painting style of a~specific artist instead of just adjusting colors. 

In~\Cref{subsec:comparison-correction}, we evaluate Redistributor on a~dataset for color correction to demonstrate that it compares favorably to existing model-based methods~\cite{Pitie,Hacohen,Faridul,Pouli,Reinhard_for_boxplot_comp}. In~\Cref{subsec:comparison-nst}, we compare Redistributor to an NST method for photorealistic style transfer, showcasing that our method is significantly better at keeping the structure of the images, while still providing a~convincing transfer of style.

\section{Our Contribution}
\noindent 
Redistributor constitutes a generalization of histogram matching, which has seen extensive use in color mapping, to a much broader field of applications. Moreover, we are providing an efficient and flexible implementation such that it can be conveniently used and adapted, based on the specifics needs of any given use case.
We establish a consistency estimate to provide a mathematically rigorous basis for the use of our method in the context of data preprocessing for machine learning tasks. 
The capabilities of Redistributor are showcased on a diverse, but not exhaustive, range of use cases. 
Lastly, we compare Redistributor to both data-based as well as model-based methods. While the primary aim of this work is to provide the community with a versatile and theoretically well-grounded tool for further exploration, we still demonstrate its ability to outperform various existing methods. In particular we show that it compares favorably to a deep learning approach, despite being fully explainable and computationally parsimonious. 

\section{Description of the~Method}\label{Desc}
\noindent
The idea behind Redistributor (see \Cref{alg:Redistributor}) is the~well-know observation that, given a~random variable $X$ with cumulative distribution function\footnote{The CDF of a~real-valued random variable $X$ is defined as
$F_X(x) = P(X \leq x), \textrm{ for all }x \in \mathbb{R}.$} (CDF) $F_X$, one has
\begin{align*}
    F_X(X)\sim \mathcal{U}[0,1]
\end{align*}
and, if $F_X$ is invertible,
\begin{align*}
    X\sim F_X^{-1}(\mathcal{U}[0,1]),
\end{align*}
where $\mathcal{U}[0,1]$ is the~uniform distribution on the~interval 
$[0,1]$. The~application of the~inverse cumulative distribution function, or percent-point function (PPF), to samples taken uniformly from $[0,1]$ in order to obtain samples from a~random variable $X$ is known as inverse transform sampling\,\cite{devroye2006nonuniform}.  

Another direct consequence of the~above is that, given a~source distribution $F_S$ and a~target distribution $F_T$ of random variables
 $S$ and $T$, respectively, the~transformation 
\begin{align}\label{eq:R_def}
    R:= F_T^{-1}\circ F_S
\end{align}
achieves 
\begin{align*}
    R(S)\eqd T,
\end{align*}
where $\eqd$ denotes equality in distribution.

While this is straightforward in principle, in practice one may not have access to the~CDFs or PPFs of $S$ and $T$, and needs to estimate them from samples. When the~number of samples in small, this can be accomplished nicely via kernel density estimation (see \Cref{alg:KDEW}). In order to produce a~continuous and invertible transform in the~case of large sample sizes, as may, for example, be encountered when using this algorithm as a~preprocessing step for machine learning, we use a~continuous piecewise linear empirical estimator (see \Cref{alg:L}) of a~cumulative distribution function $F$. Specifically, for samples $X_1,\dots,X_n$ it maps $X_k$ to $\tfrac{k}{n+1}$ and is linear in between, whereas the~behaviour outside of the~range of the~observed samples should be chosen depending on the~use case\footnote{Roughly speaking, extrapolation to inputs outside the~range of the~observed samples requires some assumption of the~probability distribution from which they are sampled. For more details on this see~\Cref{subsec:boundaries}}. The~implementation of these methods is described in more detail in the~next Section. 

In~\Cref{sec:math}, we prove consistency of our empirical estimator, i.e.\@ quantify how the~transformation obtained by replacing $F_S$ in \eqref{eq:R_def} with our empirical estimator, based on $n$ independent and identically distributed (iid) samples, approximates the~transformation using the~true CDF as $n$ increases. As demonstrated in~\Cref{sec:use_cases}, this transformation turns out to be quite useful when applied to images by treating them as a~collection of pixels, even though it is certainly not reasonable to assume that the~pixels of an~image are independent. We will discuss the~theoretic underpinning of this in \Cref{sec:multidim}.

\section{Algorithms}
\label{sec:algos}
\noindent
In order to use our Redistributor Python package to compute the~transformation~$R$, we have to instantiate the~Redistributor class\footnote{In \Cref{sec:algos}, we use terminology from object-oriented programming} by specifying the~source and target distributions. Each of the~distributions must be described by an~instance of a~class that implements at least methods for computing CDF and PPF. Assuming the~CDF (and its inverse, the~PPF) is continuous, the~transformation~$R$ and its inverse is then given by composition of those functions as described in \Cref{alg:Redistributor}.

\subsection{Estimating Distribution from Data}
\label{subsec:estimating}
\noindent
It may happen that we want to compute the~transformation~$R$ between distributions of which one or both are not known explicitly.
In that case, we could estimate the~missing distribution(s) from data, by computing the~empirical cumulative distribution function (eCDF). 
As the eCDF is not a~continuous function by definition, its inverse is not readily available. Therefore, in the \verb!LearnedDistribution! class, we use linear interpolation to obtain a~continuous and invertible estimate of the CDF. This class also provides the methods for PPF, probability density function (PDF), and a~function for random variable sampling (RVS). The~implementation is described in \Cref{alg:L}. Note that for large numbers of samples it can become quite inefficient to use all of the~samples, so a~value for $bins$ can be provided and, if the~number of samples provided is larger than that, they will be subsampled to a~set of size $bins$ before generating the~CDF and PPF. By default, the~value of $bins$ is set to the~min.\@ of $5000$ and the~number of samples. 

There exist other ways of estimating distributions which are arguably more suitable if only a~small amount of samples is available. One such technique is kernel density estimation (KDE)\,\cite{chen2017tutorial}. A~popular implementation of KDE exists as a~part of the~Scikit-learn Python package\,\cite{scikit-learn}, however, it does not implement CDF and PPF functions. For convenience, we have introduced a~\verb!KernelDensity! class that extends the~Scikit-learn KDE (with Gaussian kernel) by implementing the~methods for CDF, PPF, PDF, and RVS. This class described in \Cref{alg:KDEW} makes estimating the~source and/or target distributions of Redistributor effortless. 

\subsection{Handling Duplicate Values}
\label{subsec:duplicates}
\noindent
Redistributor assumes that both the~source and target distributions are continuous. This implies that a.s. (almost surely) no duplicate values will be present in a~random sample from these distributions. Observed data, however, may contain duplicate values that cannot be present in the~\Cref{alg:L} input array as they would render the~CDF non-invertible. This limitation would disqualify usage of \verb!LearnedDistribution! and by extension \verb!Redistributor!, on a~particularly interesting type of data -- images. For this reason, we provide a~function \verb!make_unique! that adds uniform noise of desired magnitude to duplicate values in an~array. This ensures that the~array has no repeating values a.s. so the~users can ``pretend'' the~data points come from a~continuous distribution even though the~array is for example quantized. The~function guarantees that the~minimum and maximum values are never changed, which is of technical importance in treating the~boundaries described in \Cref{subsec:boundaries}.

\begin{algorithm*}
\caption{Redistributor}\label{alg:Redistributor}
    \begin{algorithmic}
    \Indstate[0] \textbf{Instantiating}
        \Indstate[1] \textbf{Input:} source, target \Comment{Objects with \textit{ppf} and \textit{cdf} methods implemented}
        \Indstate[1] \textbf{Output:} $r$ (Redistributor instance) \Comment{Subclass of sklearn.base.TransformerMixin}
        \Indstate[1]
    
    \Indstate[0] $\bm{r}$\textbf{.transform}
        \Indstate[1] \textbf{Input:} $x$ (Float array) \Comment{Elements from source distribution}
        \Indstate[1] \textbf{Output:} $x'$ (Float array) \Comment{Elements of $x$ transformed into target distribution}
        \Indstate[1] {\footnotesize ~~1:} $x' \gets target.ppf(source.cdf(x))$
        \Indstate[1]

    \Indstate[0] $\bm{r}$\textbf{.inverse\_transform}
        \Indstate[1] \textbf{Input:} $x'$ (Float array) \Comment{Elements from target distribution}
        \Indstate[1] \textbf{Output:} $x$ (Float array) \Comment{Elements of $x'$ transformed into source distribution}
        \Indstate[1] {\footnotesize ~~1:} $x \gets source.ppf(target.cdf(x'))$
        \Indstate[1]
    \end{algorithmic}
\end{algorithm*}

\begin{algorithm*}
\caption{Learned Distribution}\label{alg:L}
    \begin{algorithmic}
        \vspace{4mm}
        \Indstate[0] \textbf{Instantiating the~class}
        
        \vspace{2mm}
        \Indstate[1] \textbf{Input:} 
            \Indstate[2] $x$ (Float array) \Comment{Samples from unknown distribution}
            \Indstate[2] $a$ (Float, optional) \Comment{Assumed left boundary of the~data distribution's support}
            \Indstate[2] $b$  (Float, optional) \Comment{Assumed right boundary of the~data distribution's support}
            \Indstate[2] $bins$  (Int, optional) \Comment{Scalar influencing the~precision of CDF approximation}
            \Indstate[2] ...  \Comment{For a~full signature consult the~source code}
        
        \vspace{2mm}
        \Indstate[1] \textbf{Outline of the~code:}
        \Indstate[2] {\footnotesize 1:} $lp \gets$ Choose the~lattice points based on $bins$ \Comment{Support of $l.ppf$}
        \Indstate[2] {\footnotesize 2:} $lv \gets$ Get the~lattice values from $x$ using a~partial-sort \Comment{Support of $l.cdf$}
        \Indstate[2] {\footnotesize 3:} Make $lv$ unique if desired \& necessary \Comment{So inversion is possible}
        \Indstate[2] {\footnotesize 5:} $l.cdf \gets$ Compute a~linear interpolant of $lp$ on $lv$
        \Indstate[2] {\footnotesize 4:} $l.ppf \gets$ Compute a~linear interpolant of $lv$ on $lp$
        \Indstate[2] {\footnotesize 6:} $l.pdf \gets$ Compute a~linear interpolant of the~derivative of $lv$ on $lp$
        \Indstate[2] {\footnotesize 7:} $l.rvs \gets$ $l.ppf$(random uniform sample from the~$ppf$ support)
        
        \vspace{2mm}
        \Indstate[1] \textbf{Output:} 
        \Indstate[2] $l$ (LearnedDistribution instance) \Comment{Object suitable as \Cref{alg:Redistributor} input}

        \vspace{2mm}
        \hspace{-11mm}\dotfill
        \vspace{2mm}
        \Indstate[0] \textbf{Available methods}
        \vspace{2mm}
    
    \Indstate[1] $\bm{l}$\textbf{.cdf} \Comment{Continuous piecewise linear approximation of CDF}
        \Indstate[2] \textbf{Input:} $q$ (Float array) \Comment{Quantile}
        \Indstate[2] \textbf{Output:} $p$ (Float array) \Comment{Lower tail probability}
        \Indstate[2] 
        
    \Indstate[1] $\bm{l}$\textbf{.ppf} \Comment{Inverse of $l.cdf$}
        \Indstate[2] \textbf{Input:} $p$ (Float array) \Comment{Lower tail probability}
        \Indstate[2] \textbf{Output:} $q$ (Float array) \Comment{Quantile}
        \Indstate[2] 
        
    \Indstate[1] $\bm{l}$\textbf{.pdf} \Comment{Numerical derivative of $l.cdf$}
        \Indstate[2] \textbf{Input:} $q$ (Float array) \Comment{Quantile}
        \Indstate[2] \textbf{Output:} $d$ (Float array) \Comment{Probability density}
        \Indstate[2] 
        
    \Indstate[1] $\bm{l}$\textbf{.rvs} \Comment{Random value generator}
        \Indstate[2] \textbf{Input:} $size$ (Int) \Comment{Desired number of elements to generate}
        \Indstate[2] \textbf{Output:} $s$ (Float array) \Comment{Random sample from the~estimated distribution}
        \Indstate[2] 
    \end{algorithmic}
\end{algorithm*}

\begin{algorithm*}
\caption{Kernel Density}\label{alg:KDEW}
    \begin{algorithmic}
    \vspace{4mm}
    \Indstate[0] \textbf{Instantiating the~class}
    
    \vspace{2mm}
    \Indstate[1] \textbf{Input:} 
        \Indstate[2] $x$ (Float array) \Comment{Elements from unknown distribution}
        \Indstate[2] $bandwidth$ (strictly positive Float, optional) \Comment{Standard deviation of the~Gaussian kernel}
        \Indstate[2] $cdf\_method$ (Str, 'precise' or 'fast', optional) \Comment{Default behaviour of $k.cdf$ function} 
        \Indstate[2] $grid\_density$  (Int, optional) \Comment{Scalar influencing the~precision of $k.cdf$/fast/ and $k.ppf$}
        \Indstate[2] ...  \Comment{For a~full signature consult the~source code}
        
        \vspace{2mm}
        \Indstate[1] \textbf{Outline of the~code:}
        \Indstate[2] {\footnotesize 1:} $kde \gets$ Define a~Gaussian mixture model using elements of $x$ and $bandwidth$
        \Indstate[2] {\footnotesize 2:} $k.cdf$/precise/ $\gets$ An~explicit evaluation of the~$cdf$ function under given $kde$
        \Indstate[2] {\footnotesize 3:} $a, b \gets$ Compute the~empirical support of $cdf$ under given $kde$
        \Indstate[2] {\footnotesize 4:} $lp \gets$ Choose the~lattice points based on $a, b$, and $grid\_density$
        \Indstate[2] {\footnotesize 5:} $lv \gets$ Get the~lattice values by evaluating $k.cdf$/precise/
        \Indstate[2] {\footnotesize 6:} $k.ppf$ $\gets$ Compute a~linear interpolant of $lv$ on $lp$
        \Indstate[2] {\footnotesize 7:} $k.cdf$/fast/ $\gets$ Compute a~linear interpolant of $lp$ on $lv$
        \Indstate[2] {\footnotesize 8:} $k.pdf$ $\gets$ An~exponential of the~log-likelihood of each sample under given $kde$

        \vspace{2mm}
        \Indstate[1] \textbf{Output:} 
        \Indstate[2] $k$ (KernelDensity instance) \Comment{Object suitable as \Cref{alg:Redistributor} input}

        \vspace{2mm}
        \hspace{-11mm}\dotfill
        \vspace{2mm}
        \Indstate[0] \textbf{Available methods}
        \vspace{2mm}
    \Indstate[1] $\bm{k}$\textbf{.cdf} \Comment{cumulative distribution function under given $kde$}
        \Indstate[2] \textbf{Input:} $q$ (Float array) \Comment{Quantile}
        \Indstate[2] \textbf{Output:} (Float array)
        \Indstate[3] \textbf{if} $cdf\_method =$ 'precise' \textbf{then} $p'$ \Comment{Explicit Gaussian mixture $cdf$}
        \Indstate[3] \textbf{if} $cdf\_method =$ 'fast' \textbf{then} $p''$  \Comment{Piecewise linear approximation of $k.cdf$/precise/}
        \Indstate[2]
        
    \Indstate[1] $\bm{k}$\textbf{.ppf} \Comment{Inverse of $k.cdf$/fast/}
        \Indstate[2] \textbf{Input:} $p$ (Float array) \Comment{Lower tail probability}
        \Indstate[2] \textbf{Output:} $q$ (Float array) \Comment{Quantile}
        \Indstate[2] 
        
    \Indstate[1] $\bm{k}$\textbf{.pdf} \Comment{Probability density function under the~given $kde$}
        \Indstate[2] \textbf{Input:} $q$ (Float array) \Comment{Quantile}
        \Indstate[2] \textbf{Output:} $d$ (Float array) \Comment{Probability density}
        \Indstate[2] 
        
    \Indstate[1] $\bm{k}$\textbf{.rvs} \Comment{Random value generator}
        \Indstate[2] \textbf{Input:} $size$ (Int) \Comment{Desired number of elements to generate}
        \Indstate[2] \textbf{Output:} $s$ (Float array) \Comment{Random sample from the~estimated distribution}
        \Indstate[2] 
    \end{algorithmic}
\end{algorithm*}

\begin{figure*}[!t]%
    \centering
    
    \subfloat[\centering Cumulative distribution function of \Cref{alg:L}]{{
        \hspace{-3mm}\includegraphics[width=0.5\textwidth]{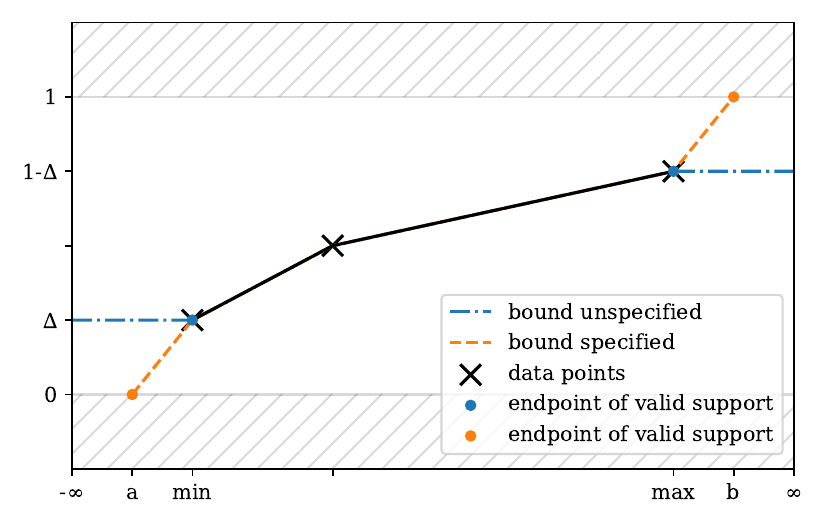} }}%
    \subfloat[\centering Percent-point function of \Cref{alg:L}]{{
        \includegraphics[width=0.5\textwidth]{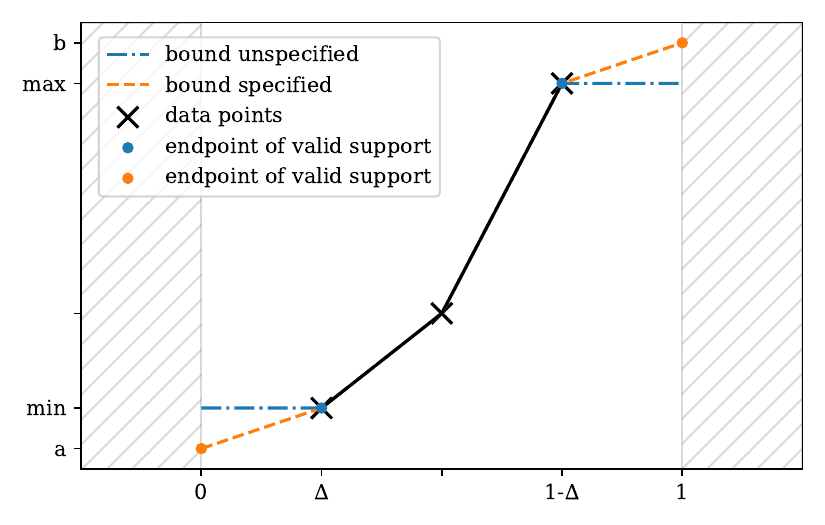} }}%
    \caption{Treating boundary values in \Cref{alg:L} -- The~simplest possible example using only 3~data points. Each subfigure shows where the~supported values of respective functions map based on whether the~boundaries are explicitly set or not. Endpoints denote the~``valid'' support, i.e.\@ the~interval where the~function is strictly increasing. E.g., if the~boundary $a$ is not set, all CDF values from the~interval [$-\infty$, $min$] map to the~constant $\Delta = 1 / (bins + 1)$. The~PPF maps values from the~interval [0, $\Delta$] to $min$, i.e.\@ the
minimum value of the~provided data. Analogously, the~same applies for the~boundary value $b$, which can be specified or not independently of $a$.}%
    \label{fig:boundaries}%
\end{figure*}

\subsection{Treating Boundary Values}
\label{subsec:boundaries}
\noindent
Estimating a~CDF outside of the~range of the~available samples is a~well-known problem as it amounts to extrapolating beyond the~observed data. In particular it is already quite difficult to estimate the~support of the~distribution from which the~samples are taken. We deal with this in the~following way.

Let $min$ and $max$ denote the~smallest resp.\@ largest value of the~$n$ samples that were used to generate the~Learned Distribution with \Cref{alg:L}, and write $\Delta = 1 / (bins + 1)$. Then, the~CDF always maps inputs in $[min,max]$ to values in $[\Delta,1-\Delta]$ and on these intervals the~CDF and PPF are inverses of each other. A~choice becomes necessary for inputs outside of these intervals. In case the~upper or lower boundary of the~support of the~distribution is known it can be specified as $a$ resp.\@ $b$, in which case the~CDF will take the~value $0$ at $a$ and $1$ at $b$ and be linear on the~intervals $[a,min]$ and $[max,b]$. In this case the~CDF and PPF will be inverses of each other on the~interval $[a,b]$ resp.\@ $[0,1]$. In this case the~CDF will not accept inputs smaller than $a$ or larger than $b$.

If one does not want to make a~choice about the~assumed support of the~distribution, the~boundaries can be left unspecified in which case the~CDF will simply map all inputs smaller than $min$ to $\Delta$ resp.\@ all inputs larger than $max$ to $1-\Delta$. Similarly, the~PPF will map all inputs in $[0,\Delta]$ to $min$ resp.\@ all inputs in $[1-\Delta,1]$ to $max$. This option is essentially saying that we do not expect to receive inputs to the~CDF, which are outside of those we have already seen, but in case we do happen to receive such inputs we do not want to extrapolate since this might lead the~Redistributor transformation to produce unreasonably large outputs, depending on the~target PPF. Doing so could cause issues, for example, in the~case of the~target being a~Gaussian distribution, where the~PPF tends to $\pm\infty$ very quickly as the~input gets close to $1$ resp.\@ $0$. It is possible to only specify either $a$ or $b$ and leave the~other unspecified. In case of \Cref{alg:KDEW}, the~situation is simpler due to the~fact that it uses the~Gaussian kernel to estimate the~density, which has a~natural tail behaviour. 

\subsection{Time \& Space Complexity}
\label{subsec:complexities}

\begin{figure*}[!t]%
    \centering
    \vspace{-3mm}
    \subfloat[\centering Timing~\Cref{alg:L} -- LearnedDistribution]{{
        \hspace{-3mm}\includegraphics[width=0.5\textwidth]{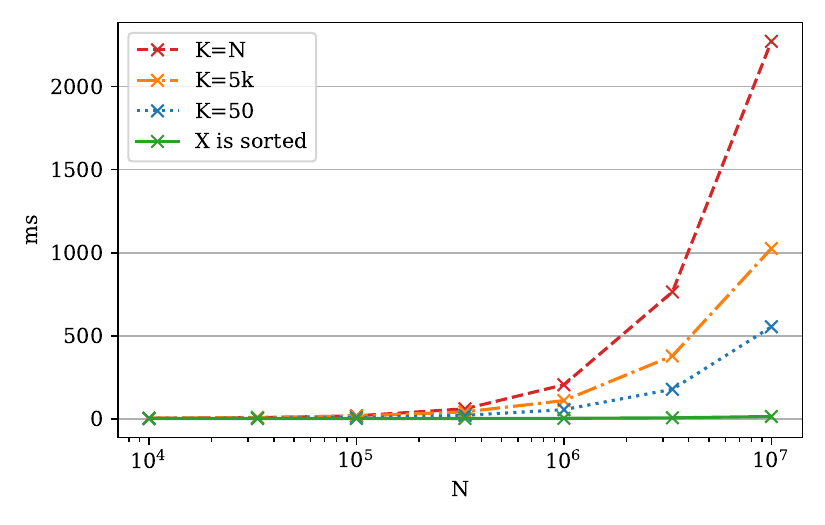} }}%
    \subfloat[\centering Timing~\Cref{alg:KDEW} -- KernelDensity]{{
        \includegraphics[width=0.5\textwidth]{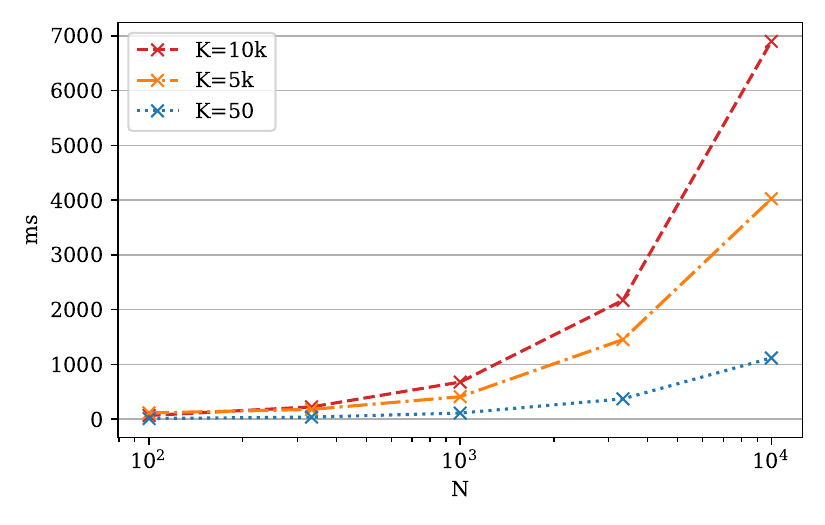} }}%
    \caption{Timing~\Cref{alg:L,alg:KDEW} on a consumer grade CPU -- Intel\,\textregistered\, Core\texttrademark~i7-8565U 1.80GHz. In both subfigures, N~denotes the~number of input data points. In (a), K~denotes the~number of bins and in (b), K~denotes the~grid density. Note that in comparison to \Cref{alg:KDEW}, \Cref{alg:L} can handle approx. 4~orders of magnitude more data points in the~same amount of time making it applicable in larger data-processing pipelines.} %
    \label{fig:timing}%
\end{figure*}

\noindent
The time complexity of~\Cref{alg:L} in the~worst case is $\mathcal{O}(N \log K)$, where $N$ is the~number of data points and $K$ is the~number of bins. The~bottleneck is partial sorting the~data with introspective sort\,\cite{musser1997introspective} to get the~lattice values. In the~best case, the~input array is already sorted and the~time complexity is $\mathcal{O}(N)$. In the~worst case $K = N$ and we need to do a~full sort instead of a~partial sort. In case the~lattice values contain repeated values, a~call to \verb!make_unique! will add $\mathcal{O}(K \log(K))$. Computation time as a~function of $N$ on a~specific hardware is shown in~\Cref{fig:timing}(a). 

The space complexity of \Cref{alg:L} is $\mathcal{O}(N)$. Auxiliary space complexity is $\mathcal{O}(N)$ if $keep\_x\_unchanged=False$, and $\mathcal{O}(K)$ otherwise. In practical terms, when we use a~1\,GB input array, the~peak memory will be approx. 2\,GB if we want to keep the~order of elements in the~input array unchanged. If changing the~order is not a~problem, the~peak memory will be approx. 1\,GB (not counting the~negligible overhead costs of the~interpreter, loaded modules, and K-sized arrays stored in the~interpolants), i.e.\@ almost no additional memory is used. Being aware of space complexity of this algorithm is important, as its speed allows us to process reasonably large amounts of data points for it to be relevant.

The time complexity of~\Cref{alg:KDEW} is $\mathcal{O}(NK)$, where $N$ is the~number of data points and $K$ is the~grid density. Most of the~time is spent on $N$ evaluations of the~Gaussian CDF on $K$-many points; however, evaluating the~$cdf$ on $K$~points is vectorized so the~speedup depends on particular hardware. Evaluating $ppf$ and $cdf$/fast/ has a~time complexity of linear interpolation. Computation time as a~function of $N$ on a~specific hardware is shown in~\Cref{fig:timing}(b). 

Space complexity of \Cref{alg:KDEW} is $\mathcal{O}(N)$. Auxiliary space complexity is $\mathcal{O}(K)$. That being said, the~speed of this algorithm is a~limiting factor on the~amount of input data points that can be reasonably processed. Broadly speaking, this algorithm can process approx.\,4 orders of magnitude fewer data points than \Cref{alg:L}. With this amount of data, the~space complexity is almost always irrelevant.

\section{Mathematical Considerations}
\label{sec:math} 
\noindent
In this section, we will establish a~notion of consistency of the~Redistributor transformation generated from an~empirical estimator for the~CDF of the~source distribution, i.e.\@ describe the~asymptotic behaviour of $F_T^{-1}\circ F^h_n$ when compared to $F_T^{-1}\circ F$, where $F^h_n$ is our empirical estimator generated from $n$ iid samples of $F$. 
This will be accomplished by viewing the~transformation as a~statistical functional on a~space containing cumulative distribution functions, and employing what is commonly referred to as the~functional delta method. We  derive a~statement of the~form
\begin{align*}
    \sqrt{n}(R_{g,x}(F_n)-R_{g,x}(F))\rightsquigarrow \mathcal{N}(0,\sigma_{g,F,x}),
\end{align*}
where $\mathcal{N}(0,\sigma_{g,F,x})$ denotes the~normal distribution with variance $\sigma_{g,F,x}$ and $R_{g,x}(F):=(g\circ F)(x)$ describes the~output of the~transformation for the~input $x$. Here $g$ is some function, which in application, would be the~PPF $F_T^{-1}$ of some target distribution. The~statement and the~required conditions on $g$, $F$, and $x$ will be made precise in Proposition~\ref{lem:final}. To do so we require a~certain amount of technical apparatus (see, e.g.,\,\cite{vaart_1998}), which we will introduce now.

\begin{definition}
Let $(X,\|\cdot\|_X)$, $(Y,\|\cdot\|_Y)$ be Banach spaces and $D\subseteq X$. A~map $\phi\colon D\to Y$ is called \textbf{Hadamard differentiable} at $F\in D$ if there exists a~continuous linear map {$\phi'(F)\colon X\to Y$} such that for every $h\in X$, $(h_n)_{n\in\N}\subseteq X$, $(t_n)_{n\in\N}\subseteq \R$ with $\lim_{n\to\infty}\|h_n-h\|_X=0$, $\lim_{n\to\infty}|t_n|= 0$, and $x+t_nh_n\in D$, it holds that
\begin{align*}
    \lim_{n\to\infty}\|\tfrac{\phi(F+t_n h_n)-\phi(F)}{t_n}-[\phi'(F)](h)\|_Y=0.
\end{align*}
\end{definition}

With this notion of differentiability we can make use of the~following theorem.

\begin{theorem}[\cite{vaart_1998}, Thm.20.8]\label{thm:func_delt}
Let $(X,\|\cdot\|_X)$, $(Y,\|\cdot\|_Y)$ be Banach spaces, $D\subseteq X$ and let $\varphi\colon D\to Y$ be Hadamard differentiable at $F\in X$. Let $G$ and $F_n$, $n\in\N$, be $D$-valued random variables\footnote{We do not explicitly discuss the~underlying probability spaces, but refer the~interested reader to\,\cite{vaart_1998}.} and $r_n\in\R$, $n\in\N$, with $\lim_{n\to\infty}r_n =\infty$. Assume that $r_n(F_n-F)\rightsquigarrow G$. Then we have
\begin{align*}
    r_n(\varphi(F_n)-\varphi(F))\rightsquigarrow[\varphi'(F)](G).
\end{align*}
If $\varphi'[F]$ is defined and continuous on the~entirety of $X$, we also have\footnote{For a~definition of the~probabilistic Landau notation $\mathrm{\textbf{o}}_P$ see\,\cite{vaart_1998}.}
\begin{align*}
    r_n(\varphi(F_n)-\varphi(F)) = [\varphi'[G]](r_n(F_n-F))+ \mathrm{\textbf{o}}_P(1).
\end{align*}

\end{theorem}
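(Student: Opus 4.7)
The plan is to reduce the weak-convergence statement to an almost-sure (pathwise) convergence statement by invoking a Skorohod-type representation, and then exploit the definition of Hadamard differentiability directly. Concretely, since $r_n(F_n-F)\rightsquigarrow G$ as $D$-valued random elements, I would first pass to a coupled probability space supporting $\widetilde F_n \stackrel{d}{=} F_n$ and $\widetilde G \stackrel{d}{=} G$ such that the coupled versions $H_n := r_n(\widetilde F_n - F)$ converge to $\widetilde G$ pointwise almost surely in $X$. In the non-separable Banach-space setting typical for statistical applications (e.g.\ $X=\ell^\infty(\mathcal{F})$), this step requires Dudley's outer-probability version of Skorohod's theorem rather than the classical one.

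With this representation in hand, for each $\omega$ in the full-measure set where $H_n(\omega)\to \widetilde G(\omega)$ in $X$, set $t_n := 1/r_n \to 0$ and $h_n := H_n(\omega) \to \widetilde G(\omega)$. By construction $F + t_n h_n = \widetilde F_n(\omega) \in D$, so the sequences $(t_n)$ and $(h_n)$ are exactly those allowed in the definition of Hadamard differentiability at $F$. Applying that definition gives
\begin{align*}
    \left\| \frac{\varphi(F+t_n H_n(\omega))-\varphi(F)}{t_n} - [\varphi'(F)](\widetilde G(\omega))\right\|_Y \xrightarrow{n\to\infty} 0,
\end{align*}
i.e.\ $r_n(\varphi(\widetilde F_n)-\varphi(F)) \to [\varphi'(F)](\widetilde G)$ almost surely. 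Almost sure convergence implies weak convergence, and weak convergence depends only on the marginal laws, so transferring back yields $r_n(\varphi(F_n)-\varphi(F))\rightsquigarrow [\varphi'(F)](G)$, which is the first claim.

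For the second, stronger assertion, note that the same pathwise identity already shows $r_n(\varphi(\widetilde F_n)-\varphi(F)) - [\varphi'(F)](H_n)\to 0$ almost surely, because the Hadamard limit is exactly $[\varphi'(F)](\widetilde G)$ and $[\varphi'(F)](H_n)\to[\varphi'(F)](\widetilde G)$ by continuity of $\varphi'(F)$ applied pathwise. Transferring this back to the original space converts a.s.\ convergence to the required $\mathbf{o}_P(1)$ statement, and linearity of $\varphi'(F)$ identifies $[\varphi'(F)](H_n)$ with $[\varphi'(F)](r_n(F_n-F))$ in distribution. The main obstacle is the measurability bookkeeping: in the intended applications $F_n$ is the empirical CDF viewed in $\ell^\infty(\mathbb{R})$, which is non-separable, so convergence in distribution, the Skorohod representation, and the $\mathbf{o}_P$ notation all have to be interpreted in the outer-probability sense of \cite{vaart_1998}. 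Once that framework is adopted, the proof is essentially the short pathwise argument above; the rest is machinery.
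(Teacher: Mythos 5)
Your argument is correct in substance, but it takes a genuinely different route from the one the paper relies on. The paper does not prove this statement at all: it imports it verbatim as Theorem~20.8 of \cite{vaart_1998}, and the proof there (which is also the natural one in the context of this paper, since the authors separately quote the extended continuous mapping theorem as Theorem~\ref{thm:contmap}) proceeds by setting $g_n(h):=r_n\bigl(\varphi(F+r_n^{-1}h)-\varphi(F)\bigr)$ on $D_n:=\{h\in X\colon F+r_n^{-1}h\in D\}$, observing that Hadamard differentiability at $F$ is \emph{exactly} the hypothesis $g_{n_k}(h_{n_k})\to[\varphi'(F)](h)$ required by that theorem with $g_\infty=\varphi'(F)$, and then applying it to $X_n=r_n(F_n-F)\rightsquigarrow G$. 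Your Skorohod--Dudley route replaces the continuous-mapping machinery with an almost-sure representation followed by the same pathwise use of the definition of Hadamard differentiability; this is a standard and legitimate alternative (van der Vaart and Wellner in fact use the representation theorem for the $\mathrm{\textbf{o}}_P(1)$ addendum), and your transfer back to the original variables is sound because convergence in outer probability to a constant depends only on the marginal laws. What the paper's route buys is that it needs no hypotheses beyond those stated; your route additionally requires the limit $G$ to be Borel measurable with separable (tight) law for Dudley's representation theorem to apply --- a condition not present in the statement, though harmless in this paper since the relevant limit $\mathbb{G}_F$ is a tight Gaussian process. What your route buys is transparency: the pathwise argument makes it visible that both assertions, including the $\mathrm{\textbf{o}}_P(1)$ expansion (where you correctly read the paper's $[\varphi'[G]]$ as the typo it is, namely $[\varphi'(F)]$ applied to $r_n(F_n-F)$, and correctly invoke the assumed global continuity of $\varphi'(F)$), are a single application of the definition of the derivative along the realized sequence $t_n=1/r_n$, $h_n=r_n(\widetilde F_n-F)$.
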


As we are primarily interested in cases where the~cumulative distribution function $F$ is continuous and we are generating a~continuous empirical cumulative distribution function, a~natural Banach space to consider would be $(C_b(\R,\R),\|\cdot\|)$, i.e.\@ the~space of bounded continuous functions from $\R$ to $\R$ equipped with the~uniform norm
\begin{align*}
    \|f\|_\infty:=\sup_{t\in\R}|f(t)|.
\end{align*}
However in order to derive the~desired convergence results, we will need to deal with usual empirical cumulative distribution functions which are discontinuous. Therefore we also introduce the~larger space $S=(S,\|\cdot\|_\infty)$, where 
\begin{align*}
S:=\{f\colon\R\to\R\colon \forall x\in\R\colon (\exists\lim_{t\nearrow x}f(t) \wedge \lim_{t\searrow x}f(t)=x\} 
\end{align*}
is the~set of functions which are right continuous and have left limits. 
We can now consider, for $g\in C((0,1),\R)$ and $x\in\R$, the~real-valued statistical functional 
\begin{align*}
    R_{g,x}\colon S_x
    &\to \R\\
    F&\mapsto g(F(x)),
\end{align*}
where
\begin{align*}
     S_x:=\{F\in S\colon F(x)\in(0,1)\}.
 \end{align*}
The functional $R_{g,x}$ maps a~cumulative distribution function $F\in S_x$ to the~value at $x$ of the~Redistributor transformation generated from $g$ and $F$. We consider the~inverse cumulative distribution function $g$ as a~function on $(0,1)$ and restrict the~domain of $R_{g,x}$ to $S_x$ in order to avoid the~possibility of $g(F(x))\in\{-\infty,\infty\}$. As we have $\mathbb{P}_{x\sim F}[F(x)\in(0,1)]=1$ anyway, this is not a~practically relevant restriction. Note that $F\in S_x$ if $x$ is in the~interior of the~suppport of the~density function corresponding to $F$. We can now show the~following
\begin{lemma}\label{lem:Rdiv}
Let $g\in C^1((0,1),\R)$, $x\in \R$, and $F\in S_x$. Then the~Hadamard derivative of $R_{g,x}$ at $F$ is continuous and given by $[R'_{g,x}(F)](h)=g'(F(x))h(x)$ for $h\in S$.
\end{lemma}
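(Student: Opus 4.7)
The plan is to recognize that $R_{g,x}$ is the composition of two simple operations: the point-evaluation functional $\mathrm{ev}_x\colon S\to\R$, $F\mapsto F(x)$, which is linear, and the scalar function $g$, which is $C^1$ on $(0,1)$. The strategy is therefore a direct computation mimicking the chain rule, followed by a short verification of continuity and linearity.

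First I would take arbitrary sequences $h_n\to h$ in $(S,\|\cdot\|_\infty)$ and $t_n\to 0$ in $\R$ such that $F+t_nh_n\in S_x$. I would write
\begin{equation*}
\frac{R_{g,x}(F+t_nh_n)-R_{g,x}(F)}{t_n} \;=\; \frac{g\bigl(F(x)+t_nh_n(x)\bigr)-g(F(x))}{t_n}.
\end{equation*}
Since $F(x)\in(0,1)$ and $|t_nh_n(x)|\le |t_n|\,\|h_n\|_\infty\to 0$, for all sufficiently large $n$ the point $F(x)+t_nh_n(x)$ lies in a compact subinterval of $(0,1)$ on which $g$ is $C^1$. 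The mean value theorem then gives a $\xi_n$ between $F(x)$ and $F(x)+t_nh_n(x)$ with
\begin{equation*}
\frac{g(F(x)+t_nh_n(x))-g(F(x))}{t_n} \;=\; g'(\xi_n)\,h_n(x).
\end{equation*}
Now $\xi_n\to F(x)$ and $g'$ is continuous, so $g'(\xi_n)\to g'(F(x))$; and $|h_n(x)-h(x)|\le\|h_n-h\|_\infty\to 0$. Passing to the limit yields $g'(F(x))h(x)$, which establishes the claimed formula.

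It then remains to check that the candidate derivative $L\colon h\mapsto g'(F(x))h(x)$ is a continuous linear map $S\to\R$. Linearity is immediate from the pointwise definition, and the bound $|L(h)|\le |g'(F(x))|\,\|h\|_\infty$ gives continuity (equivalently, boundedness) with operator norm $|g'(F(x))|<\infty$. This confirms that $R_{g,x}$ meets the definition of Hadamard differentiability at $F$, with the derivative explicitly given by the stated formula.

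There is no real obstacle here; the only delicate point is ensuring that $g$ and $g'$ are evaluated where they are defined and behave nicely, which is handled by the hypothesis $F\in S_x$ (so $F(x)\in(0,1)$) together with the smallness of the perturbation $t_nh_n(x)$ for large $n$. This will be used in the next step of the argument, where Lemma~\ref{lem:Rdiv} feeds into Theorem~\ref{thm:func_delt} with $g=F_T^{-1}$ to yield the announced asymptotic normality statement.
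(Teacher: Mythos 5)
Your proof is correct and follows essentially the same route as the paper's: both reduce the difference quotient to $\bigl(g(F(x)+t_nh_n(x))-g(F(x))\bigr)/t_n$, use $F(x)\in(0,1)$ to control where $g$ is evaluated, apply first-order calculus of $g$ at $F(x)$ (you via the mean value theorem and continuity of $g'$, the paper via a Taylor expansion with vanishing remainder), and finish with the same linearity and boundedness check $|L(h)|\le|g'(F(x))|\,\|h\|_\infty$. The MVT versus Taylor-remainder choice is an immaterial variation.
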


\begin{proof}
Let $h\in S$, $(h_n)_{n\in\N}\subseteq S$, $(t_n)_{n\in\N}\subseteq \R$ with $\lim_{n\to\infty}\|h_n-h\|_\infty=0$, $\lim_{n\to\infty}|t_n|= 0$.\\
Then
\begin{align*}
    \left(\tfrac{R_{g,x}(F+t_n h_n)-R_{g,x}(F)}{t_n}\right)(x)=\tfrac{g(F(x)+t_n h_n(x))-g(F(x))}{t_n}.
\end{align*}
The definition of $S_x$ ensures $F(x)\in(0,1)$ and therefore continuity of $g$ guarantees the~existence of a~neighborhood around $F(x)$ on which $g$ is finite. As a~direct consequence of the~assumptions we have $\lim_{n\to\infty}|t_n h_n(x)|=0$ and consequently there exists $n_0\in\N$ such that $F(x)+t_n h_n(x)$ is in this neighborhood for all $n\geq n_0$. We will assume in the~following that $n\geq n_0$.\\
By considering the~Taylor expansion of $g$ around $F(x)$ evaluated at $F(x)+t_n h_n(x)$ we get that
\begin{align*}
    g(F(x)+t_n h_n(x))&=g(F(x)) + g'(F(x))t_n h_n(x)\\
    &\quad+ r(F(x)+t_n h_n(x))t_n h_n(x) ,
\end{align*}
where 
\begin{align}\label{eq:remainder_term}
 \lim_{n\to\infty} r(F(x)+t_n h_n(x))=0.
\end{align}
Consequently we get 
\small
\begin{align*}
 &|\tfrac{g(F(x)+t_n h_n(x))-g(F(x))}{t_n} - g'(F(x))h(x)|\\
 &\quad= |g'(F(x))h_n(x)+r(F(x)+t_n h_n(x))h_n - g'(F(x))h(x)|\\
 &\quad\leq |g'(F(x))(h_n(x)-h(x))| +  |r(F(x)+t_n h_n(x))h_n(x)|.
\end{align*}
\normalsize
Due to \eqref{eq:remainder_term} and $\lim_{n\to\infty}\|h_n-h\|_\infty=0$ this implies
\begin{align*}
    \lim_{n\to\infty}\left|\tfrac{R_{g,x}(F+t_n h_n)-R_{g,x}(F)}{t_n}-g'(F(x))h(x)\right|=0,
\end{align*}
which proves that $[R'_{g,x}(F)](h)=g'(F(x))h(x)$.\\
Moreover, $R'_{g,x}(F)$ is continuous as, for all $h_1,h_2\in S$, it holds that 
\begin{align*}
    |R(h_1)-R(h_2)|&=|g'(F(x))h_1(x)-g'(F(x))h_2(x)|\\
    &\leq |g'(F(x))|\ \|h_1-h_2\|_\infty.
\end{align*}
\end{proof}

Note that each empirical cumulative distribution function $F_n\in S$ is derived from a~real valued random variable with cumulative distribution function $F\in C_b(\R,\R)$ by application of a~deterministic function $h^*_n\colon T^n\to S$ to $(X_1,\dots,X_n)$, where $X_i\sim F$, $i\in\{1,\dots,n\}$, iid and\footnote{For simplicity of exposition we define $h^*_n$ not on $\R^n$ but assume the~inputs to be ordered and distinct. As the~input is intended to be a~vector with iid entries the~order should not matter, so we may, w.l.o.g.\@, assume it to be ordered. Moreover, as we assume $F$ to be continuous, the~input will have distinct entries with probability $1$, so the~values that $h^*_n$ takes for inputs, where the~entries are not distinct, does not affect its convergence (in distribution).}
\begin{align*}
    T^n:=\{x\in\R^n\colon x_1<\dots<x_n\}.
\end{align*}
Specifically we have 
\begin{align*}
    F_n(t)=(h^*_n(X_{\pi(1)},\dots,X_{\pi(n)})(t):=\tfrac{1}{n}\sum_{k=1}^n \one_{[X_k\leq t]},
\end{align*}
where $(X_{\pi(1)},\dots,X_{\pi(n)})$ is an~increasing rearrangement of $(X_1,\dots,X_n)$.\\
Colloquially speaking, the~sequence $(h^*_n)_{n\in\N}$ describes the~deterministic method to generate the~usual empirical cumulative distribution function out of $n$ of real-valued samples from $F$. 
In this functional setting the~convergence of the~empirical cumulative distributions functions can be described by the~following Theorem from\,\cite{vaart_1998}. 
\begin{theorem}\label{thm:Donsker}
Let $F\in S$ be the~cumulative distribution function of a~real-valued random variable and $(F_n)_{n\in\N}$ the~corresponding sequence of empirical cumulative distribution functions. Then we have
\begin{align}\label{Donsker}
    \sqrt{n}(F_n-F)\rightsquigarrow \mathbb{G}_F,
\end{align}
i.e.\@ convergence in distribution\footnote{A~sequence $(X_n)_{n\in\N}$ of random variables in some metric space $(M,\rho)$ converges in distribution to $X\in M$, written as $X_n\rightsquigarrow X$, if it holds for every bounded, continuous functional $f\colon M\to \R$ that $\lim_{n\to\infty}|\E[f(X_n)]-\E[f(X)]|=0$. Note that some rather subtle issues of measurabilty arise here, when considering the~underlying probability spaces, for a~treatment of which we refer to\,\cite{vaart_1998}.} in the~space $(S,\|\cdot\|_\infty)$, where $\mathbb{G}_F$ is a~zero-mean Gaussian process with covariance
\begin{align*}
\mathrm{cov}[\mathbb{G}_F(s),\mathbb{G}_F(t)]=\min\{F(s),F(t)\}-F(s)F(t).    
\end{align*}
\end{theorem}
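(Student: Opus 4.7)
The plan is to reduce to the uniform case via the probability integral transform, then verify the two ingredients of weak convergence in $(S,\|\cdot\|_\infty)$: finite-dimensional convergence and asymptotic tightness, and combine them via Prokhorov's theorem. Assuming $F$ continuous, set $U_i:=F(X_i)$, so that $U_1,\dots,U_n$ are iid uniform on $[0,1]$ and the empirical CDF $F_n$ of $(X_i)$ factors as $F_n=G_n\circ F$, where $G_n$ is the empirical CDF of $(U_i)$. Consequently $\sqrt{n}(F_n-F)=\sqrt{n}(G_n-\mathrm{id})\circ F$, and since $F$ takes values in $[0,1]$ the sup norm is preserved under this composition. It therefore suffices to prove \eqref{Donsker} for the uniform distribution, with target covariance $\min\{s,t\}-st$ (the Brownian bridge on $[0,1]$); the case of a general $F\in S$ is recovered by an analogous argument using a generalized inverse, with the covariance formula unchanged.

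For the finite-dimensional marginals, fix $0\leq t_1<\dots<t_k\leq 1$ and apply the multivariate central limit theorem to the iid bounded random vectors $(\one_{[U_i\leq t_j]})_{j=1}^k$, which have mean $(t_j)_j$ and covariance $\min\{t_i,t_j\}-t_it_j$. This yields
\begin{align*}
\sqrt{n}\bigl(G_n(t_1)-t_1,\dots,G_n(t_k)-t_k\bigr)\rightsquigarrow\mathcal{N}(0,\Sigma),\qquad \Sigma_{ij}=\min\{t_i,t_j\}-t_it_j,
\end{align*}
matching the finite-dimensional marginals of the claimed limit.

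The main obstacle is asymptotic tightness in $(S,\|\cdot\|_\infty)$, which reduces to the equicontinuity estimate
\begin{align*}
\lim_{\delta\downarrow 0}\limsup_{n\to\infty}\mathbb{P}\!\left[\sup_{|s-t|<\delta}\bigl|\sqrt{n}\bigl((G_n(s)-s)-(G_n(t)-t)\bigr)\bigr|>\varepsilon\right]=0
\end{align*}
for every $\varepsilon>0$. I would establish this via bracketing: the class $\mathcal{F}=\{\one_{(-\infty,t]}\colon t\in\R\}$ admits $L^2$-brackets of width $\varepsilon$ using on the order of $1/\varepsilon^2$ elements (take the brackets $[\one_{(-\infty,t_i]},\one_{(-\infty,t_{i+1}]}]$ for a partition with $t_{i+1}-t_i\leq\varepsilon^2$), so the bracketing entropy integral $\int_0^1\sqrt{\log N_{[\,]}(\varepsilon,\mathcal{F},L^2)}\,d\varepsilon$ is finite. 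Ossiander's bracketing CLT (Thm.~19.5 of~\cite{vaart_1998}) then yields the required tightness; a more hands-on alternative is a dyadic chaining argument on grids of mesh $2^{-k}$ combined with Bernstein's inequality for centered binomial increments. Together with the finite-dimensional convergence, Prokhorov's theorem and uniqueness of finite-dimensional distributions identify the weak limit as the Gaussian process $\mathbb{G}_F$ with covariance $\min\{F(s),F(t)\}-F(s)F(t)$.
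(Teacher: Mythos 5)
Your outline is essentially the canonical proof of the classical Donsker theorem, but note that the paper does not prove this statement at all: Theorem~\ref{thm:Donsker} is imported verbatim from \cite{vaart_1998} (it is the empirical-process Donsker theorem), and the paper's own contribution only begins afterwards, with Lemma~\ref{lem:3_7} transferring the convergence from the step-function estimator $h^*_n$ to the piecewise-linear one $h_n$. So there is no in-paper argument to compare against; what you have written is, in effect, a sketch of the reference's proof. As such a sketch it is sound: the reduction $F_n=G_n\circ F$ is cleanest if you run it in the direction you mention only in passing, namely constructing $X_i=F^{-1}(U_i)$ via the generalized inverse, since then $\one_{[X_i\le t]}=\one_{[U_i\le F(t)]}$ holds surely for arbitrary $F\in S$ (the statement does not assume $F$ continuous, and the map $z\mapsto z\circ F$ is a continuous linear map carrying the uniform case into the general one, limit $B\circ F$ with the stated covariance). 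The finite-dimensional CLT step and the tightness step via bracketing entropy of $\{\one_{(-\infty,t]}\}$ (or dyadic chaining with Bernstein) are both correct and standard; the one point you share with the paper in glossing over is the measurability subtlety of weak convergence in the nonseparable space $(S,\|\cdot\|_\infty)$, which requires the outer-expectation framework of \cite{vaart_1998} and is acknowledged there in a footnote. In short: correct, but it reproves a cited black box rather than paralleling or replacing any argument the paper actually makes.
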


Our algorithm, however, generates a~continuous empirical distribution function\footnote{With some variations at the~boundaries depending on the~use case.} out of a~vector of samples $(X_1,\dots,X_n)$.  We therefore need to consider what conditions on a~sequence $h=(h_n)_{n\in\N}$ of functions $h_n\colon T^n\to S$ are required to ensure that \eqref{Donsker} remains valid. More precisely, we write
\begin{align*}
 F^h_n:=h_n(X_{\pi(1)},\dots,X_{\pi(n)})\in S   
\end{align*}
for the~function-valued random variable derived by the~deterministic function $h_n$ from the~real-valued cumulative distribution function $F\in C_b(\R,\R)$. This is done  via taking $X_i\sim F$, $i\in\{1,\dots,n\}$, iid, where, again, $(X_{\pi(1)},\dots,X_{\pi(n)})$ is an~increasing rearrangement of $(X_1,\dots,X_n)$. We them need to show that with this construction, we still obtain
\begin{align*}
    \sqrt{n}(F^h_n-F)\rightsquigarrow \mathbb{G}_F, 
\end{align*}
To accomplish this we will make use of a~continuous mapping theorem for Banach-space-valued random variables.
\begin{theorem}[\cite{vaart_1998}, Thm.18.11
]\label{thm:contmap}
Let $(D,\|\cdot\|_D)$, $(E,\|\cdot\|_E)$ be Banach spaces.
Assume that $D_n\subseteq D$ and $g_n\colon D_n\to E$, $n\in\N\cup\{\infty\}$, satisfy for every sequence $(x_n)_{n\in\N}$ with $x_n\in D_n$, that for every subsequence $(x_{n_k})_{k\in\N}\subseteq\N$ we have
\begin{align*}
    &\Big(\exists s\in D_{\infty} \colon \lim_{k\to\infty}\|x_{n_k}-s\|_D=0\Big)\\
    &\quad\implies \lim_{k\to\infty}\|g_{n_k}(x_{n_k})-g_\infty(s))\|_E =0.
\end{align*}
Then, for any sequence of random variables $X_n\in D_n$, $n\in\N$, and random variable $X\in D_\infty$ with $g_\infty(X)\in E$, it holds that
\begin{align*}
    X_n\rightsquigarrow X \implies g_n(X_n)\rightsquigarrow g_\infty(X).
\end{align*}
\end{theorem}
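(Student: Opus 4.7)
The approach I would take is a Skorokhod-type representation argument, adapted to handle the sequence of (possibly discontinuous) maps $g_n$. The classical continuous mapping theorem for a single continuous map $g$ follows immediately from the Portmanteau characterization of weak convergence: if $g$ is continuous, then for any bounded continuous $f\colon E\to\R$, the composition $f\circ g$ is bounded and continuous, so $\mathbb{E}[f(g(X_n))]\to\mathbb{E}[f(g(X))]$. Here the challenge is that $g_n$ varies with $n$ and the hypothesis is only a joint continuity condition along convergent sequences with limit in $D_\infty$.

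First I would invoke a Skorokhod-type representation theorem (in the outer-expectation framework of \cite{vaart_1998}) to obtain, on a common probability space, random variables $\widetilde{X}_n$ and $\widetilde{X}$ with $\widetilde{X}_n\eqd X_n$ for each $n$, $\widetilde{X}\eqd X$, and $\|\widetilde{X}_n-\widetilde{X}\|_D\to 0$ almost surely. Then I would fix a typical $\omega$ for which this convergence holds and for which $\widetilde{X}(\omega)\in D_\infty$ (the latter has full measure since $X\in D_\infty$ almost surely). Along such an $\omega$, the deterministic sequence $x_n:=\widetilde{X}_n(\omega)$ converges to $s:=\widetilde{X}(\omega)\in D_\infty$, and every subsequence does so as well, so the hypothesis yields $g_n(\widetilde{X}_n(\omega))\to g_\infty(\widetilde{X}(\omega))$ in $E$. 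Thus $g_n(\widetilde{X}_n)\to g_\infty(\widetilde{X})$ almost surely in $E$.

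To transfer this back to weak convergence, I would use the bounded convergence theorem: for every bounded continuous $f\colon E\to\R$ we have $\mathbb{E}[f(g_n(\widetilde{X}_n))]\to\mathbb{E}[f(g_\infty(\widetilde{X}))]$. Since $g_n(X_n)\eqd g_n(\widetilde{X}_n)$ and $g_\infty(X)\eqd g_\infty(\widetilde{X})$, this implies $\mathbb{E}[f(g_n(X_n))]\to\mathbb{E}[f(g_\infty(X))]$ for every such $f$, which by the Portmanteau theorem is equivalent to $g_n(X_n)\rightsquigarrow g_\infty(X)$.

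The hard part will be the technicalities of Skorokhod representation and measurability in the possibly non-separable Banach-space setting. Van der Vaart's framework relies on outer measures and asymptotic tightness precisely to sidestep the fact that limits of Banach-space-valued random variables need not be Borel measurable in the usual sense. One must carefully justify that the pathwise convergence $g_n(\widetilde{X}_n)\to g_\infty(\widetilde{X})$ can be upgraded to a statement about genuine (rather than outer) expectations of bounded continuous functionals, and that the exceptional null sets arising from the Skorokhod construction do not interfere with the subsequence hypothesis. Apart from these measure-theoretic caveats, the subsequence structure in the hypothesis matches exactly what a Skorokhod argument delivers, so once the representation theorem is in place the remainder of the proof is essentially bookkeeping.
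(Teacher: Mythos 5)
The paper does not actually prove this statement: it is imported verbatim (with minor reformulation) from van der Vaart's \emph{Asymptotic Statistics}, Theorem 18.11, so there is no in-paper proof to compare against. Your Skorokhod-representation argument is a legitimate and essentially correct route, but it is not the canonical one. The standard proof (van der Vaart and Wellner, Thm.\ 1.11.1) is a direct portmanteau argument: for a closed set $F\subseteq E$ one considers $E_k:=\bigcup_{n\geq k}\{x\in D_n\colon g_n(x)\in F\}$, uses the subsequence hypothesis to show $\bigcap_{k}\overline{E_k}\cap D_\infty\subseteq g_\infty^{-1}(F)$, and then bounds $\limsup_n P^*(g_n(X_n)\in F)\leq P(X\in\overline{E_k})$ for each $k$ before letting $k\to\infty$. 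The trade-off is exactly the one you gesture at in your last paragraph: the almost-sure representation theorem in the outer-expectation framework requires the limit $X$ to be a tight (separable) Borel measurable random element, an assumption not present in the statement, whereas the direct closed-set argument needs no such hypothesis. So your proof establishes a slightly weaker version of the theorem; in the paper's application this is harmless, since the limit there is the tight Gaussian process $\mathbb{G}_F$ furnished by Donsker's theorem. If you want the theorem exactly as stated, you should either add tightness of $X$ as a hypothesis or switch to the direct portmanteau argument. One further small point: your pointwise step should note explicitly that the hypothesis applied to the trivial subsequence $n_k=k$ is what yields $g_n(\widetilde{X}_n(\omega))\to g_\infty(\widetilde{X}(\omega))$ along the full sequence; as written you invoke ``every subsequence does so as well,'' which is true but not the direction you need.
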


We obtain the~following Lemma, which essentially states that, if a~deterministic construction method $(h_n)_{n\in\N}$ is sufficiently close to the~one of the~usual empirical cumulative distribution function, i.e.\@ $(h^*_n)_{n\in\N}$, then the~corresponding sequences of function-valued random variables $(F^h_n)_{n\in\N}$ and $(F_n)_{n\in\N}$ enjoy the~same kind of convergence.  

\begin{lemma}\label{lem:3_7}
    Let $F\in C_b(\R,\R)$ be the~cumulative distribution function of a~real-valued random variable and $h_n\colon$~{$T^n\to$~$S$,~$n\in\N$}, a~sequence of functions. Assume that there exist constants $C>0$ and $c>\tfrac{1}{2}$ such that for all $n\in\N$, $x\in\R^n$ it holds that
    \begin{align}\label{eq:conv_req}
       \|h_n(x)-h^*_n(x)\|_\infty \leq Cn^{-c}.
    \end{align}
    Then we have
    \begin{align*}
        \sqrt{n}(F^h_n-F)\rightsquigarrow \mathbb{G}_F,
    \end{align*}
    where $\mathbb{G}_F$ is a~zero-mean Gaussian process with covariance
    \begin{align*}
        \text{cov}[\mathbb{G}_F(s),\mathbb{G}_F(t)]=\min\{F(s),F(t)\}-F(s)F(t).    
    \end{align*}
\end{lemma}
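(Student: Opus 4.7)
The strategy is to reduce the claim to \Cref{thm:Donsker} (Donsker) by showing that $F^h_n$ is uniformly close to the usual empirical CDF $F_n$ at the rate $\sqrt{n}$, so that the two sequences share the same weak limit. The natural decomposition is
\begin{align*}
\sqrt{n}(F^h_n - F) = \sqrt{n}(F_n - F) + \sqrt{n}(F^h_n - F_n),
\end{align*}
and \Cref{thm:Donsker} handles the first summand, yielding $\sqrt{n}(F_n - F)\rightsquigarrow \mathbb{G}_F$ in $(S,\|\cdot\|_\infty)$ with the desired covariance.

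For the second summand the assumption is tailor-made: applying \eqref{eq:conv_req} to the (almost surely well-defined) increasing rearrangement $x=(X_{\pi(1)},\dots,X_{\pi(n)})$ gives
\begin{align*}
\|\sqrt{n}(F^h_n - F_n)\|_\infty = \sqrt{n}\,\|h_n(x)-h^*_n(x)\|_\infty \leq C\, n^{1/2-c},
\end{align*}
which tends to $0$ since $c>\tfrac{1}{2}$. This is a \emph{deterministic} bound, so the remainder converges to $0$ in $(S,\|\cdot\|_\infty)$ uniformly over sample paths, and in particular in probability.

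To combine the two ingredients I would invoke the convergence-together lemma, i.e.\@ the Banach-space version of Slutsky's theorem: if $X_n\rightsquigarrow X$ in a metric space and $d(X_n,Y_n)\to 0$ in probability, then $Y_n\rightsquigarrow X$. Alternatively, the same conclusion can be packaged through \Cref{thm:contmap}: the pair $(\sqrt{n}(F_n-F),\sqrt{n}(F^h_n-F_n))$ converges jointly in distribution to $(\mathbb{G}_F,0)$ in $S\times S$, because the second coordinate has a deterministic constant limit; then the continuous addition map $(a,b)\mapsto a+b$ on $S\times S$ delivers the conclusion.

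The statement does not present a deep obstacle, since the hypothesis $c>\tfrac{1}{2}$ is calibrated precisely so that the approximation error between $h_n$ and $h^*_n$ beats the $\sqrt{n}$ scaling of Donsker. The only point that merits some care is verifying joint weak convergence of the pair above in the product Banach space $(S\times S,\|\cdot\|_\infty)$ --- a standard fact whenever one coordinate converges deterministically to a constant, but worth stating explicitly in the functional setting before applying the continuous mapping theorem.
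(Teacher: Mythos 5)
Your argument is correct, and it reaches the conclusion by a genuinely different (and leaner) route than the paper. Both proofs hinge on the same key estimate, namely that $\sqrt{n}\,\|h_n(x)-h^*_n(x)\|_\infty\leq Cn^{1/2-c}\to 0$ because $c>\tfrac12$; the difference is in how the limit is transferred from $F_n$ to $F^h_n$. The paper does this via the extended continuous mapping theorem (\Cref{thm:contmap}): it identifies each realization of $\sqrt{n}(F_n-F)$ with a point $\x$ of a set $D_n\subseteq S$, checks that $x\mapsto\x$ is a bijection (using continuity of $F$) so that maps $g_n(\x)=\sqrt{n}(F-h_n(x))$ are well defined, and verifies the subsequence condition with $g_\infty=\mathrm{id}$. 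You instead write $\sqrt{n}(F^h_n-F)=\sqrt{n}(F_n-F)+\sqrt{n}(F^h_n-F_n)$, observe that the second summand is bounded \emph{deterministically} by $Cn^{1/2-c}$, and invoke the convergence-together (Slutsky-type) lemma in the metric-space setting. This is a standard tool in the Hoffmann--J{\o}rgensen framework of weak convergence that \cite{vaart_1998} uses (where measurability of the empirical process in $(S,\|\cdot\|_\infty)$ is delicate), and the fact that your remainder bound is uniform over sample paths makes the ``$\to 0$ in (outer) probability'' hypothesis trivially satisfied. What the paper's construction buys is that it stays entirely within the single theorem it has already quoted, at the cost of the bijectivity detour; what your route buys is brevity and the avoidance of the auxiliary maps $g_n$, at the cost of importing one more (standard) lemma, which you correctly flag as the only point requiring an explicit citation. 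Your alternative packaging via joint convergence of the pair to $(\mathbb{G}_F,0)$ plus the addition map is also fine. One small point of care, which you already noted: hypothesis \eqref{eq:conv_req} is applied to the increasing rearrangement of the sample, which lies in $T^n$ almost surely precisely because $F$ is continuous.
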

\begin{proof}
This proof is effected by a~fairly straightforward application of Theorem~\ref{thm:contmap}. To this end, let $D=D_\infty=E=(S,\|\cdot\|_\infty)$ and, for $n\in\N$,
\begin{align*}
    D_n:=\{\x\in S\colon x\in T^n\},
\end{align*}
where 
\begin{align*}
    \x(t)\!:=\!\sqrt{n}\big(F(t)-\tfrac{1}{n}\sum_{k=1}^n \one_{[x_k\leq t]}\big)\!=\!\sqrt{n}\big(F(t)-(h^*_n(x))(t)\big),
\end{align*}
$x\in T^n$, $t\in\R$.
In particular we have that $\sqrt{n}(F_n-F)$ is a~random variable in the~function space $D_n\subseteq S$. Moreover, the~map $T^n\ni x\mapsto \x\in D_n$ is bijective since $F$ is  continuous, which means that the~points of discontinuity of $\x$ uniquely determine $x$. Thus there are well-defined maps {$g_n\colon D_n\to S$}, $n\in\N$, which satisfy for all $n\in\N$, $\x\in D_n$, that
\begin{align*}
    g_n(\x)=\sqrt{n}(F-h_n(x)).
\end{align*}
We will show that the~requirement in Theorem~\ref{thm:contmap} is satisfied with $g_\infty\colon S\to S$ being the~identity. To this end consider $(\x_n)_{n\in\N}$ with $\x_n\in D_n$ and assume there exists a~subsequence $(\x_{n_k})_{k\in\N}$
and $s\in S$ such that 
\begin{align}\label{eq:conThmAss}
    \lim_{k\to\infty}\|\x_{n_k}-s\|_\infty=0.
\end{align}
We have
\begin{align*}
    &\|g_{n_k}(\x_{n_k})-g_\infty(s)\|_\infty\\
    &\quad= \|\sqrt{n_k}(F-h_{n_k}(x_{n_k}))-s\|_\infty\\
    &\quad= \|\sqrt{n_k}(F-h_{n_k}(x_{n_k})+h^*_{n_k}(x_{n_k})-h^*_{n_k}(x_{n_k}))-s\|_\infty\\
    &\quad\leq \sqrt{n_k}\|h_{n_k}(x_{n_k})-h^*_{n_k}(x_{n_k})\|_\infty\\
    &\quad\quad+\|\sqrt{n_k}(F-h^*_{n_k}(x_{n_k}))-s\|_\infty.
\end{align*}
The first term vanishes for $k\to\infty$ as we assumed $c>\tfrac{1}{2}$ and
\begin{align*}
    \sqrt{n_k}\|h_{n_k}(x_{n_k})-h^*_{n_k}(x_{n_k})\|_\infty\leq Cn_k^{\frac{1}{2}-c},
\end{align*}
whereas the~second term vanishes for $k\to\infty$ due to assumption \eqref{eq:conThmAss}. 
Consequently we have 
\begin{align*}
    \lim_{k\to\infty}\|g_{n_k}(\x)-g_\infty(s)\|_\infty=0
\end{align*}
as claimed. Note that  
\begin{align*}
    g_n(\sqrt{n}(F_n-F))&= g_n(\sqrt{n}(F- h^*_n(X_{\pi(1)},\dots,X_{\pi(n)})))\\
    &=\sqrt{n}(F- F^h_n(X_{\pi(1)},\dots,X_{\pi(n)}))\\
    &=\sqrt{n}(F^h_n-F).
\end{align*}
Since $\sqrt{n}(F_n-F)\rightsquigarrow \mathbb{G}_F$ by Theorem~\ref{thm:Donsker}, application of Theorem~\ref{thm:contmap} yields $\sqrt{n}(F^h_n-F)\rightsquigarrow \mathbb{G}_F$, which completes the~proof.
\end{proof}
We next argue that  
the constructions $(h_n)_{n\in\N}$ we use to obtain the~ continuous empirical distribution function fulfill  condition \eqref{eq:conv_req}.  We first observe that for  $n\in\N$ and   $x\in T^n$, the~functions $(h_n)_{n\in\N}$  satisfy 
\begin{align*}
    &(h_n(x))(t)\in[0,\tfrac{1}{n+1}] &\mathrm{for}\ t\leq x_1,\\
    &(h_n(x))(t)\in[\tfrac{n}{n+1},1] &\mathrm{for}\ t\geq x_n,\\
    &(h_n(x))(x_k)=\tfrac{k}{n+1}     &\mathrm{for}\ k\in[n],
\end{align*}
and are linear on each interval $[x_k,x_{k+1}]$, {$k\in\{1,\dots,n-1\}$}. We therefore have for $n\in\N$, $x\in T^n$, $k\in\{1,\dots,n-1\}$, $t\in[x_k,x_{k+1})$ that
\begin{align*}
     |(h_n(x))(t)-(h^*_n(x))(t)|&\leq |(h_n(x))(t)-(h_n(x))(x_k)|\\
     &\quad+|(h_n(x))(x_k)-(h^*_n(x))(t)|\\
     &\leq |(h_n(x))(x_{k+1})-(h_n(x))(x_k)|\\
     &\quad+|(h_n(x))(x_k)-(h^*_n(x))(x_k)|\\
     &\leq \tfrac{1}{n+1}+(\tfrac{k}{n}-\tfrac{k}{n+1})\leq 2n^{-1}.
\end{align*}
Moreover, we have for $t<x_1$ that
\begin{align*}
    |(h_n(x))(t)-(h^*_n(x))(t)|&=|(h_n(x))(t)|\leq|(h_n(x))(x_1)|\\
    &=\tfrac{1}{n+1}
\end{align*}
and for $t\geq x_n$ that 
\begin{align*}
|(h_n(x))(t)-(h^*_n(x))(t)|&=|1-(h_n(x))(t)|\\
&\leq|1-(h_n(x))(x_1)|\\
&=\tfrac{1}{n+1}.    
\end{align*}
As such, our construction satisfies \eqref{eq:conv_req} and is covered by the~following proposition.
\begin{proposition}\label{lem:final}
    Let $g\in C^1((0,1),\R)$, $x\in \R$, $F\in S_x$, and $h=(h_n)_{n\in\N}$ such that the~assumptions of Lemma~\ref{lem:3_7} are satisfied. Then
\begin{align*}
    \sqrt{n}(R_{g,x}(F^h_n)-R_{g,x}(F))\rightsquigarrow g'(F(x))\mathbb{G}_F(x)
\end{align*}
and
\begin{align*}
    g'(F(x))\mathbb{G}_F(x)= \mathcal{N}\big(0,(g'(F(x)))^2(F(x)-F(x)^2)\big).
\end{align*}
\end{proposition}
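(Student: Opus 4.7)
The plan is to combine the three main ingredients already established: Lemma~\ref{lem:3_7}, which gives the weak convergence of $\sqrt{n}(F^h_n-F)$ to the Gaussian process $\mathbb{G}_F$; Lemma~\ref{lem:Rdiv}, which provides Hadamard differentiability of the functional $R_{g,x}$ at $F$ with derivative $[R'_{g,x}(F)](h)=g'(F(x))h(x)$; and Theorem~\ref{thm:func_delt}, the functional delta method, which translates weak convergence through a Hadamard differentiable map. The output distribution is then identified by computing the law of the linear image of the Gaussian process at the single point $x$.

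Concretely, I would first verify that the hypotheses of Lemma~\ref{lem:Rdiv} are met, namely $g\in C^1((0,1),\R)$, $x\in\R$, and $F\in S_x$, all of which are assumptions of the proposition. This yields Hadamard differentiability of $R_{g,x}\colon S_x\to \R$ at $F$ with the explicit form of the derivative. Next, I would invoke Lemma~\ref{lem:3_7} under its stated assumptions (guaranteed by hypothesis) to obtain $\sqrt{n}(F^h_n-F)\rightsquigarrow \mathbb{G}_F$ in $(S,\|\cdot\|_\infty)$. I should briefly remark that $F^h_n$ a.s.\ lies in $S_x$ eventually (since $F^h_n(x)\to F(x)\in(0,1)$), so the functional $R_{g,x}$ is well-defined at $F^h_n$ with probability tending to one; strictly speaking one may restrict to this event in applying the delta method.

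Then Theorem~\ref{thm:func_delt} with $r_n=\sqrt{n}$, $\varphi=R_{g,x}$, and $G=\mathbb{G}_F$ immediately produces
\begin{align*}
    \sqrt{n}\bigl(R_{g,x}(F^h_n)-R_{g,x}(F)\bigr)\rightsquigarrow [R'_{g,x}(F)](\mathbb{G}_F)=g'(F(x))\,\mathbb{G}_F(x).
\end{align*}
To finish, I would identify the law of the right-hand side: $\mathbb{G}_F(x)$ is a centered Gaussian random variable (a single marginal of a zero-mean Gaussian process) with variance $\mathrm{cov}[\mathbb{G}_F(x),\mathbb{G}_F(x)]=F(x)-F(x)^2$, so multiplication by the deterministic constant $g'(F(x))$ gives $\mathcal{N}\bigl(0,(g'(F(x)))^2(F(x)-F(x)^2)\bigr)$, matching the claim.

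There is no real obstacle here; the proof is essentially an assembly step, since the substantive work has already been done in Lemmas~\ref{lem:Rdiv} and~\ref{lem:3_7}. The only minor subtlety to address carefully is that the domain of the functional is $S_x$ rather than all of $S$, so one must note that the limit law of $F^h_n$ concentrates in a neighborhood of $F$ inside $S_x$ and that the functional delta method applies on this domain, which is the reason Lemma~\ref{lem:Rdiv} was stated with $F\in S_x$.
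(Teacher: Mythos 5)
Your proposal is correct and follows essentially the same route as the paper's own proof: invoke Lemma~\ref{lem:3_7} for the weak convergence of $\sqrt{n}(F^h_n-F)$, apply the functional delta method (Theorem~\ref{thm:func_delt}) using the Hadamard derivative from Lemma~\ref{lem:Rdiv}, and identify the resulting marginal law of $g'(F(x))\mathbb{G}_F(x)$. Your added remark about the domain restriction to $S_x$ is a sensible point of care that the paper's proof passes over silently.
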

\begin{proof}
Lemma~\ref{lem:3_7} ensures $\sqrt{n}(F^h_n-F)\rightsquigarrow \mathbb{G}_F$ and thus Theorem~\ref{thm:func_delt} yields
\begin{align*}
    \sqrt{n}(R_{g,x}(F^h_n)-R_{g,x}(F))\rightsquigarrow [R'_{g,x}(F)](\mathbb{G}_F(x)).
\end{align*}
Due to Lemma~\ref{lem:Rdiv} we further have
\begin{align*}
    [R'_{g,x}(F)](\mathbb{G}_F(x))&=g'(F(x))\mathbb{G}_F(x)\\
    &=g'(F(x))\mathcal{N}\big(0,F(x)-F(x)^2\big)\\
    &=\mathcal{N}\big(0,(g'(F(x)))^2(F(x)-F(x)^2)\big).
\end{align*}

\end{proof}
    
\subsection{Componentwise Application to Multi-dimensional Random Variables}
\label{sec:multidim}
\noindent
So far we have analyzed the~behaviour of the~Redistributor transformation using an~empirical estimator generated with iid samples from some real-valued random variable. 
We could however also consider a~vector-valued\footnote{The same observations, of course, hold when considering a~random variable whose values are matrices or, more generally, arrays of any dimension.} random variable \mbox{$X\in\R^d$}. Given samples $X^i\sim X$, $i\in[n]$, we~can consider the~collection of scalar values \mbox{$(X^i_j)_{i\in[n],j\in[d]}\subseteq\R$} and take\footnote{To simplify notation we write $[m]:=\{1,\dots,m\}$ for $m\in\N$.} the~empirical cumulative distribution function $h^*_{nd}((X^i_j)_{i\in[n],j\in[d]})$ as if it was a~vector of samples from a~real-valued random variable. We note that
\begin{align*}
    (h^*_{nd}((X^i_j)_{i\in[n],j\in[d]}))(t)&=\tfrac{1}{nd}\sum_{j\in [d]}\sum_{i\in [n]} \one_{[X^i_j\leq t]}\\
    &=\tfrac{1}{d}\sum_{j\in [d]}h^*_n((X^i_j)_{i\in[n]}).
\end{align*}
Which means this would be the~empirical cumulative distribution function of a~scalar-valued random $\widetilde{X}$ variable with cumulative distribution function $\widetilde{F}=\tfrac{1}{d}\sum_{j\in [d]} F_j$ with $F_j$ being the~cumulative distribution function of $X_j\in\R$. Similarly taking our empirical estimator to get $\widetilde{F}^h_n=h_n((X^i_j)_{i\in[n],j\in[d]}))$ would yield an~approximation, in the~sense of Proposition~\ref{lem:final}, of the~transformation $\widetilde{R}$ which satisfies
\begin{align*}
    \widetilde{R}(\widetilde{X}) \eqd Y,
\end{align*}
where $Y$ is a~random variable with cumulative distribution function $g$. While the~statistical implications of applying a~thusly created transformation component-wise to a~vector $x\in\R^d$, is quite unclear, this type of use of the~Redistributor transformation to, e.g.\@ images, yields some appealing results as can be seen in the~next section. 

\section{Use Cases}
\label{sec:use_cases}
\noindent
We demonstrate a~range of use cases in the context of image processing and discuss potential further applications.

\subsection{Color Correction}
\label{subsec:correction_usecase}
\noindent
Poorly lit or over/under-exposed photos are quite common, leading photo editing software to incorporate automated tools for their correction. Typically, these reference-less tools adjust the entire image's exposure or handle light and dark regions separately. However, Redistributor takes a~different approach. It modifies the pixel values to match their distribution with that of a~reference image. This reference image represents what the photo should look like had it been properly exposed. Notably, the process extends beyond simple exposure correction; it also addresses other color issues. In the~\Cref{fig:badlight} example, each RGB channel of the source image is redistributed individually to match the distribution of the target image outperforming a~standard auto-color tool available in Photopea.com software.

In~\Cref{sec:alt_spaces} we describe more advanced approaches to redistribution that are also capable of correcting illumination, color temperature, and other visual aspects of the images.

In \Cref{subsec:comparison-correction} we evaluate Redistributor on a~dataset with available ground truth in order to objectively compare it's performance to other color correction methods that use a~reference image.

\begin{figure}
    \centering
    \vspace{-3mm} 
    \subfloat[\centering Underexposed (source)]{{
        \includegraphics[width=0.225\textwidth]{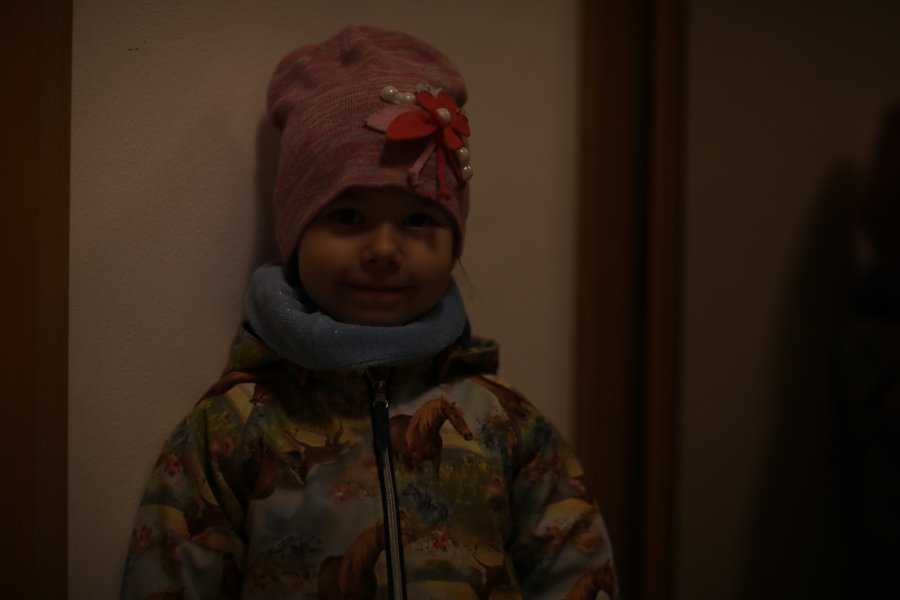} }}%
    \hspace{2mm}\subfloat[\centering Well-exposed (target)]{{
        \includegraphics[width=0.225\textwidth]{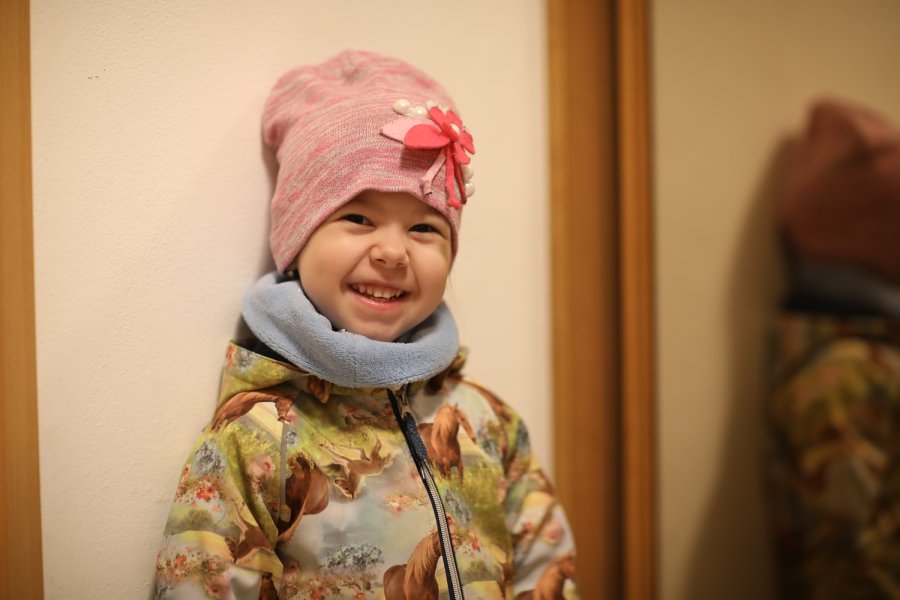} }}\vspace{-2mm}
        
    \subfloat[\centering Corrected (our~method)]{{
        \includegraphics[width=0.225\textwidth]{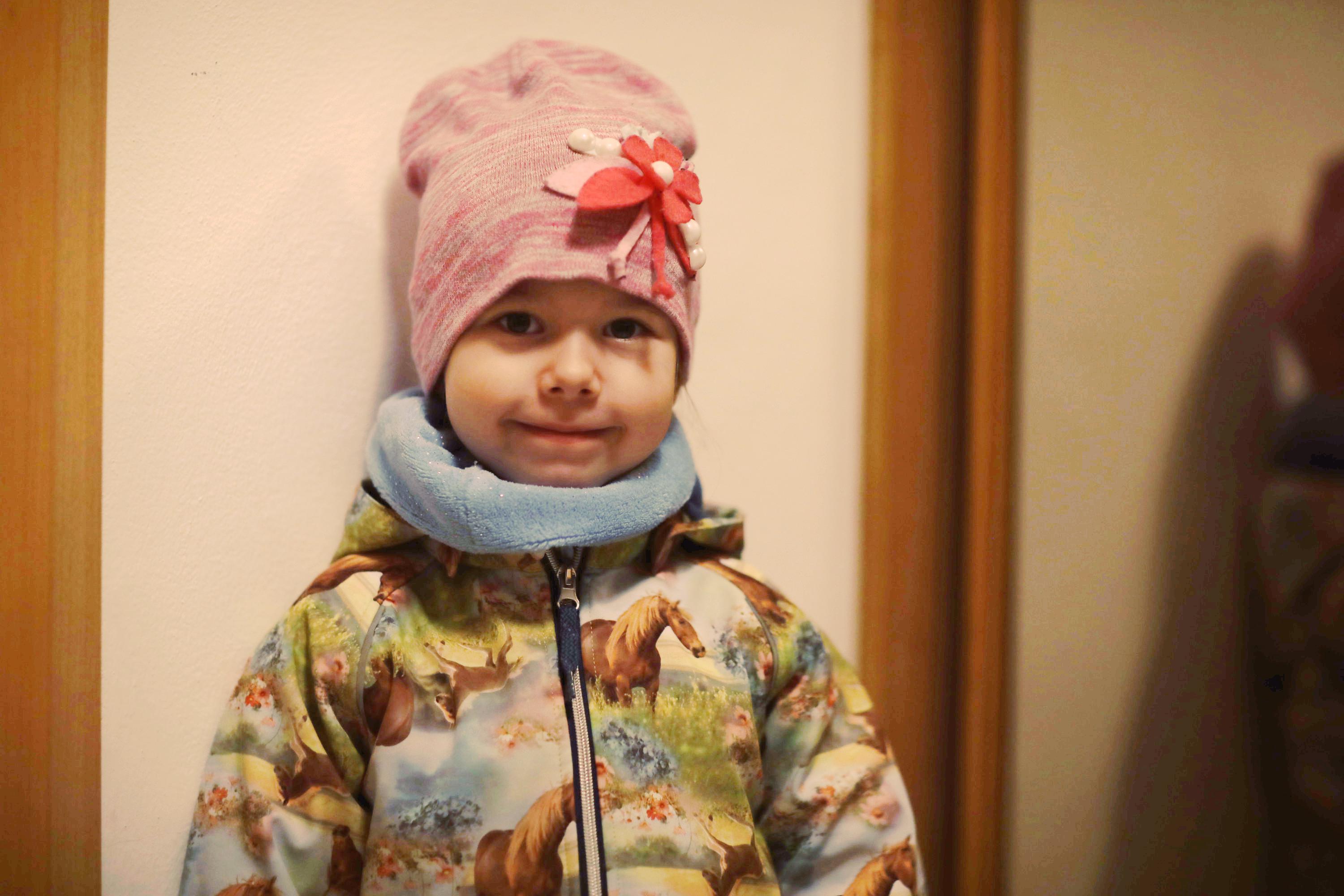} }}%
    \hspace{2mm}\subfloat[\centering Corrected (auto-color)]{{
        \includegraphics[width=0.225\textwidth]{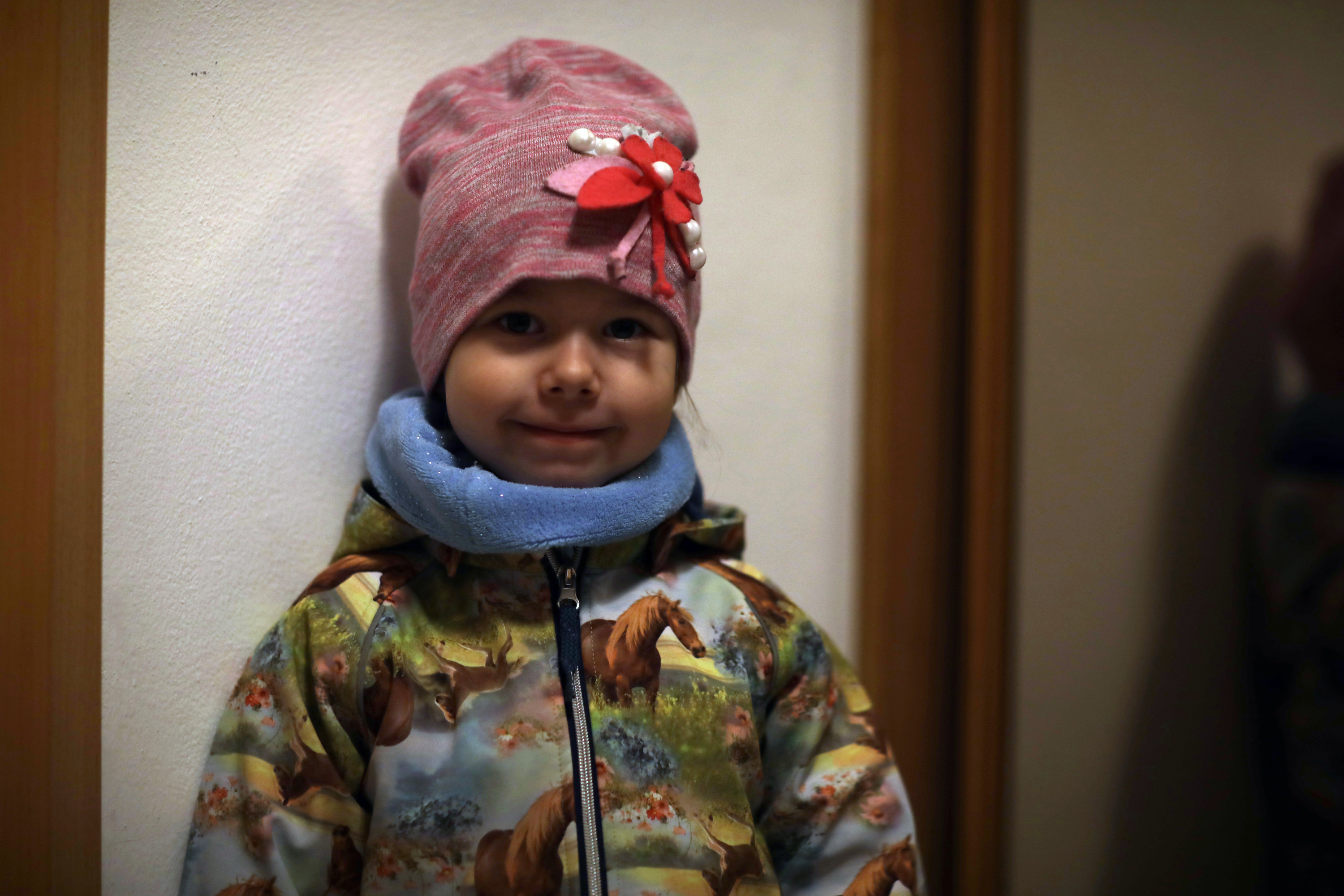} }}%
    \caption{Correcting exposure using a~reference image.
    }%
    \label{fig:badlight}%
\end{figure}

\begin{figure}
\captionsetup[subfigure]{labelformat=empty}
    \centering
    \vspace{-3mm}
    \subfloat{{\label{fig:matchcolor-a}
        \includegraphics[height=0.08\paperheight]{images/MatchColor/beach.jpg} }}\vspace{-1mm}%
    \subfloat{{\label{fig:matchcolor-b}
        \includegraphics[height=0.08\paperheight]{images/MatchColor/beach2.jpg} }}\vspace{-1mm}%
    \subfloat{{\label{fig:matchcolor-c}
        \includegraphics[height=0.08\paperheight]{images/MatchColor/beach-to-beach2.jpg} }}%
        
    \subfloat{{\label{fig:matchcolor-d}
        \includegraphics[height=0.08\paperheight]{images/MatchColor/lakenight.jpg} }}\vspace{-1mm}%
    \subfloat{{\label{fig:matchcolor-e}
        \includegraphics[height=0.08\paperheight]{images/MatchColor/nightsky2.jpg} }}\vspace{-1mm}%
    \subfloat{{\label{fig:matchcolor-f}
        \includegraphics[height=0.08\paperheight]{images/MatchColor/lakenight-to-nightsky2.jpg} }}%
        
    \subfloat{{\label{fig:matchcolor-g}
        \includegraphics[height=0.08\paperheight]{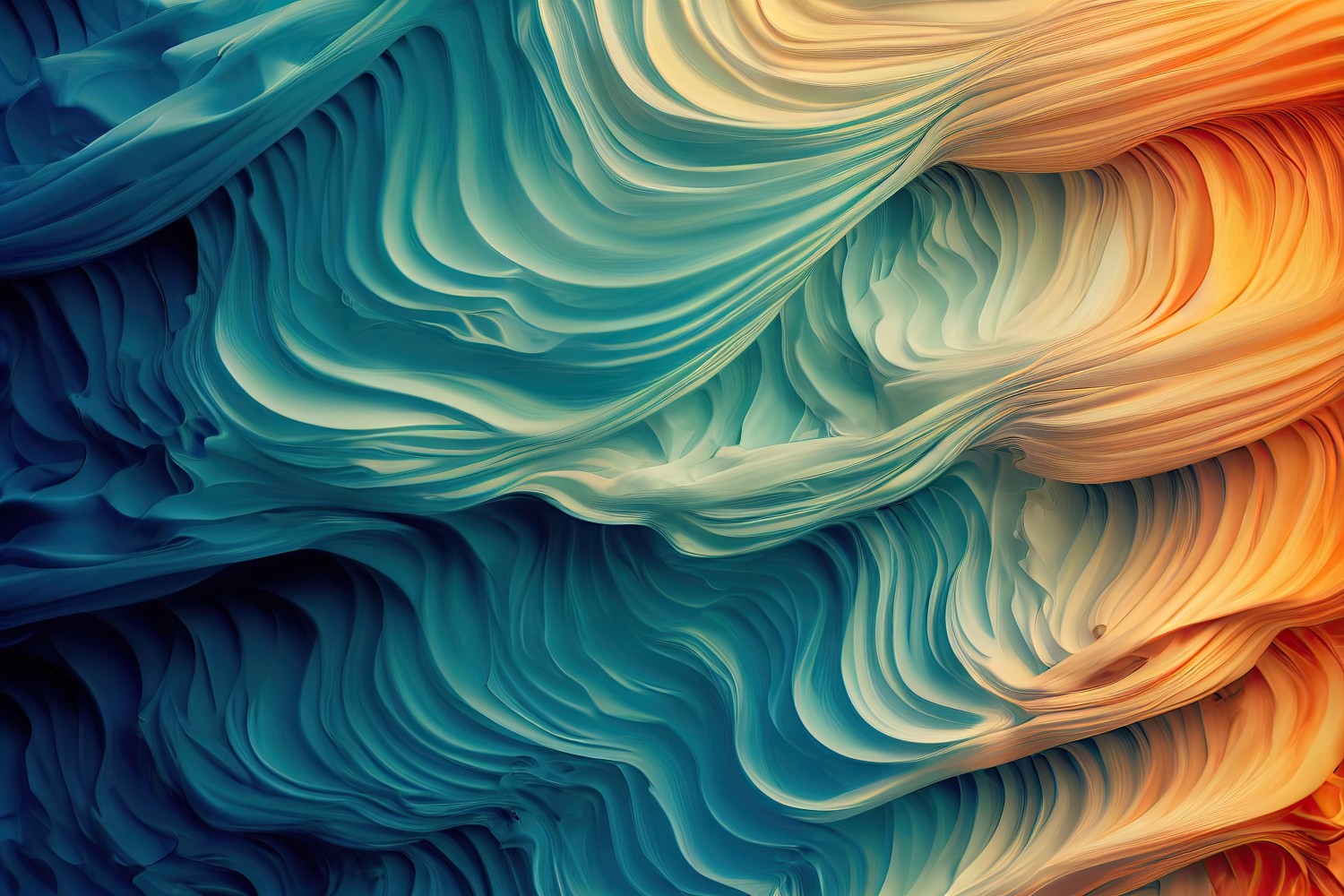} }}\vspace{-1mm}%
    \subfloat{{\label{fig:matchcolor-h}
        \includegraphics[height=0.08\paperheight]{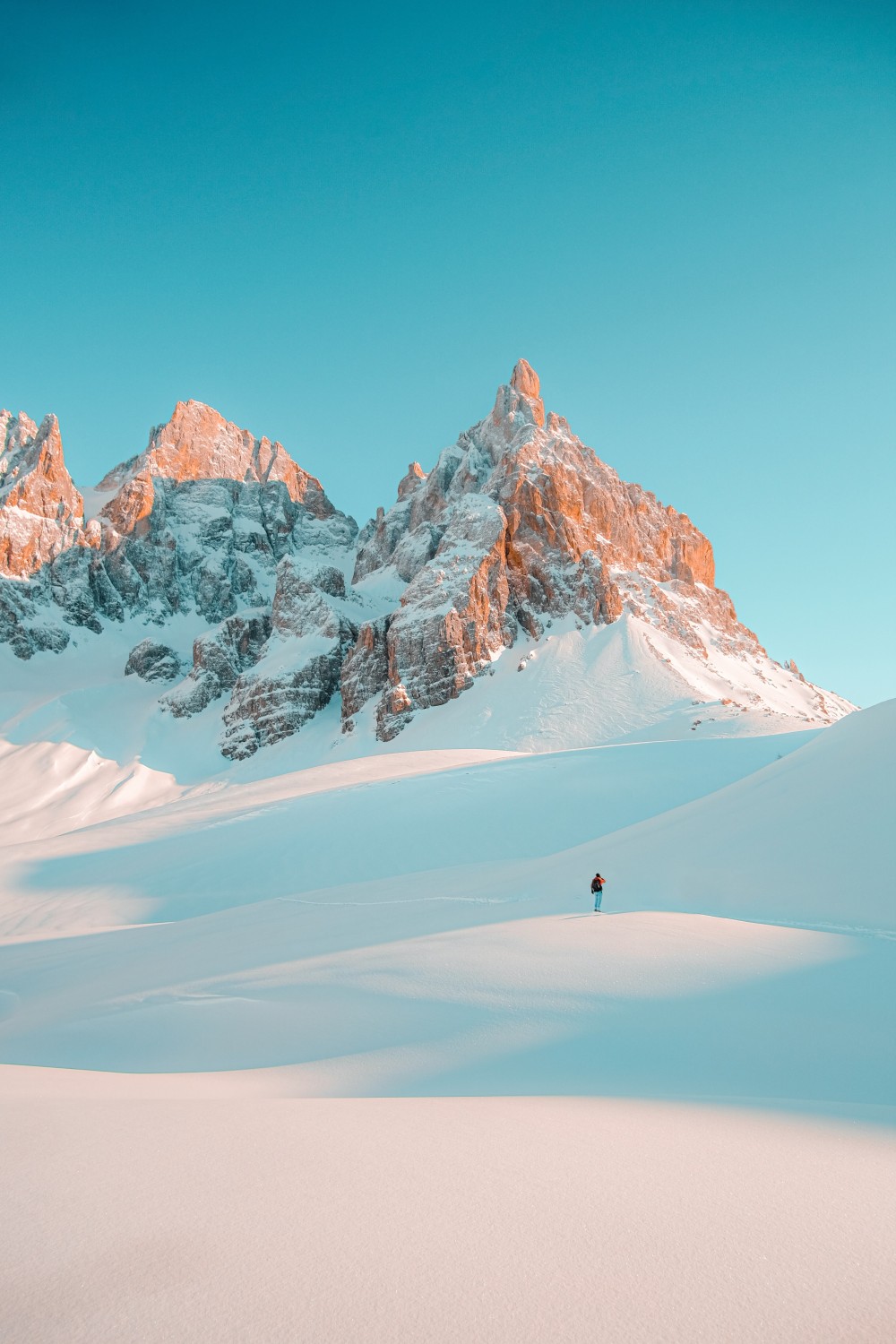} }}\vspace{-1mm}%
    \subfloat{{\label{fig:matchcolor-i}
        \includegraphics[height=0.08\paperheight]{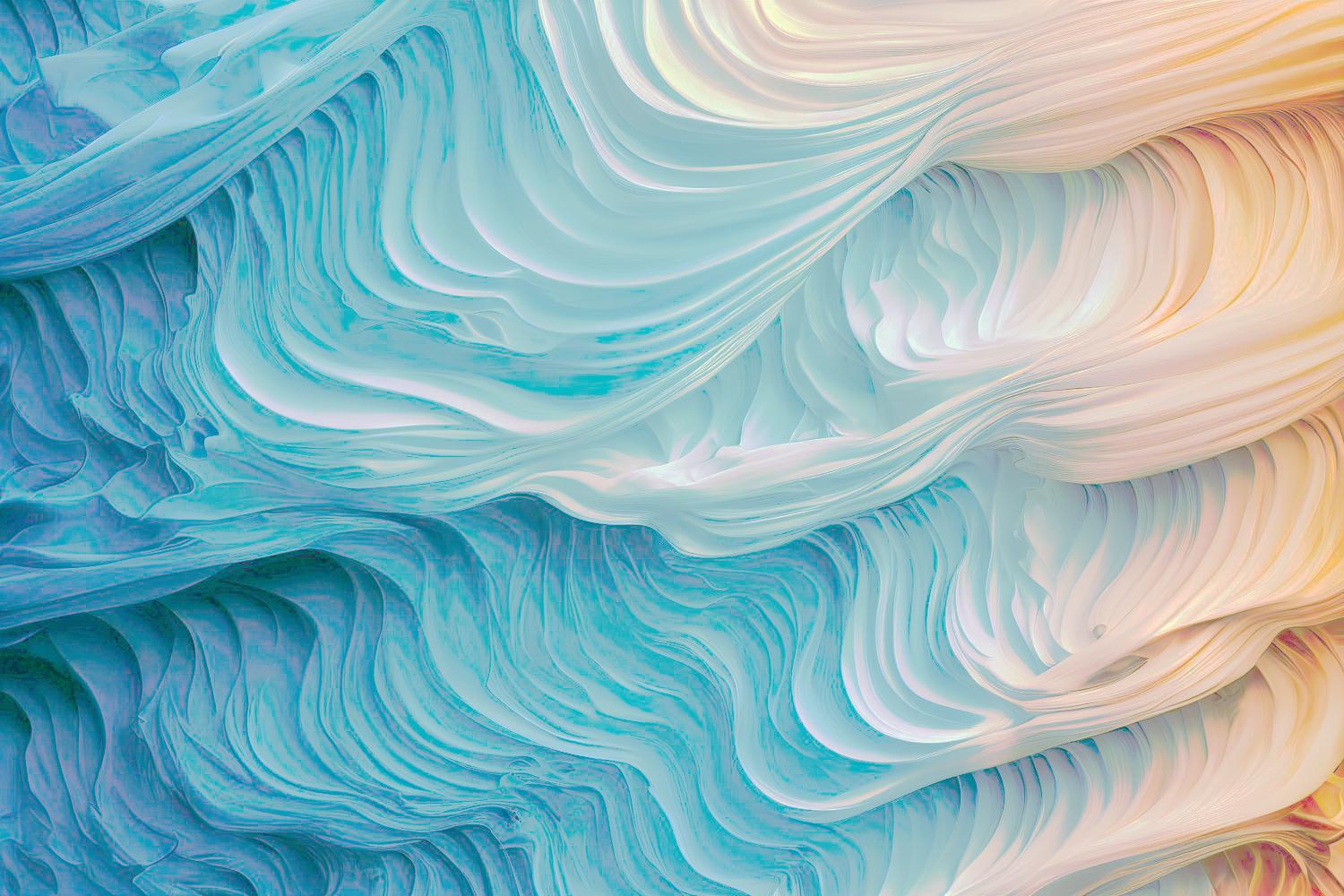} }}%
        
    \subfloat[\centering Source]{{\label{fig:matchcolor-j}
        \includegraphics[height=0.08\paperheight]{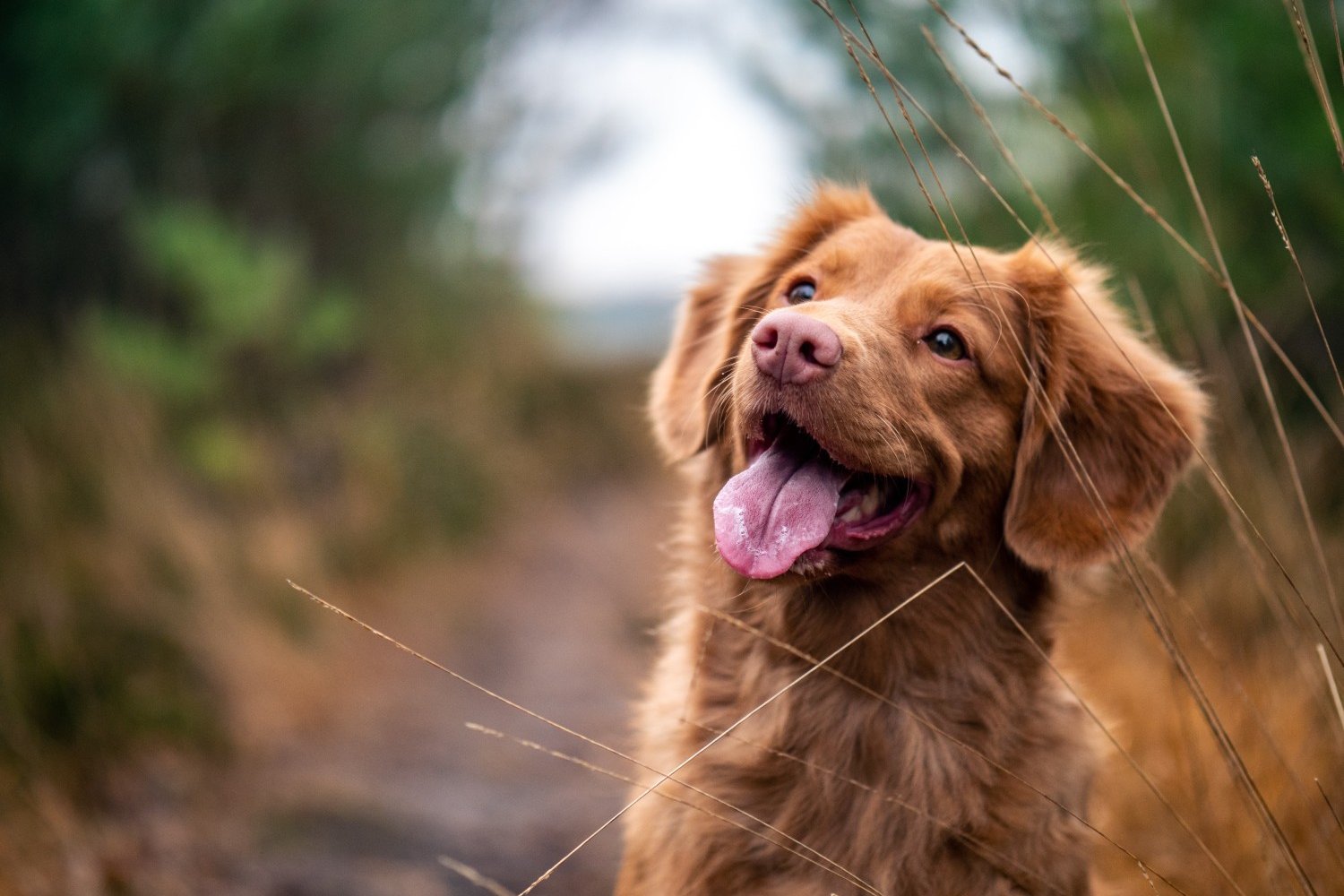} }}%
    \subfloat[\centering Target]{{\label{fig:matchcolor-k}
        \includegraphics[height=0.08\paperheight]{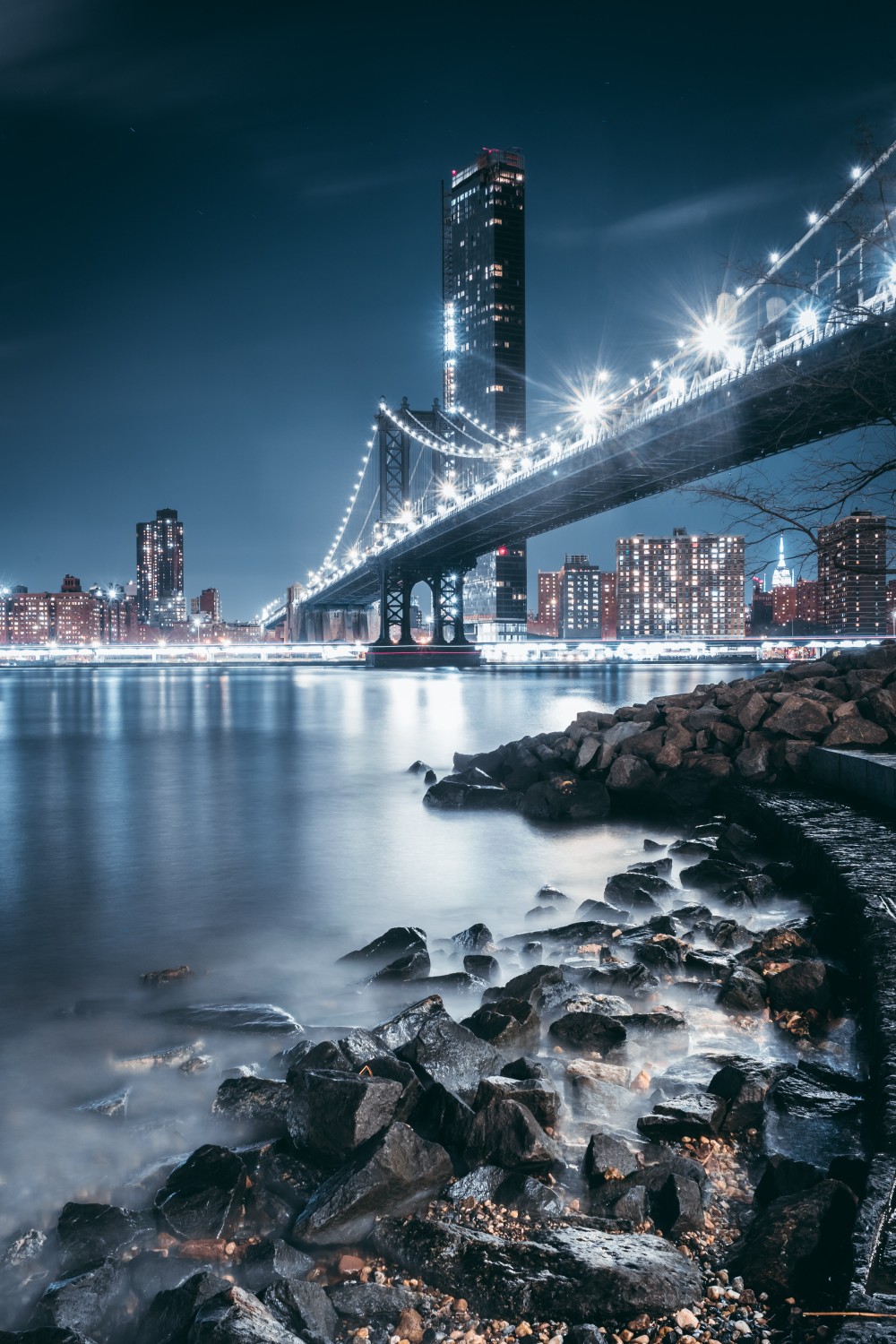} }}%
    \subfloat[\centering Output (our method)]{{\label{fig:matchcolor-l}
        \includegraphics[height=0.08\paperheight]{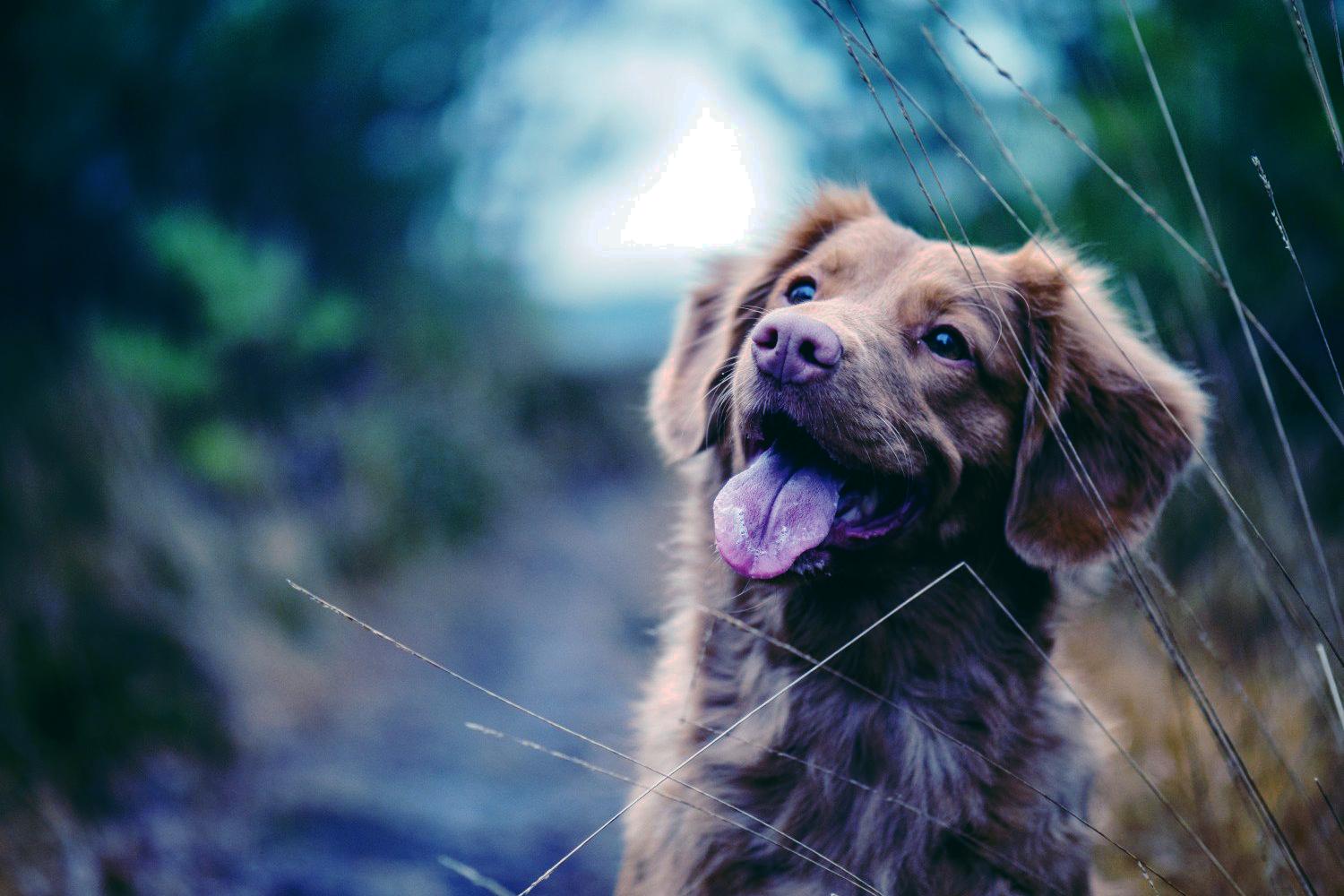} }}%
    \caption{Matching colors of a~reference image.
    }%
    \label{fig:matchcolor}%
\end{figure}

\subsection{Photorealistic Style Transfer}
\label{subsec:nst_usecase}
\noindent
Moving in the~direction of AI-enhanced art, we explore usage of Redistributor for transforming images into new color schemes, demonstrated in~\Cref{fig:matchcolor}. By adjusting RGB colors using a~reference image, often of a~very different scene, we can achieve impressive effects. This approach is versatile, from creating surreal images to maintaining consistent color themes for design or decor. Yet, not all source-target pairs yield pleasing results. In such cases, advanced transformations from~\Cref{sec:alt_spaces} are more suitable. In~\Cref{subsec:comparison-nst}, we compare Redistributor to an NST method for photorealistic style transfer and highlight Redistributor's advantages.

\begin{figure*}
    \centering
    \subfloat[\centering Mosaic 1 -- $50\times50$ mosaic made of 1000\,px$^{2}$ tiles redistributed to a~Gaussian with $\mu=$ desired pixel value and $\sigma=0.05$. The~original image is overlaid over the~mosaic with $\alpha=0.25$ to regain high-frequency content.
    ]{{
        \includegraphics[width=0.82\textwidth]{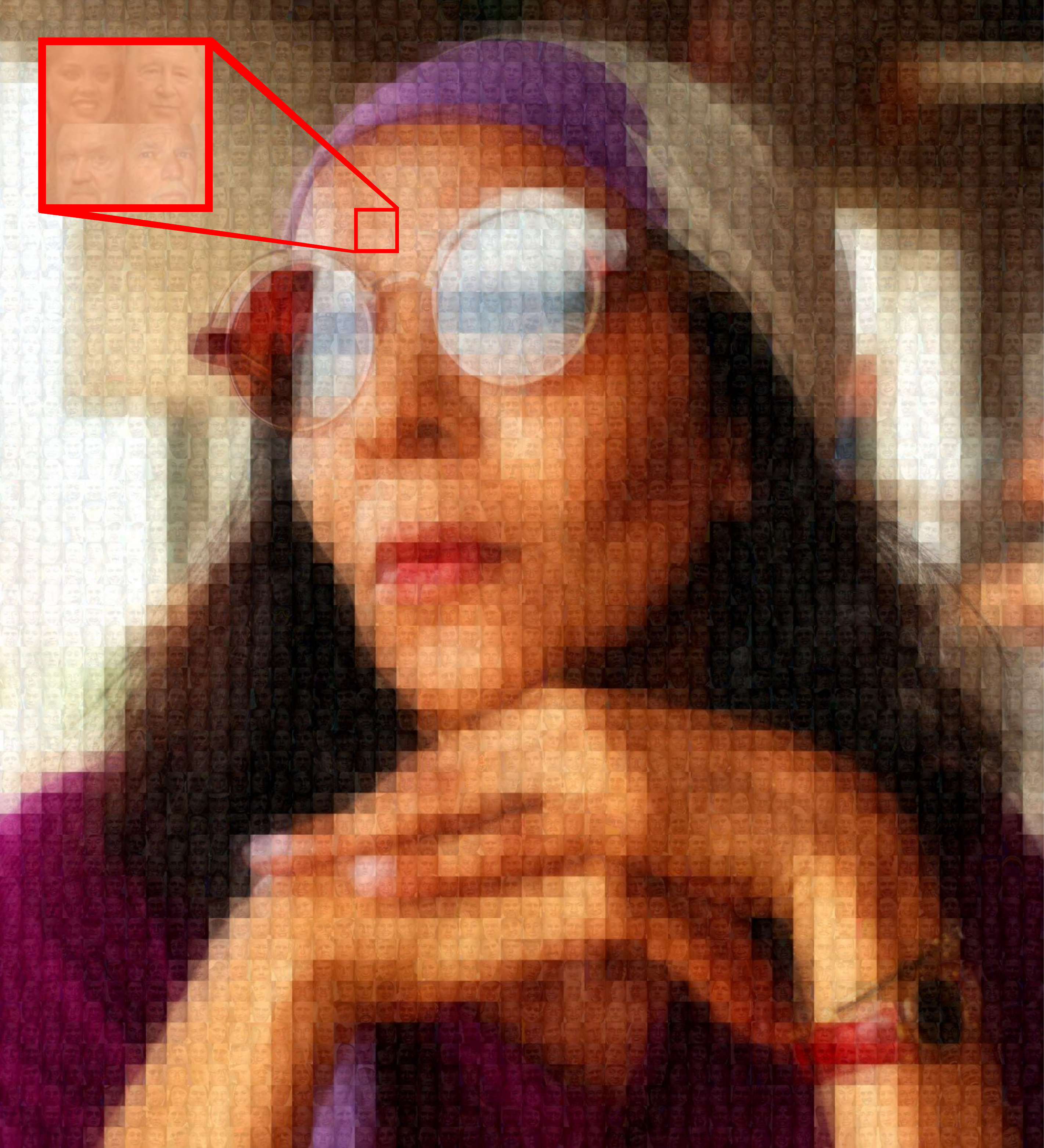} }}%
        
    \subfloat[\centering $\alpha=0.00$, $\sigma=0.05$]{{
        \includegraphics[width=0.263\textwidth]{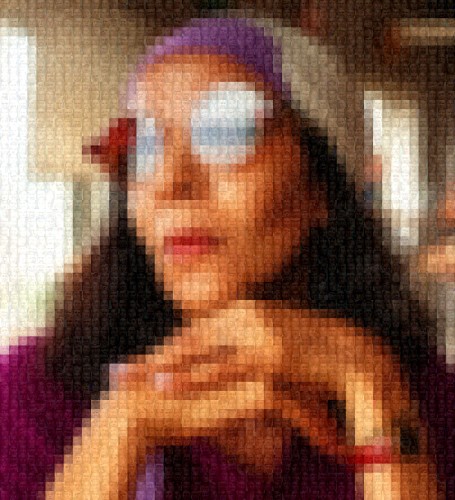} }}%
    \subfloat[\centering $\alpha=0.25$, $\sigma=0.20$]{{
        \includegraphics[width=0.263\textwidth]{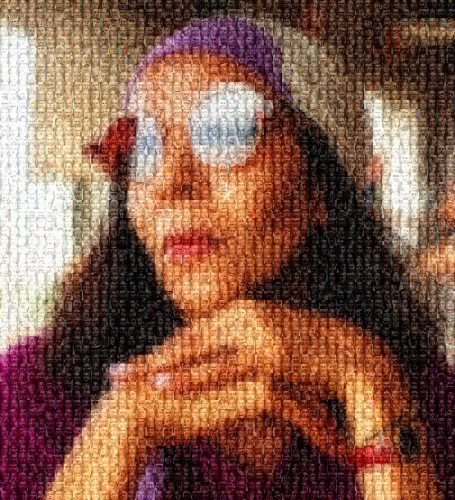} }}%
    \subfloat[\centering $\alpha=0.25$, $\sigma=0.40$]{{
        \includegraphics[width=0.263\textwidth]{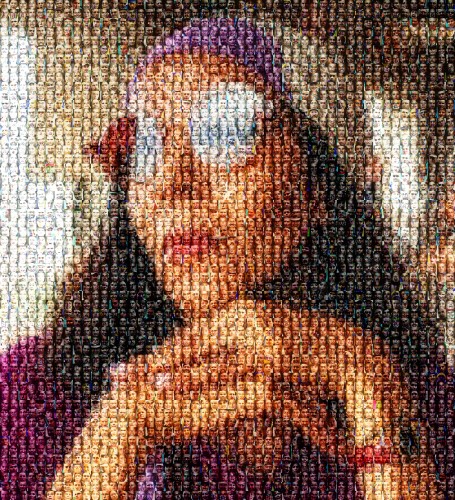} }}%

    \caption{Mosaic effect achieved by tiling redistributed images from the~LFW dataset\,\cite{LFWTech}.
    }%
    \label{fig:mosaic}%
\end{figure*}

\subsection{Making Photomosaics}
\label{subsec:photomosaics}
\noindent
As a~further use case in image modification, Redistributor can easily be used to make photographic mosaics (photomosaics). These are images that have been divided into smaller, tiled regions, each of which is replaced by an~image with similar mean color value as the~original tile. Usually, mosaics are created by selecting the~smaller images with fitting mean from a~large collection. The~resulting granularity is a~function of both the~size of the~collection as well as the~size of the~tiles. Redistributor can instead transform any provided image to a~Gaussian distribution with mean $\mu$ matching the~tile to replace. By doing so, one could even make a~mosaic by repeatedly using only a~single image. Furthermore, the~granularity of the~mosaic can be easily modified using a~parameter $\sigma$ of the~target Gaussian distribution, either keeping or compressing the~variability in an~individual sub-image. Moreover, the~original full-resolution image can be overlaid over the~mosaic to regain high-frequency content, adding another parameter $\alpha$ influencing the~ratio between the~mosaic and the~original image it depicts. Examples of mosaics created with varying parameters are shown in \Cref{fig:mosaic}.

\begin{figure*}
    \centering
    \vspace{-3mm}
    \hspace{-1.5mm}\subfloat[\centering Input image]{{
        \includegraphics[width=0.16\textwidth]{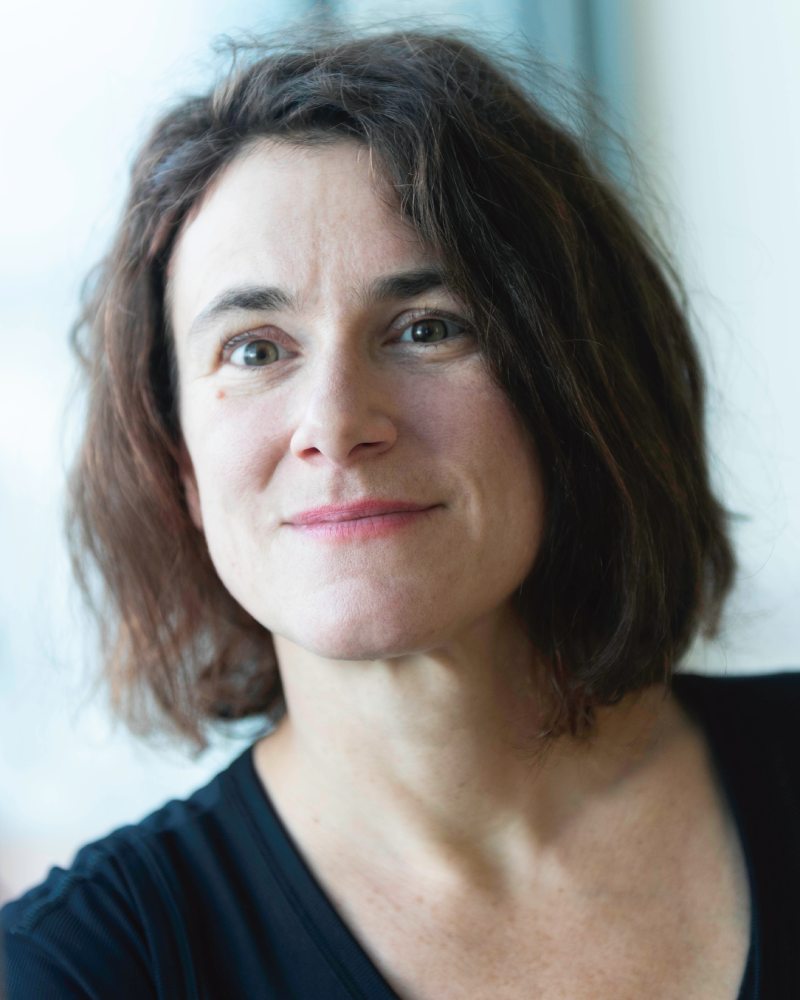}}}%
    \subfloat[\centering Input image]{{
        \includegraphics[width=0.16\textwidth]{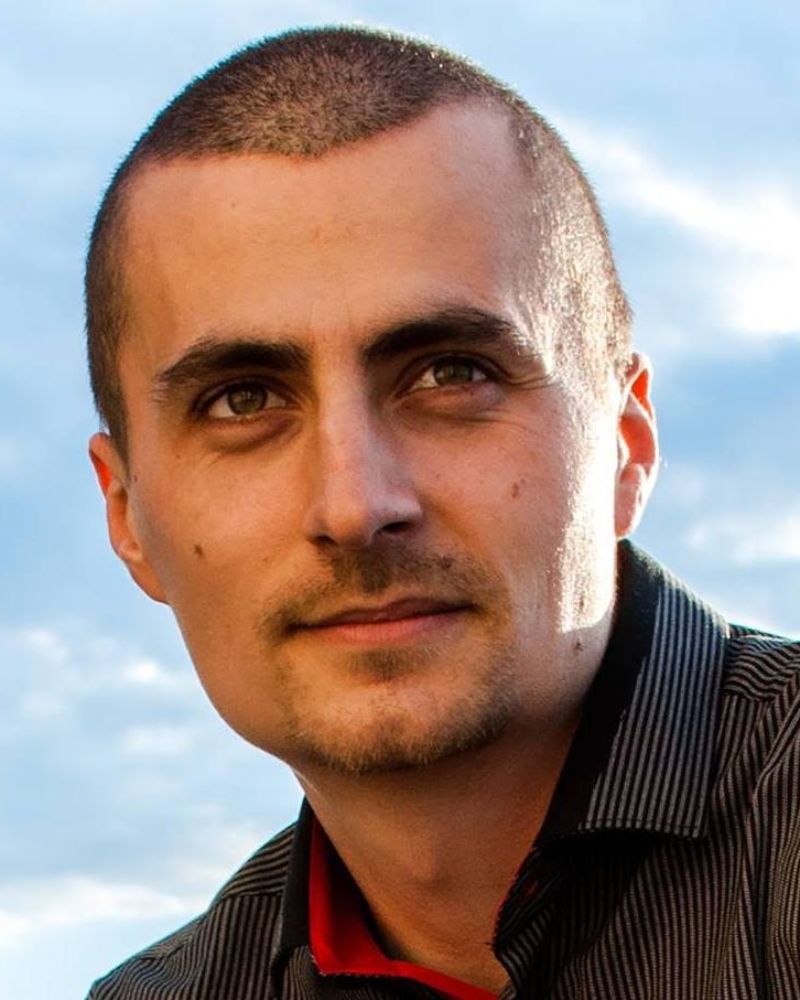}}}%
    \subfloat[\centering $(a)\rightarrow(b)$~Combined]{{
        \includegraphics[width=0.16\textwidth]{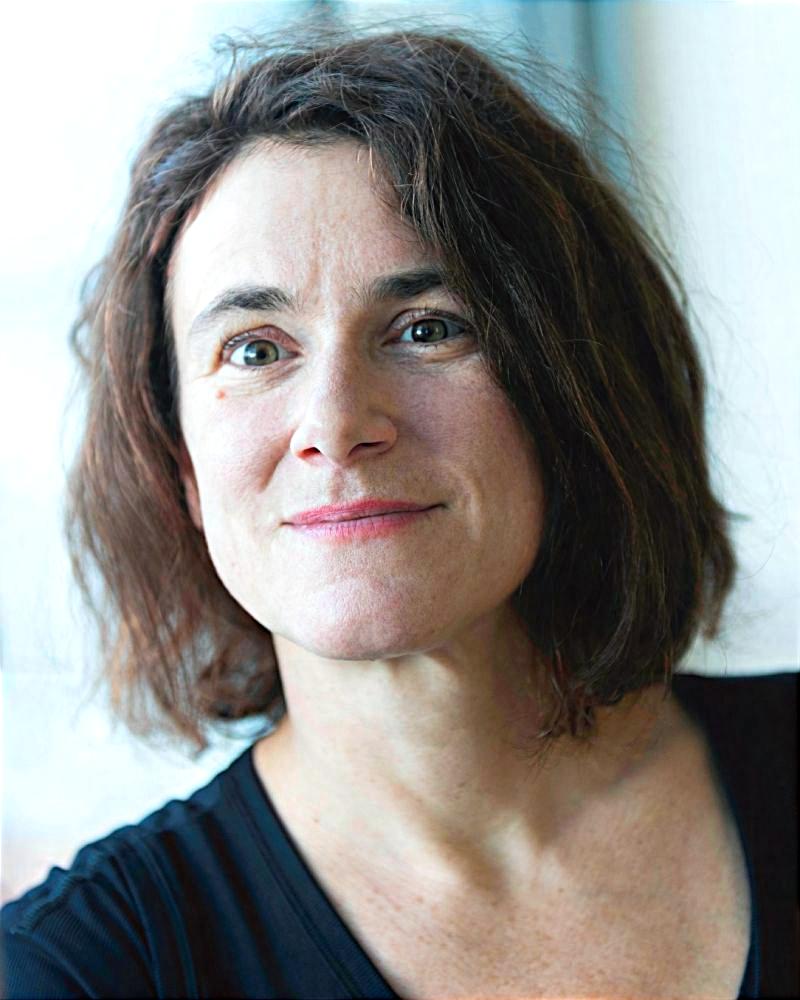}}}%
    \subfloat[\centering $(a)\rightarrow(b)$~Separate]{{
        \includegraphics[width=0.16\textwidth]{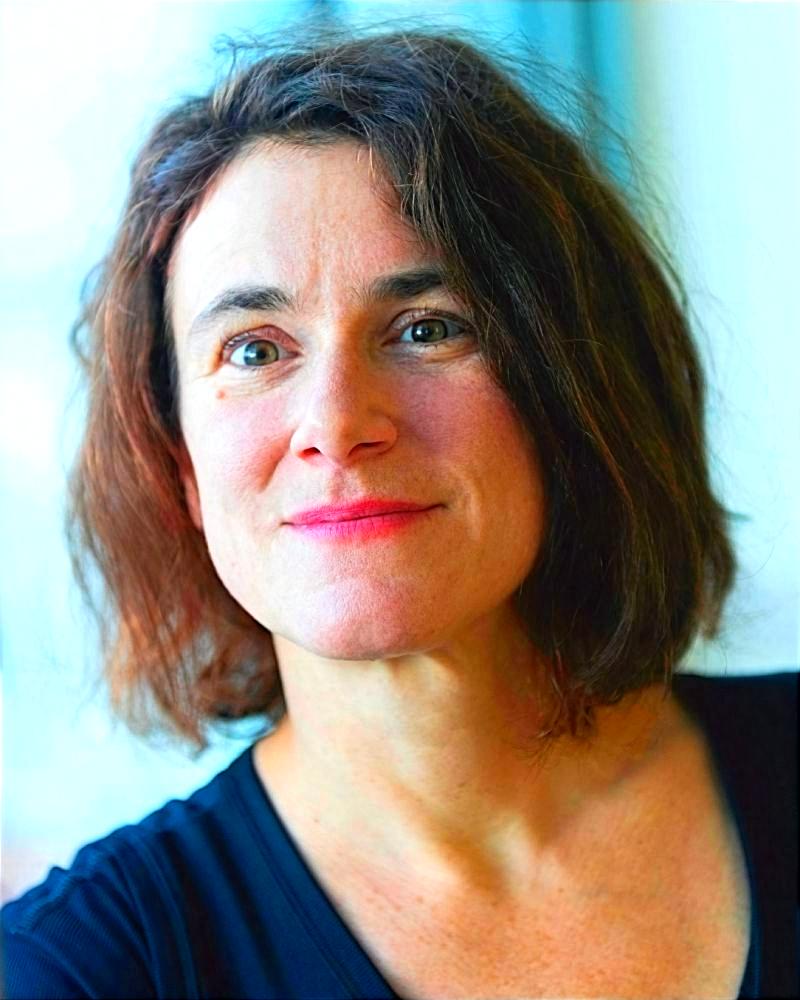}}}%
    \subfloat[\centering $(b)\rightarrow(a)$~Combined]{{
        \includegraphics[width=0.16\textwidth]{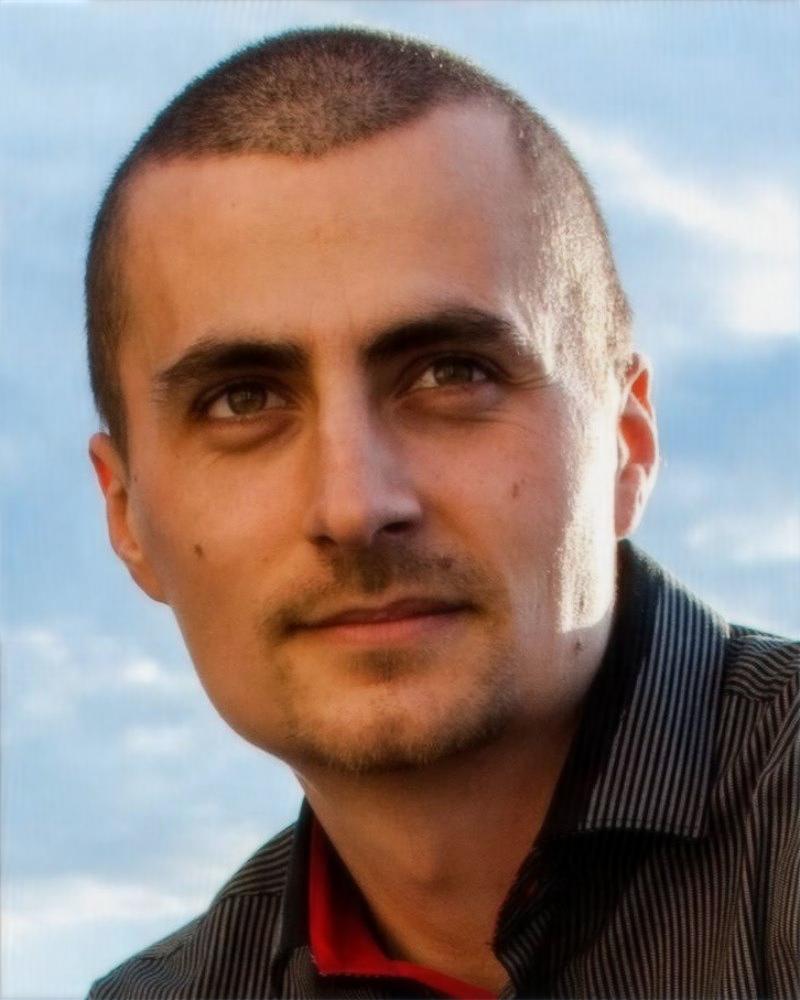}}}%
    \subfloat[\centering $(b)\rightarrow(a)$~Separate]{{
        \includegraphics[width=0.16\textwidth]{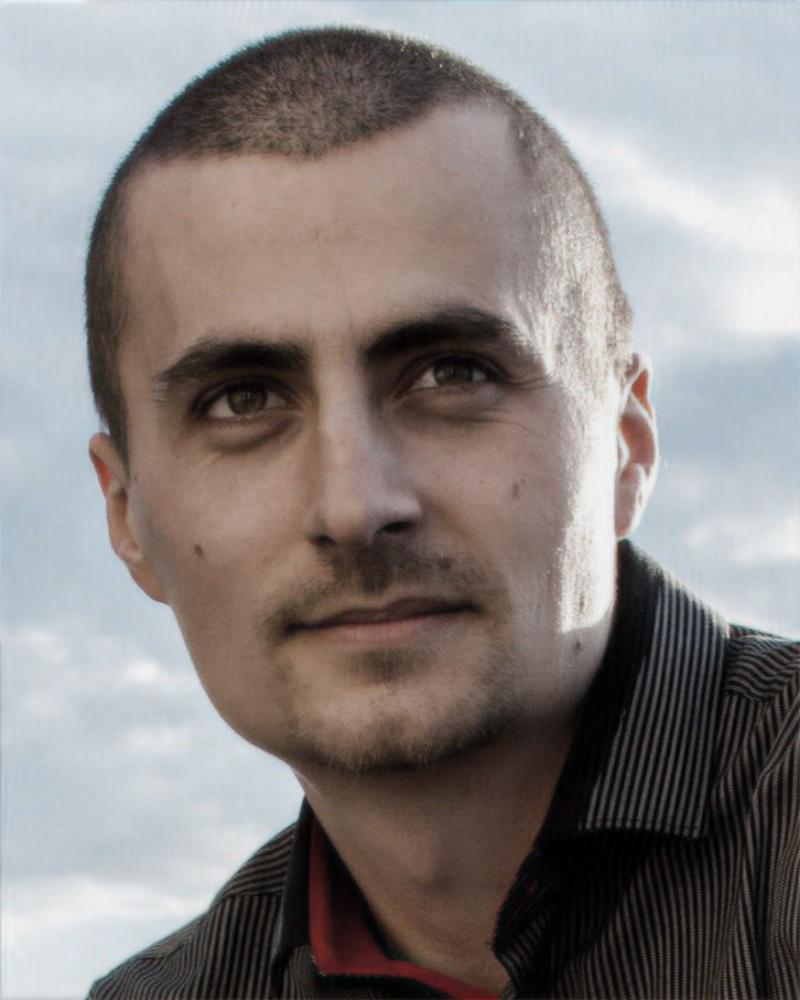}}}%
 
    \caption{Redistributing the amplitudes in Fourier space. Combined -- treating all 3 channels as one flattened array. Separate -- treating each channel separately. In all outputs, the sharpness is matched (zooming in might be necessary to see the effect). In Combined, the Output images tend to retain the colors of Source while in Separate, also the colors are adjusted according to Target (however, not in a~very predictable manner).
    }%
    \label{fig:fourier}%
\end{figure*}

\subsection{Applying the Transformation in Alternative Spaces}
\label{sec:alt_spaces}\noindent
As opposed to the previous use cases, this subsection describes an approach where processing happens in spaces other than the RGB color space. The idea of using different color representation spaces for color mapping is well established, cf.,~e.g.,~Section 2.2.2 in~\cite{ReinhardSurvey}, where transformations in LMS cones space or (CIE)LAB space, among others, are reviewed. We present two useful extensions. 

On the one hand, we consider application of Redistributor in Fourier space. More precisely, the amplitudes of an image's channels' Fourier coefficients are redistributed, while original phases are kept. Results are shown in \Cref{fig:fourier}, where both combined or separate channel processing is demonstrated. Both cases adjust image sharpness.

On the other hand, we devise and examine advanced sequential transformations that let us consecutively match histograms of chosen channels within different color spaces. This approach produces promising results, especially when combined with masking and weighting of each distinct transformation for better and more intuitive control of the final result. 
To be more precise, by masking we mean applying Redistributor between pre-specified regions of source and target images. Masking can be used to enforce mapping of the colors between semantically corresponding regions while weighting certain transformations helps to reduce undesired artifacts caused by heavily-skewed distributions of certain channels. See~\Cref{fig:weighting} for a~visual aid explaining what we mean by weighting in this context. To produce the results presented in~\Cref{sec:comparisons} and shown in~\Cref{fig:colormapping,fig:nst,fig:colormapping-perf}, we iteratively applied a~sequence of changes over the Saturation (in HSV color space), Lightness (in LAB), Red, Green, Blue (in RGB), and again Lightness (in LAB) channels. This approach is explainable, as each of the channels bears a~specific meaning, and also traceable, as one can visualize the intermediate result after each transformation.

\begin{figure}
    \centering
    \vspace{-3mm}
    \hspace{-1.5mm}\subfloat[\centering Target]{{
        \includegraphics[width=0.092\textwidth]{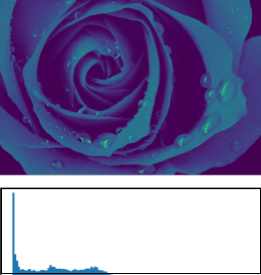}}}%
    \subfloat[\centering Source]{{
        \includegraphics[width=0.092\textwidth]{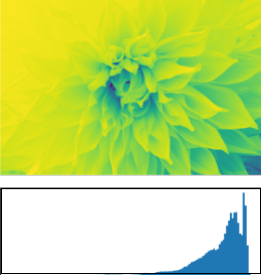}}}%
    \subfloat[\centering $w=0.33$]{{
        \includegraphics[width=0.092\textwidth]{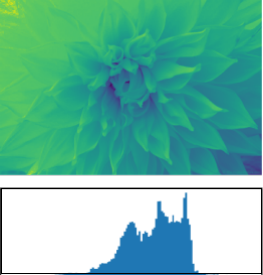}}}%
    \subfloat[\centering $w=0.67$]{{
        \includegraphics[width=0.092\textwidth]{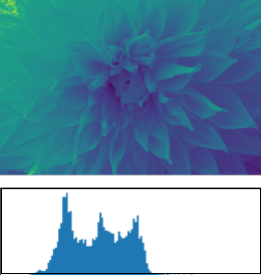}}}%
    \subfloat[\centering $w=1.00$]{{
        \includegraphics[width=0.092\textwidth]{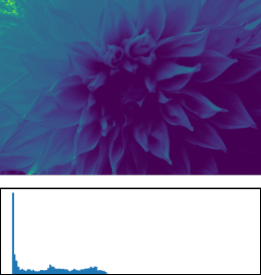}}}%

    \caption{Averaging the source\,(b) and transformed\,(e) values to achieve partial redistribution according to weight $w$.}
    \label{fig:weighting}%
\end{figure}


\begin{figure}
    \centering
    \includegraphics[width=0.48\textwidth]{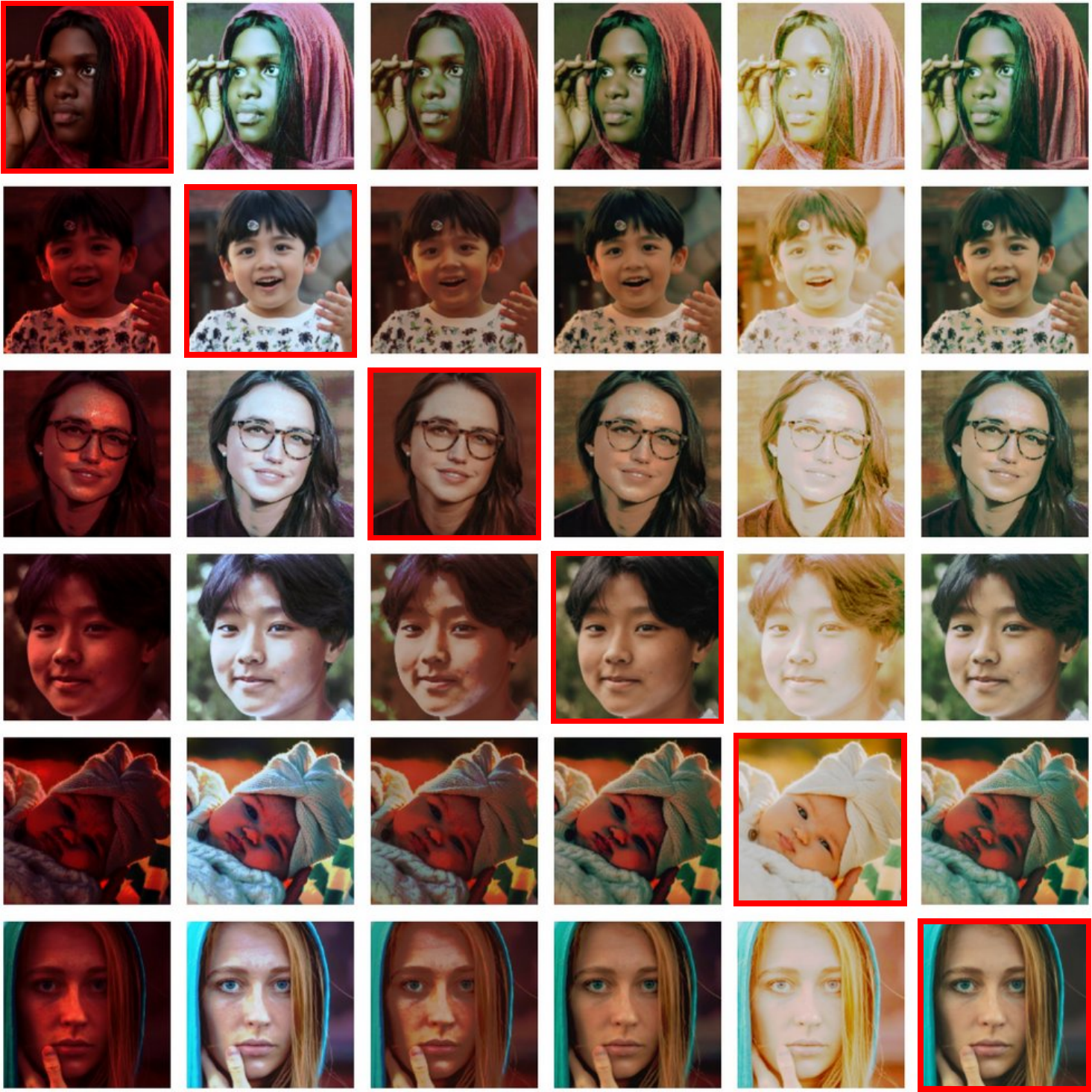}
    \caption{Augmenting image data in color space using distributions of other images within the~same batch. The~original images are highlighted using a~red border. All images within one column are redistributed according to the~image in the~same column that lies on the~diagonal; e.g.,\@ the image in row\,3, column\,2 is redistributed according to the~image in row\,2, column\,2.
    }%
    \label{fig:augmentation}%
\end{figure}

\subsection{Image Data Augmentation}
\label{subsec:augmentation}
\noindent
Redistributor may also be used for Image data augmentation in color space, as described in\,\cite{shorten2019survey}, i.e.\@ taking an~image and producing a~similar one by modifying the~histograms of the~RGB color channels.  
\Cref{fig:augmentation} illustrates the~example of a~batch of $6$ images, where out of each image $5$ new ones have been created by channel-wise redistribution to match each of the~other images. One can also create new versions of an~image by redistributing them to some manually chosen distribution of interest, depending on the~specifics of the~intended application. Naturally, Redistributor could also be applied to any other type of data as long as it consists of some collection of scalar values. Of course, it is not always clear whether these augmentations are necessarily sensible for the~intended task. While the~benefits and drawbacks need to be carefully considered based on the~goals of the~intended application, Redistributor always provides an~efficient and flexible tool for systematic data augmentation.

\subsection{Preprocessing for Machine Learning Tasks}
\label{subsec:mlpreprocessing}
\noindent
Another potential application of this algorithm would be as a~preprocessing step for machine learning tasks. It is designed to efficiently handle a~large number of samples with a~low memory footprint, and as it is implemented as a~Scikit-learn transformer, it can be conveniently inserted in common machine learning pipelines.  
It always produces an~invertible transformation, i.e.\@ does not, in principle, reduce the~input-output relationships which may be learned. In addition, \Cref{sec:math} shows that given $n$ iid samples of a~CDF $F_S$ and some specified target CDF $F_T$, it is a~consistent estimator of $F_T^{-1}\circ F_S$, i.e.\@ it can be expected to behave predictably on further samples of $F_S$. Particularly when applied component-wise to high-dimensional data, e.g.\@ images, it seems unlikely that it would improve things from a~statistical perspective. However, there might be other benefits. For example, when the~source distribution is very highly concentrated around some small value, it could serve as a~kind of normalization and lead to improved numerical performance of an~algorithm.

Anecdotally, we have observed various cases of improved performance on a~number machine learning tasks using neural networks. The~improvement is, however, not very consistent and often rather mild. Although, given the~simple and inexpensive nature of this preprocessing step it would be quite unreasonable to expect a~consistent significant improvement anyway. We abstain from presenting numerical examples for this use case, since this is not the~focus of the~paper and the~black-box nature of neural networks means that a~brief demonstration would require significant cherry picking.

\begin{figure*}[t]
    \captionsetup[subfigure]{labelformat=empty}
    \centering
    \vspace{-3mm}
\hspace{-0.4mm}\subfloat{{\includegraphics[width=0.193\textwidth,valign=c]{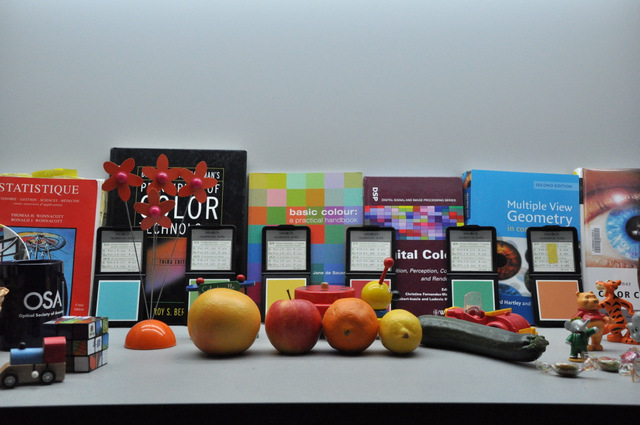}}}
\hspace{1.5mm}\subfloat{{\includegraphics[width=0.193\textwidth,valign=c]{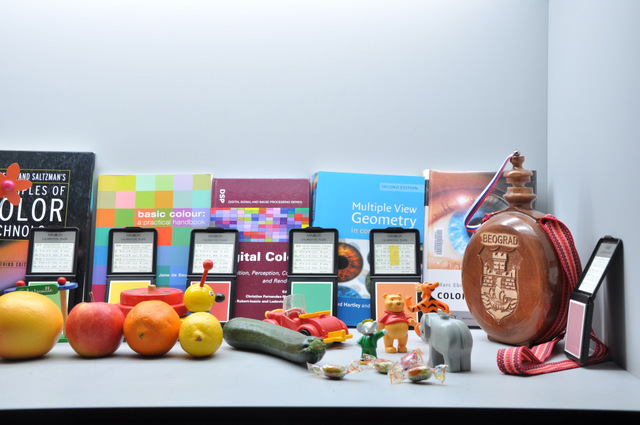}}}
\hspace{1.5mm}\subfloat{{\includegraphics[width=0.193\textwidth,valign=c]{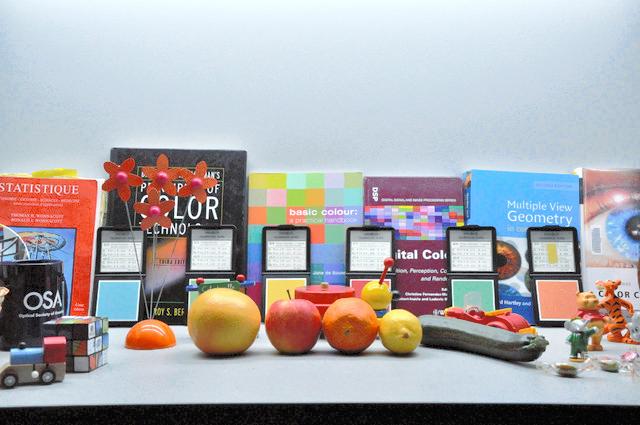}}}
\hspace{1.5mm}\subfloat{{\includegraphics[width=0.193\textwidth,valign=c]{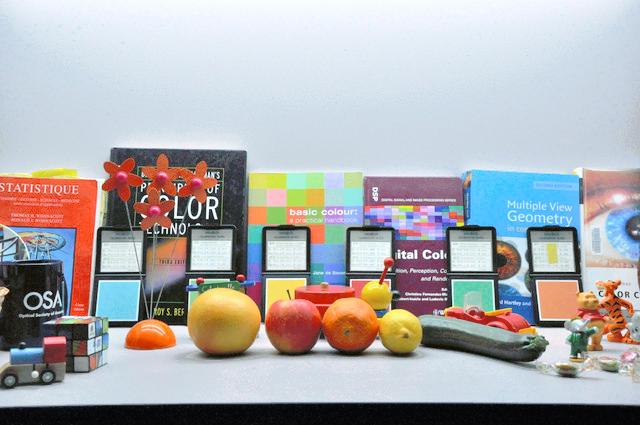}}}
\hspace{1.5mm}\subfloat{{\includegraphics[width=0.193\textwidth,valign=c]{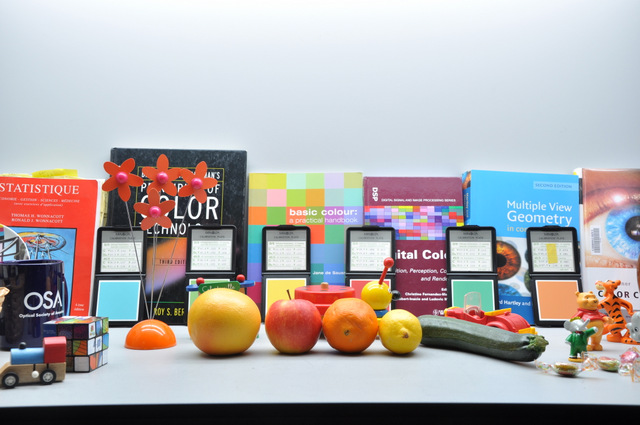}}}\vspace{-2mm}

\hspace{-0.4mm}\subfloat{{\includegraphics[width=0.193\textwidth,valign=c]{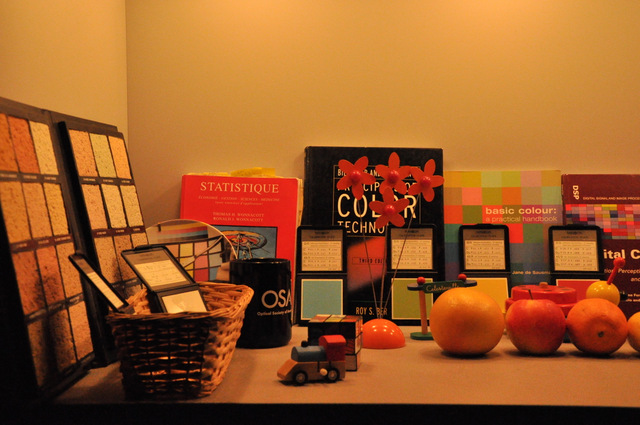}}}
\hspace{1.5mm}\subfloat{{\includegraphics[width=0.193\textwidth,valign=c]{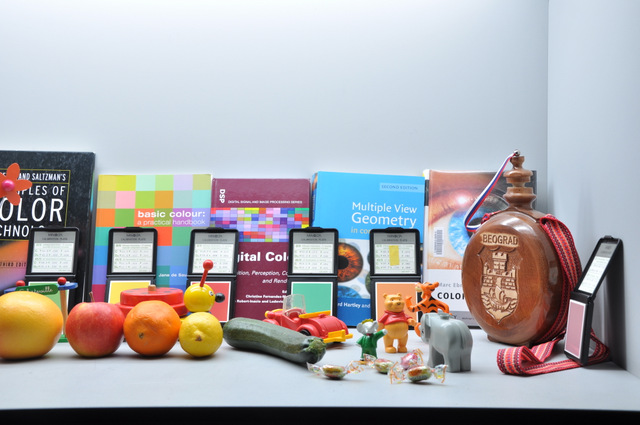}}}
\hspace{1.5mm}\subfloat{{\includegraphics[width=0.193\textwidth,valign=c]{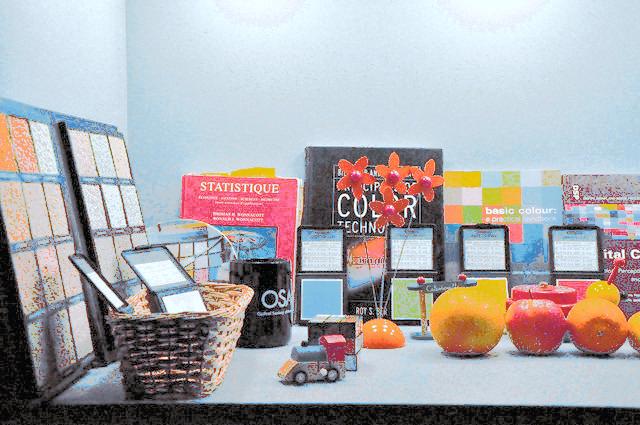}}}
\hspace{1.5mm}\subfloat{{\includegraphics[width=0.193\textwidth,valign=c]{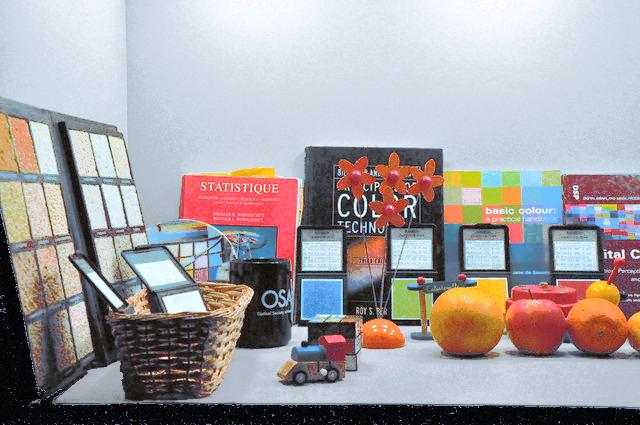}}}
\hspace{1.5mm}\subfloat{{\includegraphics[width=0.193\textwidth,valign=c]{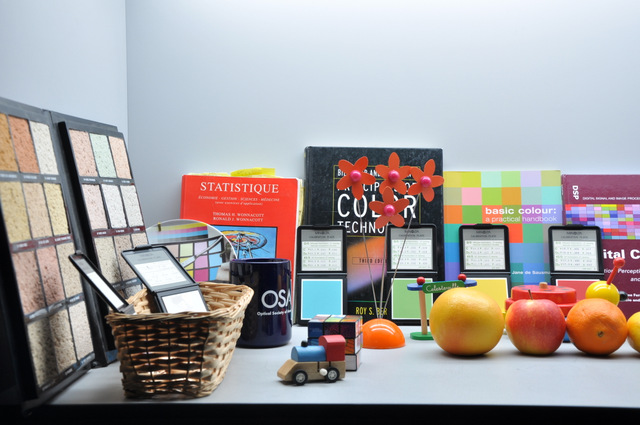}}}\vspace{-2mm}

\hspace{-0.4mm}\subfloat[\centering Input]{{\includegraphics[width=0.193\textwidth,valign=c]{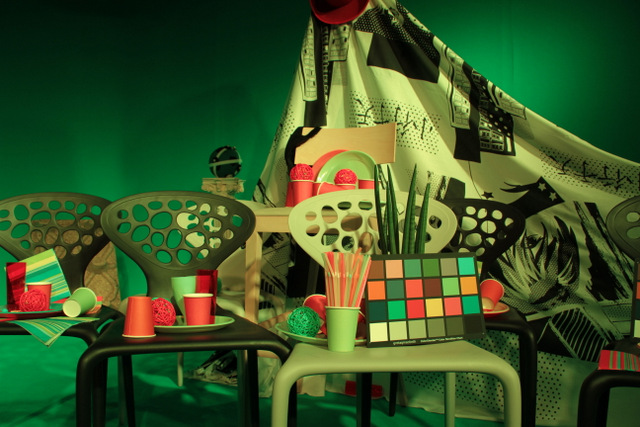}}}
\hspace{1.5mm}\subfloat[\centering Reference]{{\includegraphics[width=0.193\textwidth,valign=c]{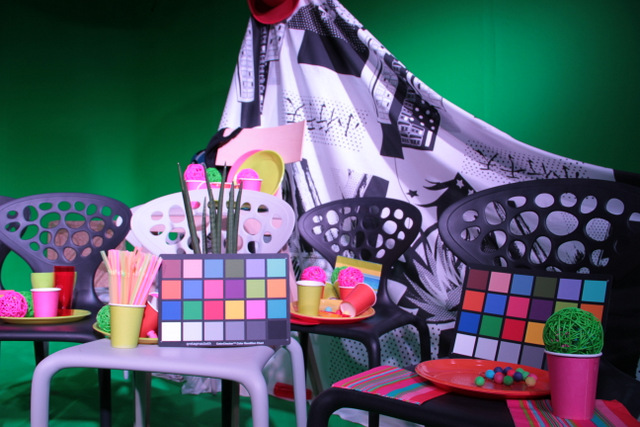}}}
\hspace{1.5mm}\subfloat[\centering Redistributor]{{\includegraphics[width=0.193\textwidth,valign=c]{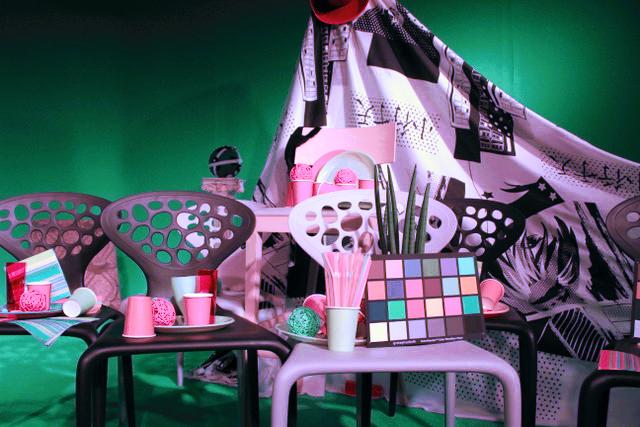}}}
\hspace{1.5mm}\subfloat[\centering Redistributor (masked)]{{\includegraphics[width=0.193\textwidth,valign=c]{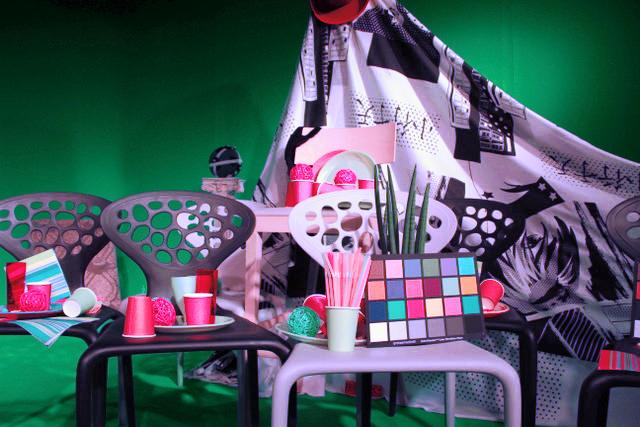}}}
\hspace{1.5mm}\subfloat[\centering Ground truth]{{\includegraphics[width=0.193\textwidth,valign=c]{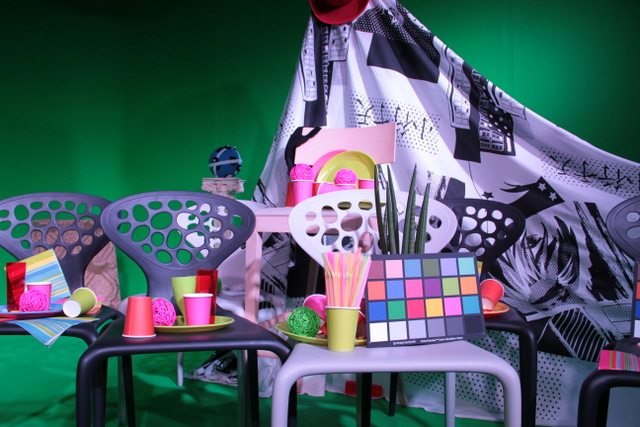}}}
 
    \caption{Performance of Redistributor on the Color Mapping Dataset. Each of the three rows contains one example image, namely 4, 39, and 113, from the three image sets. Input images shown here represent the ''middle difficulty`` from each of the sets. The whole dataset is displayed in Figure\,13 of~\cite{ReinhardSurvey}.
    }%
    \label{fig:colormapping}%
\end{figure*}

\begin{figure*}[!t]%
    \centering
    \vspace{-2.5mm}
    \subfloat{{
        \hspace{-3mm}\includegraphics[width=1.015\textwidth]{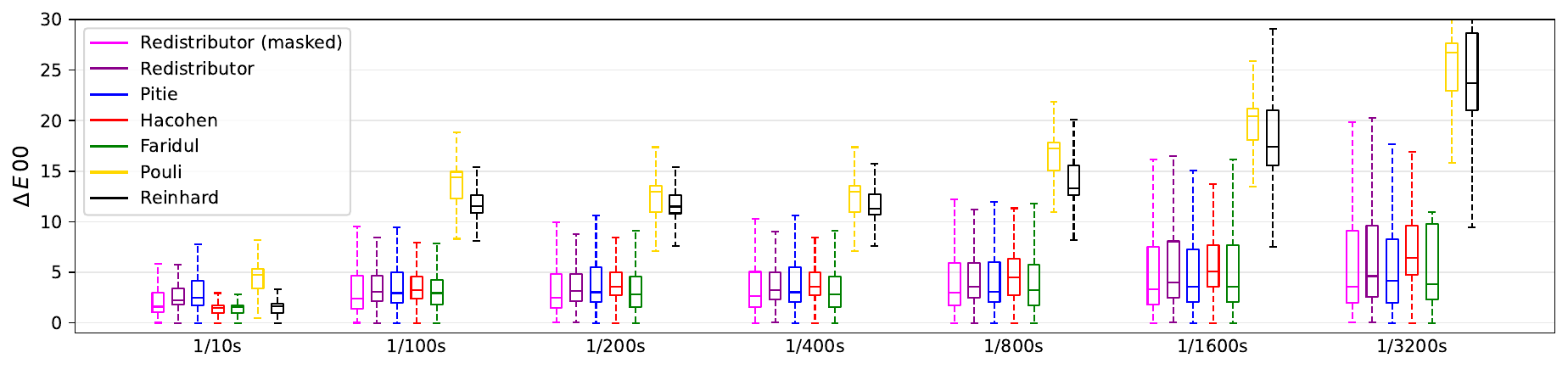} }}\vspace{-5.4mm}

    \subfloat{{
        \hspace{-3mm}\includegraphics[width=1.015\textwidth]{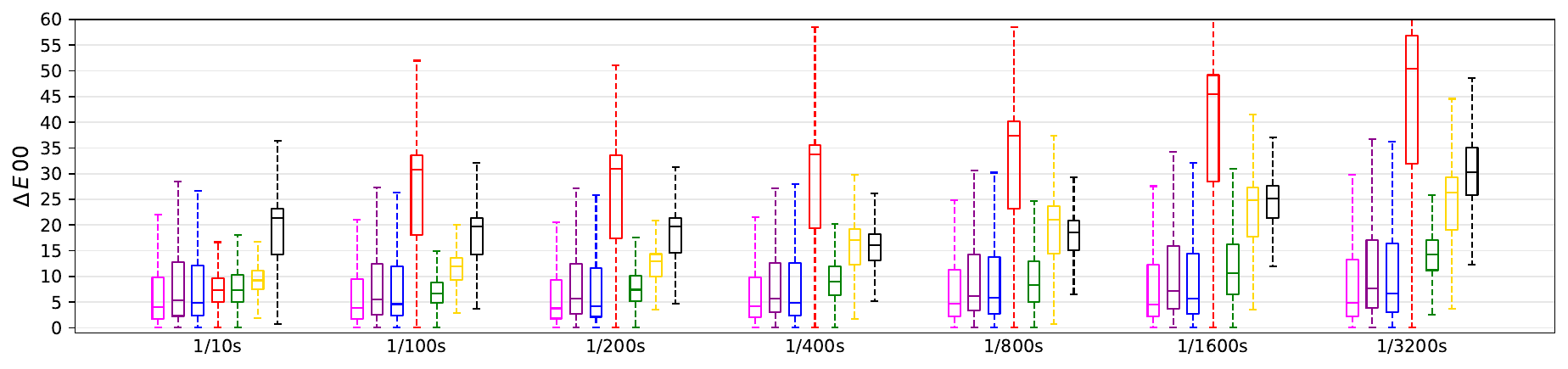} }}\vspace{-5.4mm}

    \subfloat{{
        \hspace{-3mm}\includegraphics[width=1.015\textwidth]{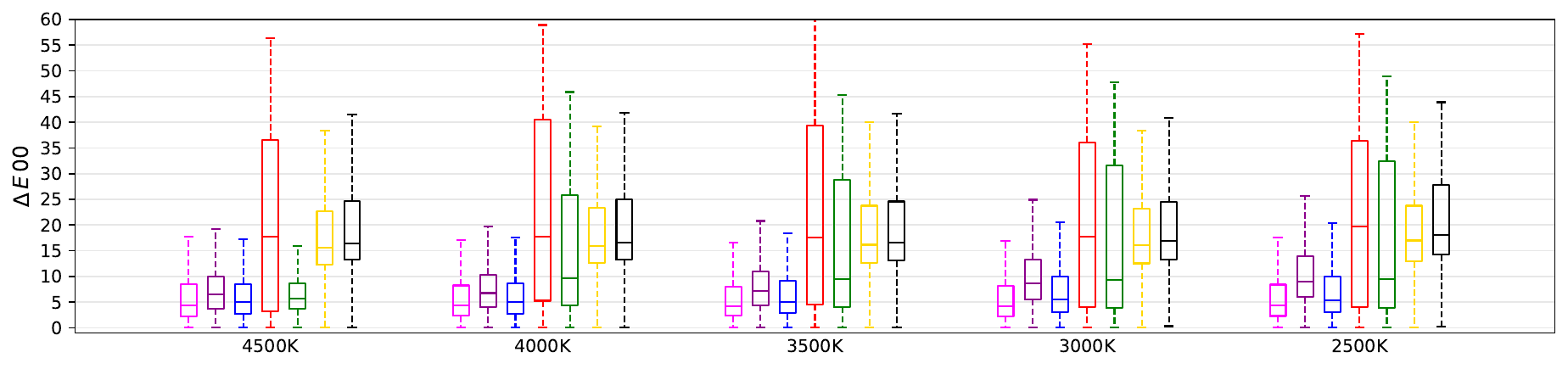} }}%
    \caption{Comparing Redistributor on the Color Mapping Dataset to Pitie~\cite{Pitie}, Hacohen~\cite{Hacohen}, Faridul~\cite{Faridul}, Pouli~\cite{Pouli}, and Reinhard~\cite{Reinhard_for_boxplot_comp}.
    The rows correspond to image sets\,1--3. The x-axis in the first two rows represents the shutter speed, in the last row, the color temperature. 
    Each box shows the median CIE $\Delta E00$, and the $25^{th}$ and $75^{th}$ percentiles. The whiskers extend from the box by $1.5\times IQR$.} %
    \label{fig:colormapping-perf}%
\end{figure*}

\subsection{Connection to Transport Based Signal Processing}
\label{subsec:transport}
\noindent
Thus far we have been interested in the~output we can obtain by applying the~Redistributor transformation to some input of interest. To round out this section we will briefly mention a~rather different usage of the~basic idea behind the~Redistributor, which has been put forward in\,\cite{CDT}. We give a~somewhat simplified and slightly paraphrased description. Let $f\colon\R\to\R$ be a~continuous function which satisfies $\|f\|_1:=\int_{\R}f(x)\mathrm{d}x=1$, i.e.\@ such that we can interpret it as a~density function, and let $F_T=\int_{-\infty}^x f(x)\mathrm{d}x$ denote the~corresponding CDF.
Moreover, we fix some reference probability measure with continuous density function $f_S$ and CDF $F_S$. 
We can now consider the~so-called Cumulative Distribution Transform $\hat{f}\colon\R\to\R$ of $f$, which can be written~as
\begin{align*}
    \hat{f}(x)=(R(x)-x)\sqrt{f_s(x)},
\end{align*}
where $R=F_T^{-1}\circ F_S$ is simply the~Redistributor transformation. In this setting, one considers data $(f_i)_{i\in\N}$ where each $f_i$ can be represented suitably as a~function. In\,\cite{CDT}, a~number of advantages of applying the~CDT to each of the~data points and working with the~set $(\hat{f}_i)_{i\in\N}$ are shown theoretically and demonstrated empirically. This has been developed into a~number of interesting methods for signals processing, see~\cite{PyTransKit}.

\begin{figure}[tp]
    \captionsetup[subfigure]{labelformat=empty}
    \centering
    \vspace{-3mm}
\subfloat{{\includegraphics[width=0.10\textwidth,valign=c]{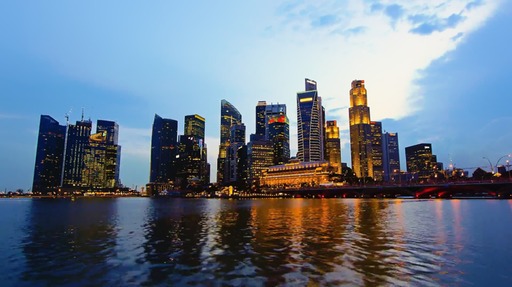}}}
\hfill\subfloat{{\includegraphics[width=0.10\textwidth,valign=c]{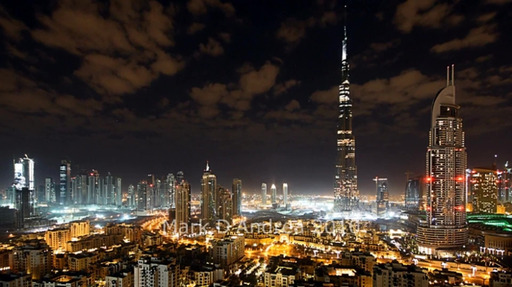}}}
\hfill\subfloat{{\includegraphics[width=0.10\textwidth,valign=c]{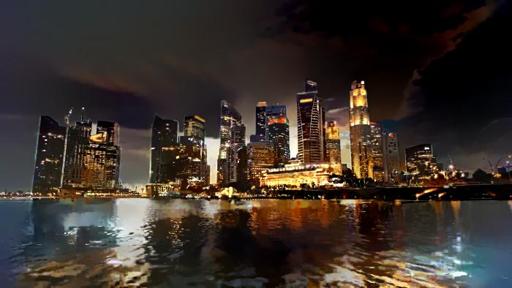}}}
\hfill\subfloat{{\includegraphics[width=0.10\textwidth,valign=c]{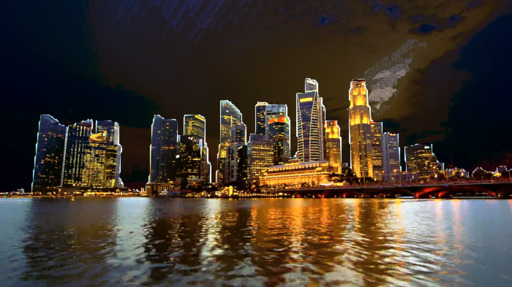}}}\vspace{-2.5mm}

\subfloat{{\includegraphics[width=0.10\textwidth,valign=c]{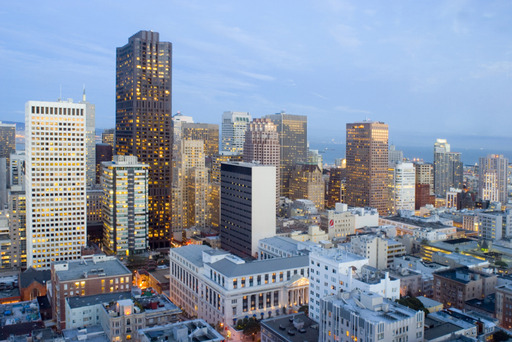}}}
\hfill\subfloat{{\includegraphics[width=0.10\textwidth,valign=c]{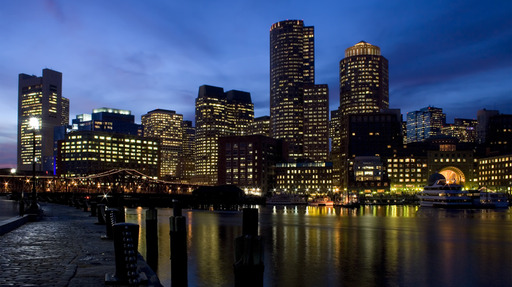}}}
\hfill\subfloat{{\includegraphics[width=0.10\textwidth,valign=c]{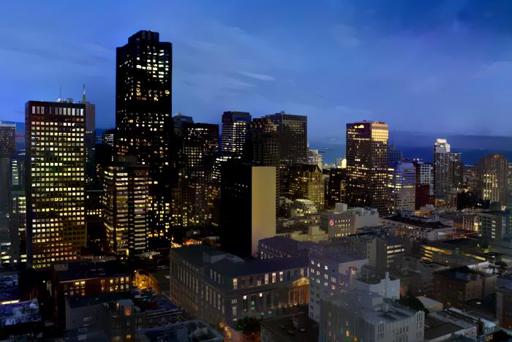}}}
\hfill\subfloat{{\includegraphics[width=0.10\textwidth,valign=c]{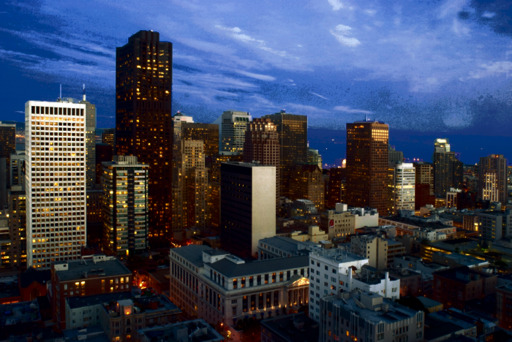}}}\vspace{-2.5mm}

\subfloat{{\includegraphics[width=0.10\textwidth,valign=c]{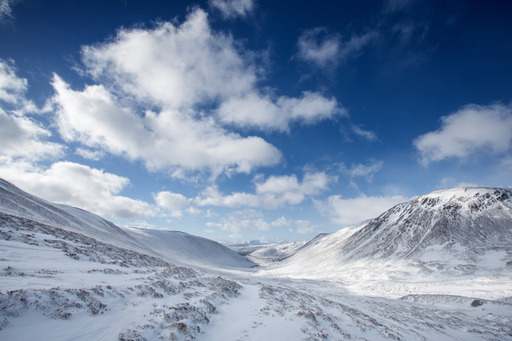}}}
\hfill\subfloat{{\includegraphics[width=0.10\textwidth,valign=c]{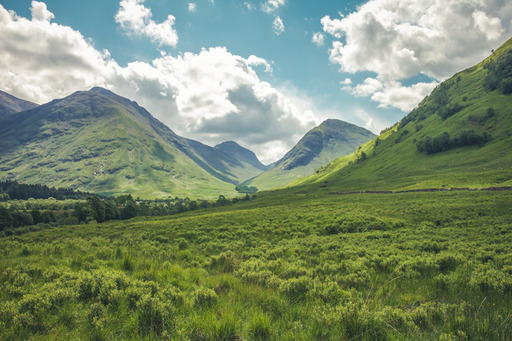}}}
\hfill\subfloat{{\includegraphics[width=0.10\textwidth,valign=c]{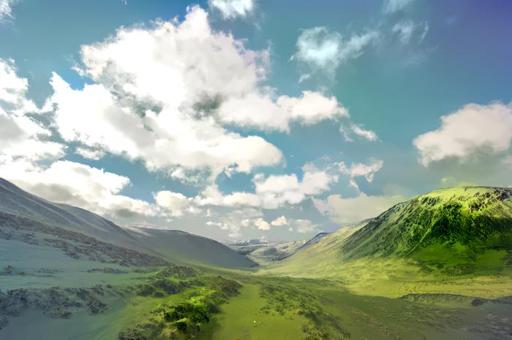}}}
\hfill\subfloat{{\includegraphics[width=0.10\textwidth,valign=c]{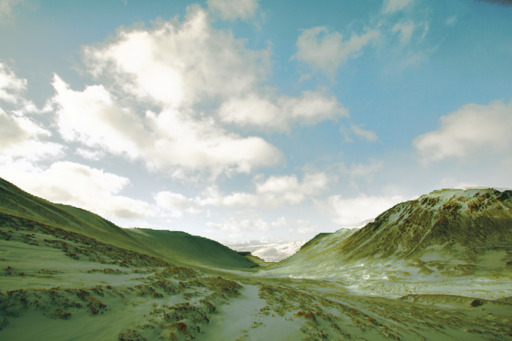}}}\vspace{-2.5mm}

\subfloat{{\includegraphics[width=0.10\textwidth,valign=c]{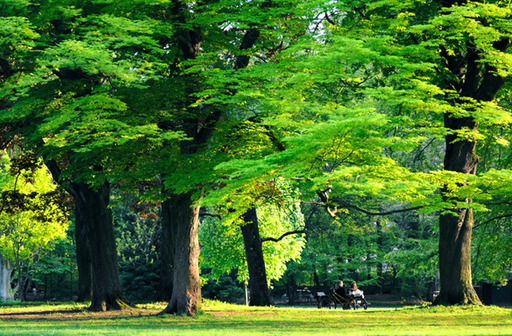}}}
\hfill\subfloat{{\includegraphics[width=0.10\textwidth,valign=c]{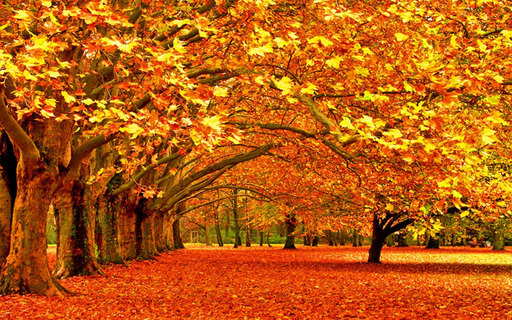}}}
\hfill\subfloat{{\includegraphics[width=0.10\textwidth,valign=c]{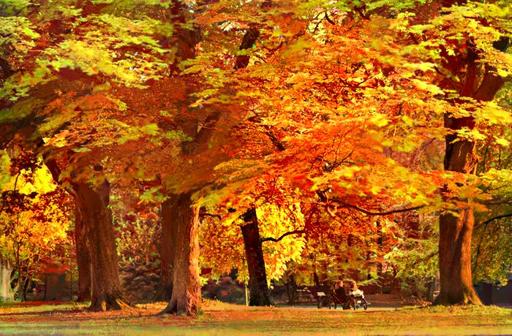}}}
\hfill\subfloat{{\includegraphics[width=0.10\textwidth,valign=c]{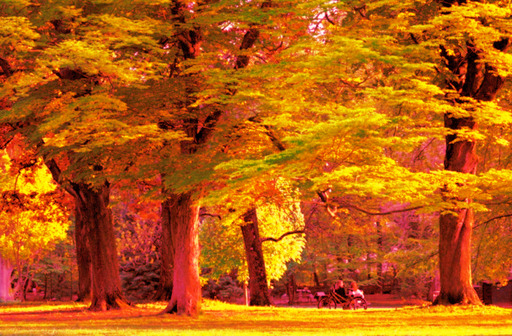}}}\vspace{-2.5mm}

\subfloat{{\includegraphics[width=0.10\textwidth,valign=c]{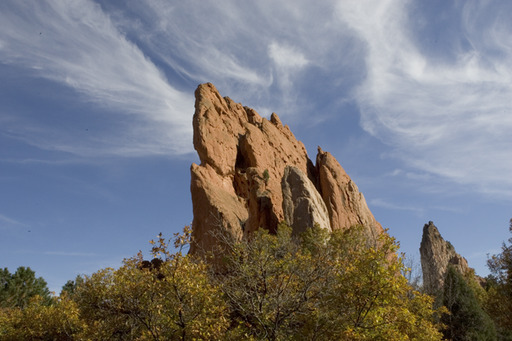}}}
\hfill\subfloat{{\includegraphics[width=0.10\textwidth,valign=c]{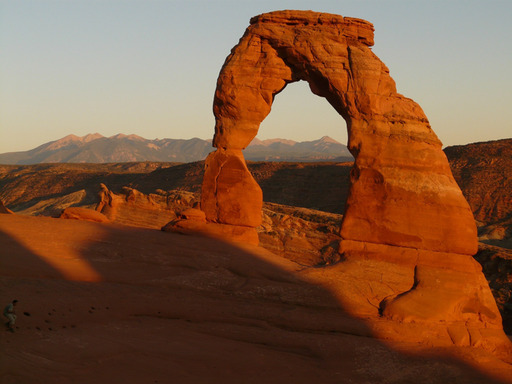}}}
\hfill\subfloat{{\includegraphics[width=0.10\textwidth,valign=c]{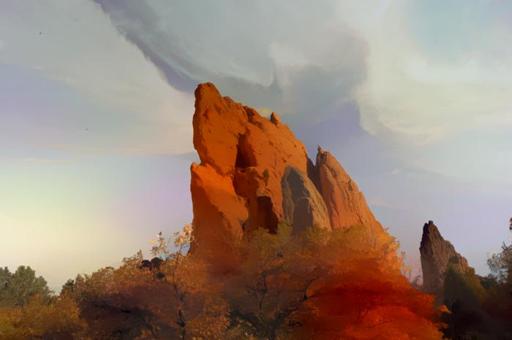}}}
\hfill\subfloat{{\includegraphics[width=0.10\textwidth,valign=c]{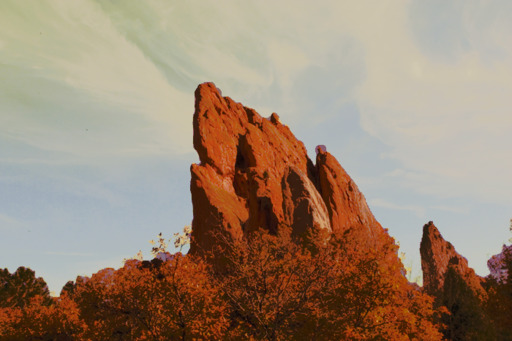}}}\vspace{-2.5mm}

\subfloat{{\includegraphics[width=0.10\textwidth,valign=c]{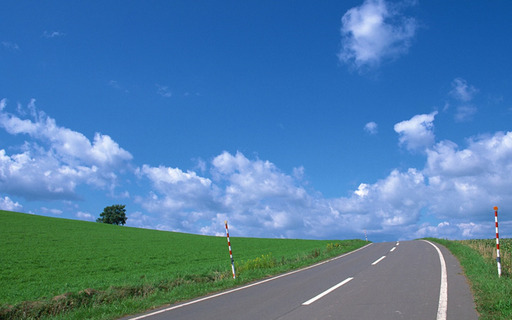}}}
\hfill\subfloat{{\includegraphics[width=0.10\textwidth,valign=c]{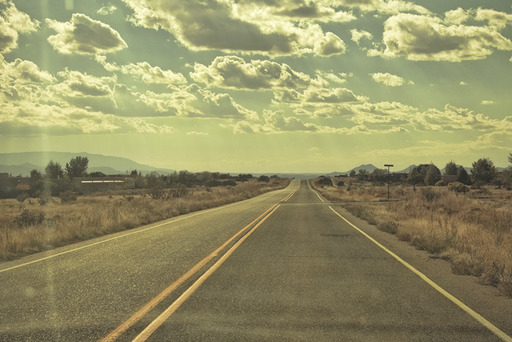}}}
\hfill\subfloat{{\includegraphics[width=0.10\textwidth,valign=c]{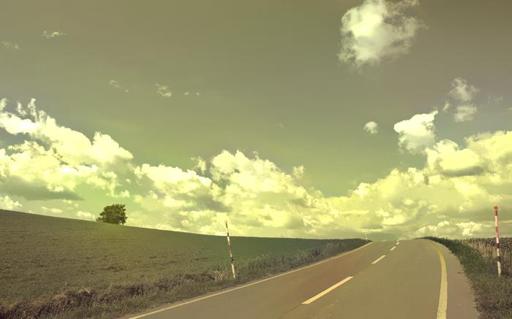}}}
\hfill\subfloat{{\includegraphics[width=0.10\textwidth,valign=c]{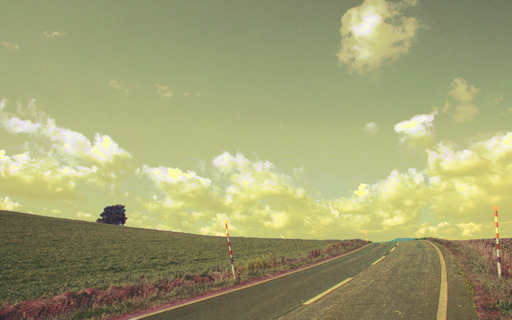}}}\vspace{-2.5mm}

\subfloat{{\includegraphics[width=0.10\textwidth,valign=c]{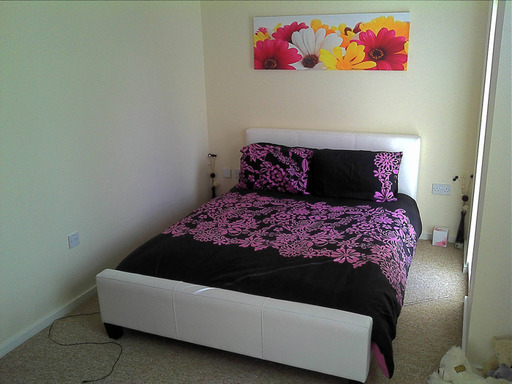}}}
\hfill\subfloat{{\includegraphics[width=0.10\textwidth,valign=c]{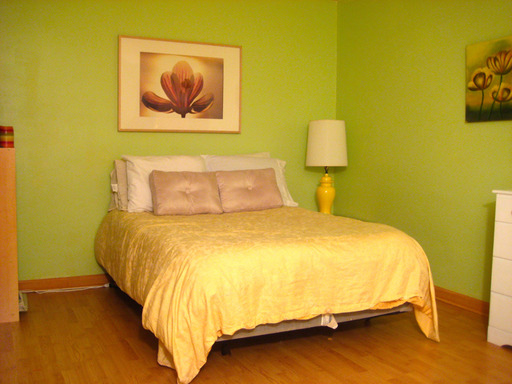}}}
\hfill\subfloat{{\includegraphics[width=0.10\textwidth,valign=c]{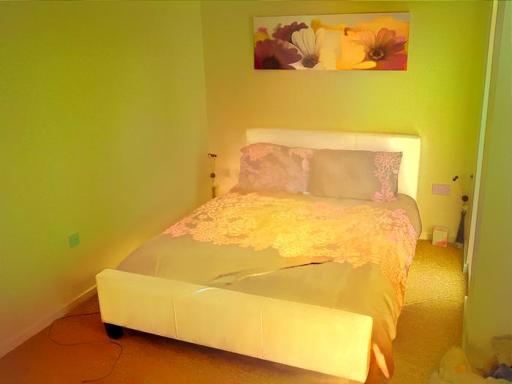}}}
\hfill\subfloat{{\includegraphics[width=0.10\textwidth,valign=c]{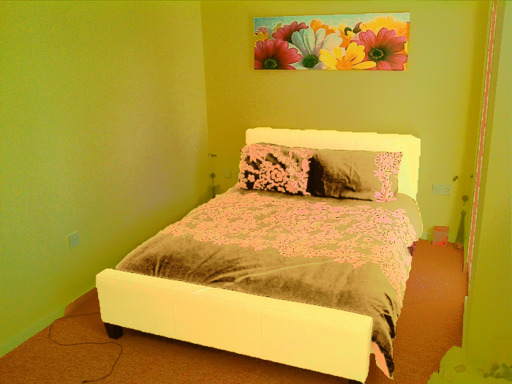}}}\vspace{-2.5mm}

\subfloat{{\includegraphics[width=0.10\textwidth,valign=c]{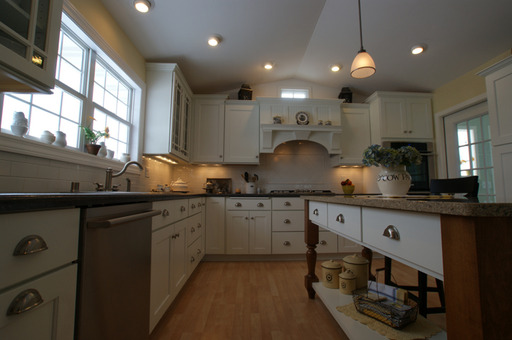}}}
\hfill\subfloat{{\includegraphics[width=0.10\textwidth,valign=c]{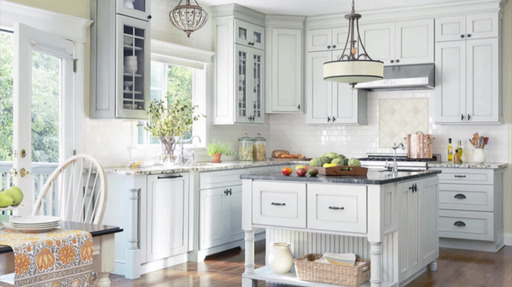}}}
\hfill\subfloat{{\includegraphics[width=0.10\textwidth,valign=c]{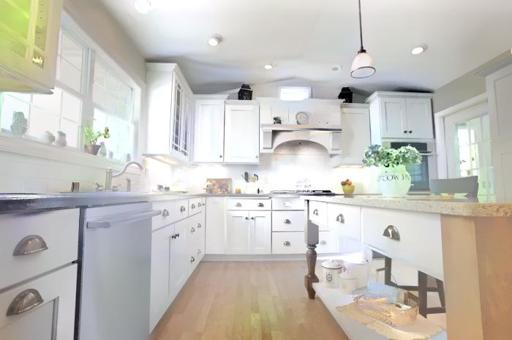}}}
\hfill\subfloat{{\includegraphics[width=0.10\textwidth,valign=c]{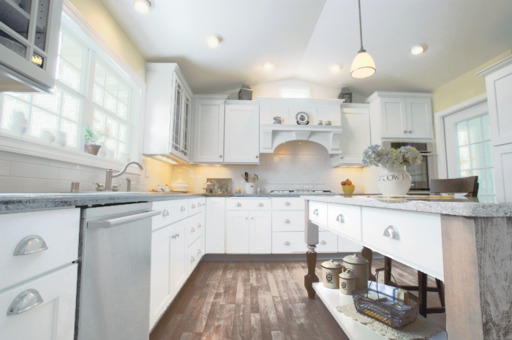}}}\vspace{-2.5mm}

\subfloat{{\includegraphics[width=0.10\textwidth,valign=c]{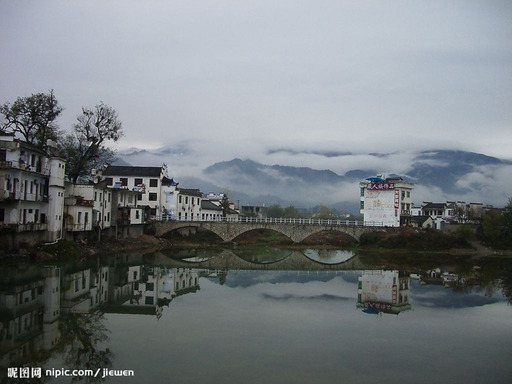}}}
\hfill\subfloat{{\includegraphics[width=0.10\textwidth,valign=c]{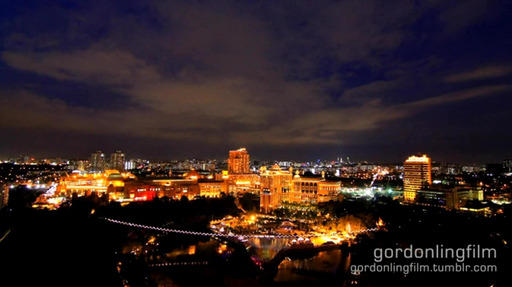}}}
\hfill\subfloat{{\includegraphics[width=0.10\textwidth,valign=c]{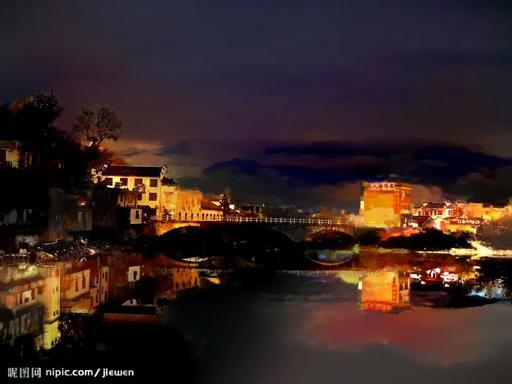}}}
\hfill\subfloat{{\includegraphics[width=0.10\textwidth,valign=c]{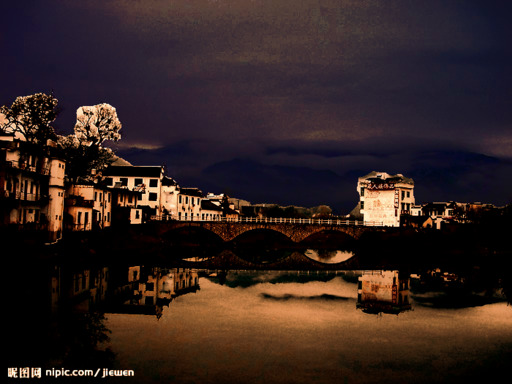}}}\vspace{-2.5mm}

\subfloat{{\includegraphics[width=0.10\textwidth,valign=c]{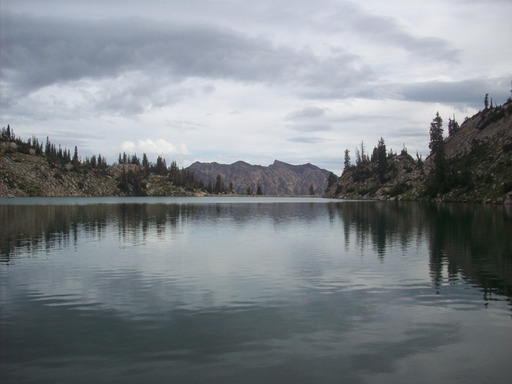}}}
\hfill\subfloat{{\includegraphics[width=0.10\textwidth,valign=c]{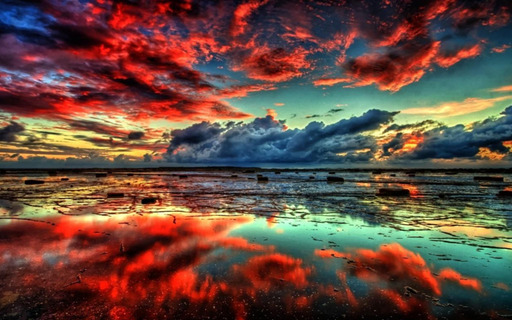}}}
\hfill\subfloat{{\includegraphics[width=0.10\textwidth,valign=c]{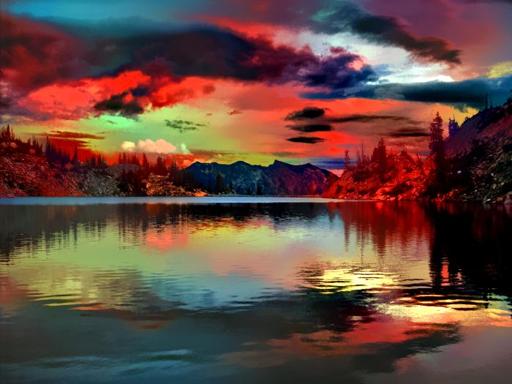}}}
\hfill\subfloat{{\includegraphics[width=0.10\textwidth,valign=c]{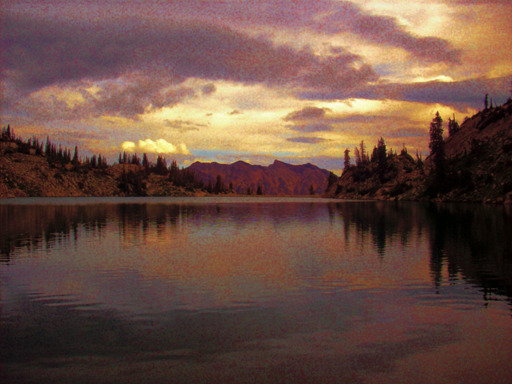}}}\vspace{-2.5mm}

\subfloat{{\includegraphics[width=0.10\textwidth,valign=c]{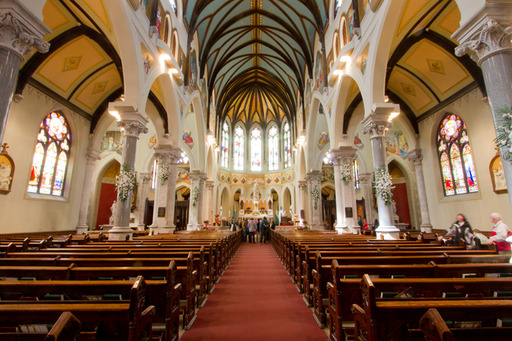}}}
\hfill\subfloat{{\includegraphics[width=0.10\textwidth,valign=c]{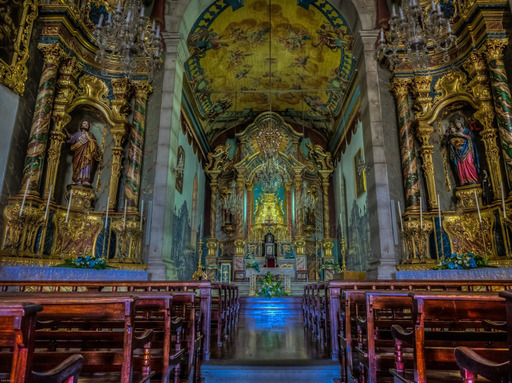}}}
\hfill\subfloat{{\includegraphics[width=0.10\textwidth,valign=c]{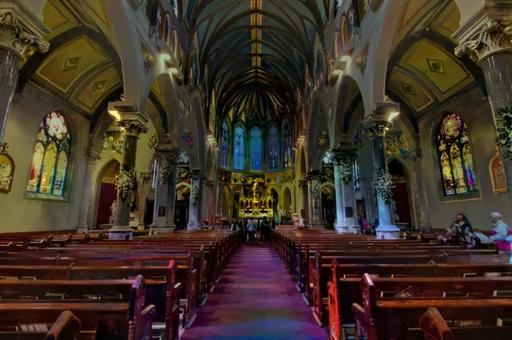}}}
\hfill\subfloat{{\includegraphics[width=0.10\textwidth,valign=c]{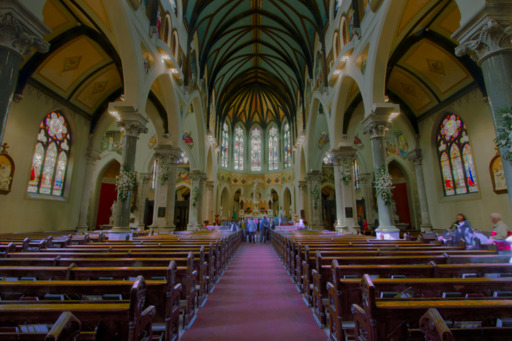}}}\vspace{-2.5mm}

\subfloat{{\includegraphics[width=0.10\textwidth,valign=c]{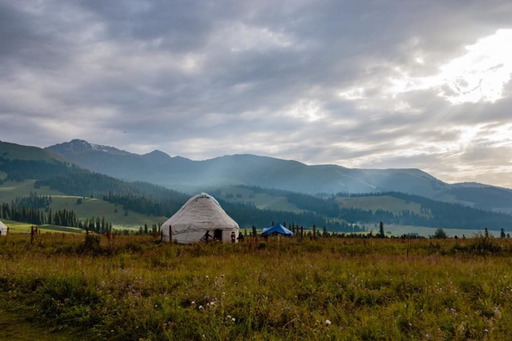}}}
\hfill\subfloat{{\includegraphics[width=0.10\textwidth,valign=c]{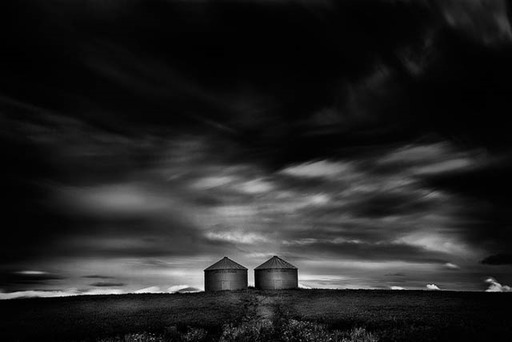}}}
\hfill\subfloat{{\includegraphics[width=0.10\textwidth,valign=c]{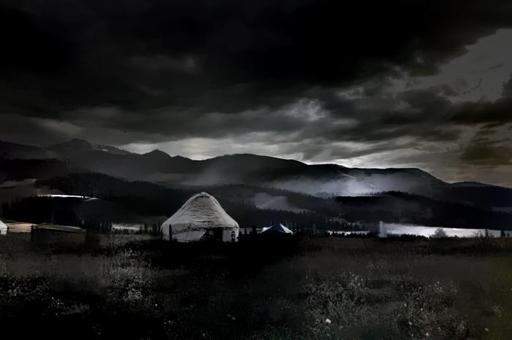}}}
\hfill\subfloat{{\includegraphics[width=0.10\textwidth,valign=c]{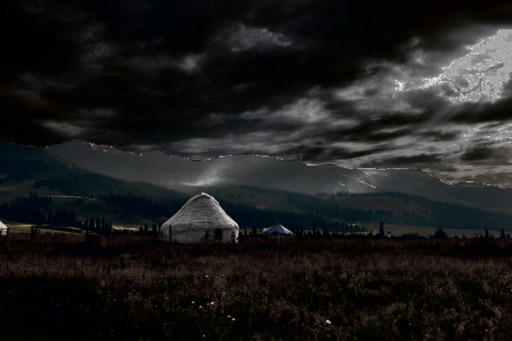}}}\vspace{-2.5mm}

\subfloat{{\includegraphics[width=0.10\textwidth,valign=c]{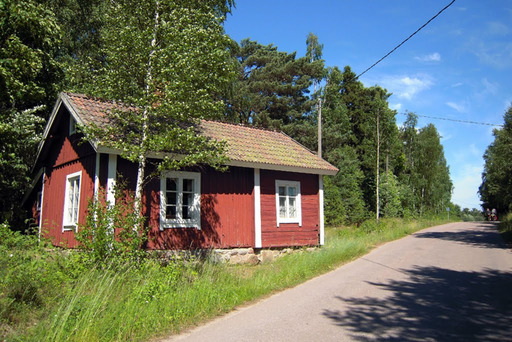}}}
\hfill\subfloat{{\includegraphics[width=0.10\textwidth,valign=c]{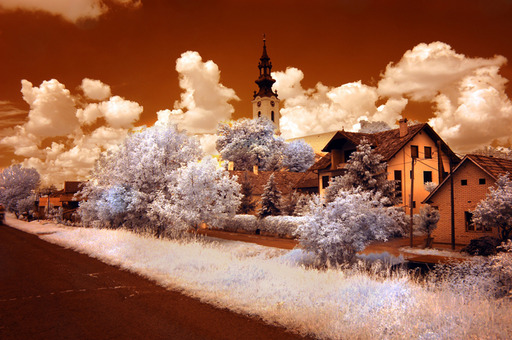}}}
\hfill\subfloat{{\includegraphics[width=0.10\textwidth,valign=c]{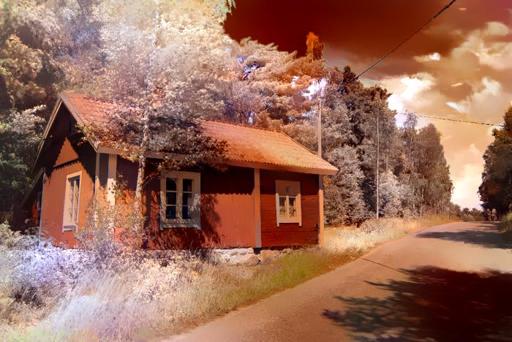}}}
\hfill\subfloat{{\includegraphics[width=0.10\textwidth,valign=c]{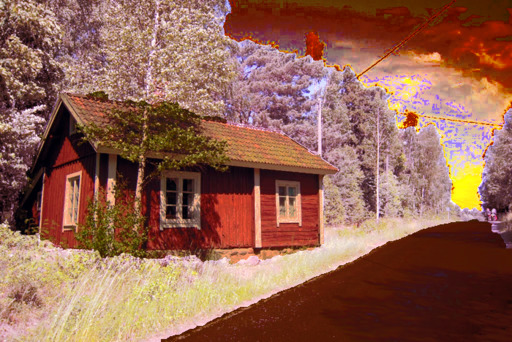}}}\vspace{-2.5mm}

\subfloat{{\includegraphics[width=0.10\textwidth,valign=c]{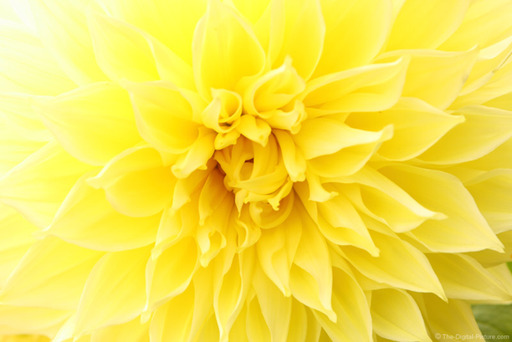}}}
\hfill\subfloat{{\includegraphics[width=0.10\textwidth,valign=c]{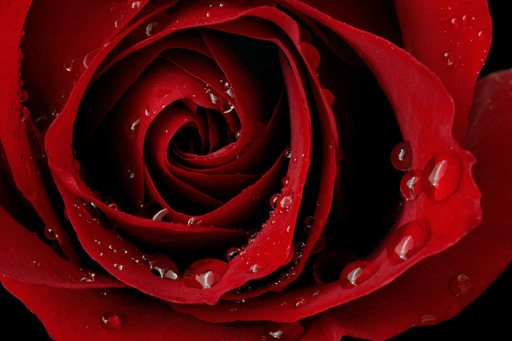}}}
\hfill\subfloat{{\includegraphics[width=0.10\textwidth,valign=c]{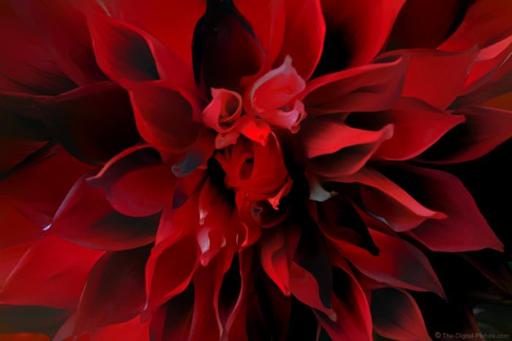}}}
\hfill\subfloat{{\includegraphics[width=0.10\textwidth,valign=c]{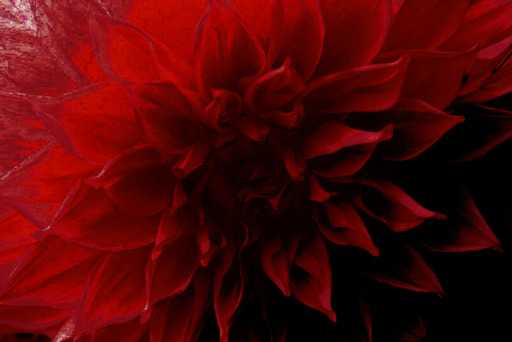}}}\vspace{-2.5mm}

\subfloat{{\includegraphics[width=0.10\textwidth,valign=c]{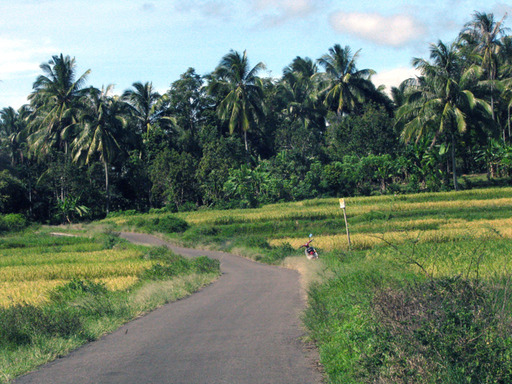}}}
\hfill\subfloat{{\includegraphics[width=0.10\textwidth,valign=c]{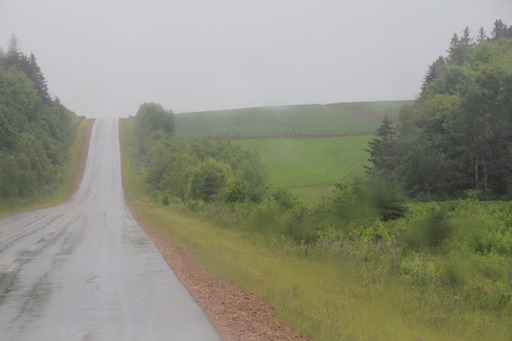}}}
\hfill\subfloat{{\includegraphics[width=0.10\textwidth,valign=c]{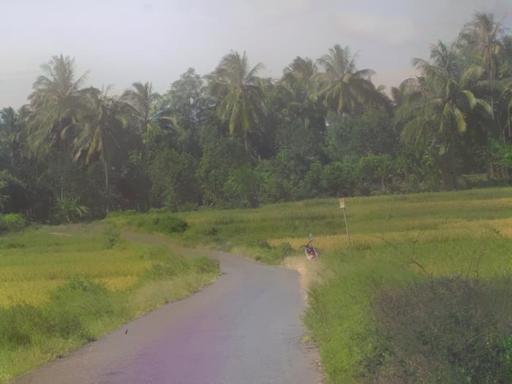}}}
\hfill\subfloat{{\includegraphics[width=0.10\textwidth,valign=c]{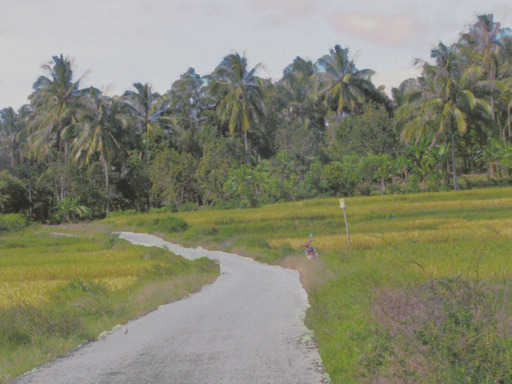}}}\vspace{-2.5mm}

\subfloat[\centering Input]{{\includegraphics[width=0.10\textwidth,valign=c]{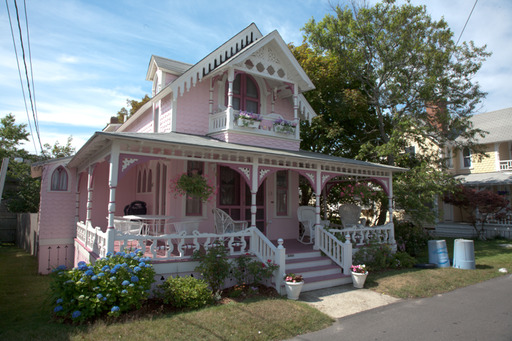}}}
\hfill\subfloat[\centering Style]{{\includegraphics[width=0.10\textwidth,valign=c]{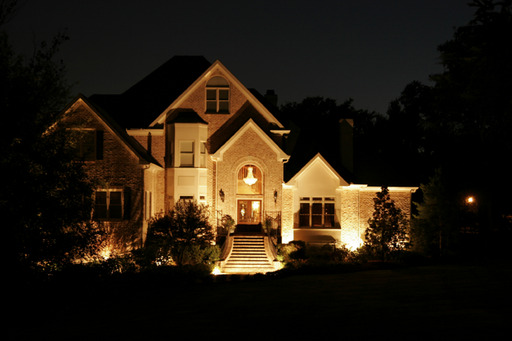}}}
\hfill\subfloat[\centering Luan]{{\includegraphics[width=0.10\textwidth,valign=c]{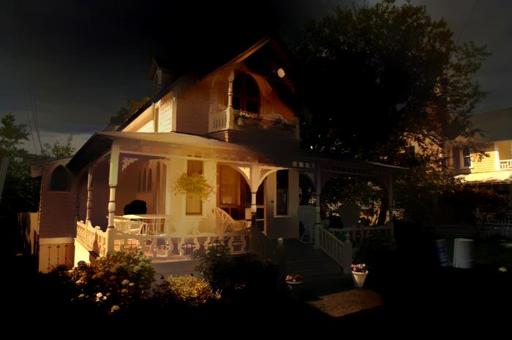}}}
\hfill\subfloat[\centering Redistributor]{{\includegraphics[width=0.10\textwidth,valign=c]{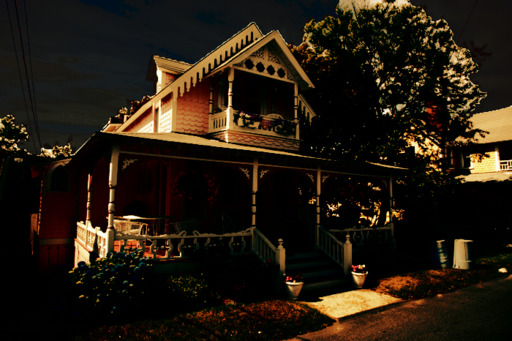}}} \vspace{0.5mm}

    \caption{Comparison of Redistributor with Deep Photo Style Transfer~\cite{DPST}. While the NST method of Luan can capture drastic style changes, Redistributor keeps the fidelity of the input, creating images with new color scheme but realistic content by retaining all original details. 
    }%
    \label{fig:nst}%
\end{figure}

\section{Comparisons}
\label{sec:comparisons}
\noindent
We present two comparisons of Redistributor to existing methods.
First, we consider a~color correction task where we compare it to model-based methods using a~dataset and evaluation criteria from \cite{ReinhardSurvey}. Second, we compare it to results obtained by a~deep-learning approach to photorealistic style transfer \cite{DPST}, where they provide and make use of masks for the input and reference images, in order to match semantically corresponding areas of the respective images (e.g.\@ sky, buildings, water). Due to its flexible implementation, Redistributor can conveniently take advantage of these masks as well.  

\subsection{Color Correction Comparison}
\label{subsec:comparison-correction}

\noindent
For this comparison we use a~dataset\footnote{Available at \url{https://sites.google.com/site/reviewcolormapping/}.} for ground-truth-based evaluation, first presented in \cite{ReinhardSurvey}. The idea is to capture an image of a~scene with the desired illumination in order to create a~ground truth, as well as a~series of test images of the scene from the same viewpoint but with changes in imaging modalities, like shutter speed, illuminant, or white balance. The task is to transform these test images into something close to the ground truth by using a~reference image of the same scene with the correct illumination but from a~different viewpoint. The dataset is organized into three sets of images corresponding to different changes in imaging modalities. In the first one the shutter speed is varied from $1/10s$ to $1/3200s$, in the second one both shutter speed and illuminant is varied, in the third one the color temperature of the camera is varied from $4500K$ to $2500K$. In \Cref{fig:colormapping} each row shows an example from one the three sets.
The result of the transformation is then compared to the ground truth by taking the color difference in CIE $\Delta E00$ unit~\cite{DeltaEpaper} for each pixel, which is then summarized via boxplots in \Cref{fig:colormapping-perf}. When employing simple masks Redistributor outperforms the other methods across the board and still produces competitive results without masking.

\subsection{Photorealistic Style Transfer Comparison}
\label{subsec:comparison-nst}

\noindent In this section we present a~comparison to a~deep learning approach~\cite{DPST}, which aims for photorealistic style transfer, by attempting to suppress the distortion of the image content which is often produced by NST methods. By contrast, our method, as well as the other model-based methods considered in the previous subsection, guarantee that the structure of the image is kept, but may struggle to convincingly match the style of the reference image. We demonstrate that Redistributor is capable of yielding visually satisfying results on the images considered in~\cite{DPST}. 
In particular it is notably better at maintaining fine structures of the images, e.g.\@ the tree in the image in row $5$ and flower petals in the image in row $16$. As demonstrated in rows $8$ and $11$ our method avoids introducing unnatural green resp.\@ blue spot illuminations. Redistributor does however fall short of fully matching the reference style in row $10$, where there is both a~large difference in colors and content, i.e.\@ image to be transformed contains mountains and trees, but the reference image does not.

\section{Conclusion}
\label{sec:conclusion}
\noindent
We introduced Redistributor, a~method and Python package for estimating and transforming empirical data distributions, and provided a~range of applications in image and signal processing as well as machine learning.

Subsequently, we demonstrated its utility in various applications. In the context of image processing, it offers an powerful approach to color correction and photorealistic style transfer. It effectively changes an image's color scheme using a~reference image, yielding visually pleasing results while preserving the original image's content. Unlike neural style transfer methods, Redistributor guarantees content fidelity and offers greater flexibility, explainability, and computational efficiency.

We also showed an application in image data augmentation, enabling the creation of variations of images for the purpose of expanding datasets for neural network training. Moreover, due to its implementation, Redistributor can be integrated as a~preprocessing step into machine learning pipelines.

Redistributor outperformed other model-based methods in color correction tasks, demonstrating its effectiveness in a~ground-truth-based evaluation. Additionally, we showcased its ability to achieve photorealistic style transfer with superior content preservation compared to deep learning methods.

In conclusion, Redistributor is a~versatile and efficient tool for transforming data distributions, offering a~wide range of applications in image processing, signal processing, machine learning, and beyond. Its strong theoretical foundation, along with its proven performance in various use cases, makes it a~valuable addition to the toolbox of researchers and practitioners in these fields.

\section*{Acknowledgement}
\noindent Photos in \Cref{fig:badlight} by \href{https://stanci.com/}{S.\@ Markovičová}. Images in \Cref{fig:teaser} and \Cref{fig:matchcolor} by \href{https://unsplash.com/@oulashin}{S.\@ Oulashin}), \href{https://unsplash.com/@cristofer}{C.\@ Maximilian}, \href{https://unsplash.com/@jplenio}{J.\@ Plenio}, \href{https://unsplash.com/@vincentiu}{V.\@ Solomon}, \href{https://unsplash.com/@sanderweeteling}{S.\@ Weeteling}, \href{https://unsplash.com/@cristina_gottardi}{C.\@ Gottardi}, \href{https://unsplash.com/@jamie452}{J.\@ Street}, and \href{https://unsplash.com/@thevantagepoint718}{L.\@ Pieters}. Photo in~\Cref{fig:mosaic} by \href{https://unsplash.com/@sour_moha}{S.\@ Moha}. Images in \Cref{fig:augmentation} by \href{https://unsplash.com/@ehizelesamuel1}{E.\@S.\@ Agbonyeme}, \href{https://unsplash.com/@lancereis}{L.\@ Reis}, \href{https://unsplash.com/@svsokolov}{S.\@ Sokolov}, \href{https://unsplash.com/@anamnesis33}{Andrey K}, \href{https://unsplash.com/@kate_gliz}{K.\@ Hliznitsova}, and \href{https://unsplash.com/@exploringzhongguo}{Taha}. Free image editing software \href{https://photopea.com}{Photopea.com} by \href{https://twitter.com/ivankutskir}{I.\@ Kutskir} was used to preprocess and finalize multiple of the~presented figures. We thank J.\@ Schl{\"u}ter for the initial inspiration, and J.\@ Berner for many valuable discussions.

\section*{Author Contributions}
\noindent
\textbf{HP} - conceptualization, investigation, methodology, implementation, experimentation, visualization, writing. \textbf{DE} - conceptualization, investigation, methodology, formal analysis, experimentation, writing. \textbf{MD} - formal analysis, writing, supervision. \textbf{KDJ} - investigation, experimentation, writing.

\section*{Funding}
\noindent
This work has been funded in part by Research Network Data Science\,@\,Uni Vienna and in part by ETH~Foundations of Data Science (ETH-FDS).

\section*{Data Availability}
\noindent
The datasets utilized in this study are available at \href{https://sites.google.com/site/reviewcolormapping/}{https://sites.google.com/site/reviewcolormapping}, \href{https://github.com/luanfujun/deep-photo-styletransfer}{https://github.com/luanfujun/deep-photo-styletransfer}, and \href{http://vis-www.cs.umass.edu/lfw/}{http://vis-www.cs.umass.edu/lfw/}. 

\section*{Code Availability}
\noindent
Source code for the Redistributor package as well as the code for reproducing the figures used in this study are available at \href{https://github.com/paloha/redistributor}{https://github.com/paloha/redistributor}. 

\section*{Conflict of Interest}
\noindent
The authors have no competing interests to declare that are relevant to the content of this manuscript.


\bibliographystyle{apalike}
\bibliography{references}

\end{document}